\theoremstyle{plain}
\newtheorem{theorem}{Theorem}
\newtheorem{definition}[theorem]{Definition}
\newtheorem{lemma}[theorem]{Lemma}
\newtheorem{proposition}[theorem]{Proposition}
\newtheorem{example}[theorem]{Example}
\newtheorem{remark}[theorem]{Remark}
\newcommand{\innerp}[2]{\langle #1,#2 \rangle}
\newcommand{\hypspace}{\mathcal{H}}
\newcommand\RR{{\mathbb R}}
\newcommand\NN{{\mathbb N}}
\renewcommand\ell{l}
\newcommand\CC{\mathbb{C}}
\newcounter{mycount}
\newcommand{\FL}[1]{\textcolor{blue}{{#1}}}
\newcommand\bb{{\mathbf b}}
\newcommand\bX{{\mathbf X}}
\newcommand\bx{{\mathbf x}}
\newcommand\bY{{\mathbf Y}}
\newcommand\by{{\mathbf y}}
\newcommand\bB{{\mathbf B}}
\newcommand\bW{{\mathbf W}}
\newcommand\br{{\mathbf r}}
\newcommand\R{{\mathbb R}}
\newcommand\E{{\mathbb E}}
\numberwithin{equation}{section}
\numberwithin{theorem}{section}
\numberwithin{figure}{section}
\title{On the coercivity condition in the learning of \\ interacting particle systems}
\date{}
 \author{Zhongyang Li\thanks{Department of Mathematics, 
University of Connecticut, 
Storrs, CT, USA 06269-1009,
zhongyang.li@uconn.edu;}\,  
and Fei Lu\thanks{Department of Mathematics,  Johns Hopkins University, Baltimore, MD, USA, 21218; feilu@math.jhu.edu;}
 }
\begin{document}
\maketitle 
\begin{abstract}
In the inference for systems of interacting particles or agents, a coercivity condition ensures identifiability of the interaction kernels, providing the foundation of learning. We prove the coercivity condition for stochastic systems with an arbitrary number of particles and a class of kernels such that the systems of relative positions are ergodic. When the system of relative positions is stationary, we prove the coercivity condition by showing the strictly positive definiteness of an integral kernel arising in the learning; and for non-stationary case, we show that the coercivity condition holds true when the time is large based on a perturbation argument. 
\end{abstract}
\textbf{Keywords:} identifiability; positive definite kernels; interacting particle systems; ergodicity   

\tableofcontents

\section{Introduction}
Consider the inference of $\phi$ in the stochastic system of interacting particles
\begin{equation}\label{eq:sys1st}
 d{\bX_i^t} =\frac{1}{N} \sum_{1\leq j\leq N, j\neq i} \phi(|\bX_{j}^t - \bX_i^t|) ( \bX_{j}^t - \bX_i^t)dt+  d\bB_i^t, \quad \text{for $i = 1, \ldots, N$}, 
\end{equation}
where $\bX^t:= (\bX_1^t,\dots, \bX_N^t)\in \RR^{dN}$ represents the positions of particles at time $t$,  and $\bB^t = (\bB_1^t,\dots,\bB_N^t)$ is a standard Brownian motion on $\RR^{dN}$ representing the environmental noise. Here $|\cdot|$ denotes the Euclidean norm of vectors and $\phi:\R^+\to \R$ is referred as \emph{interaction kernel}. 
Examples of such systems range from particles system in physics and chemistry to opinion dynamics in social science (see \cite{MT2014,malrieu2003convergence,LZTM19,LMT21jmlr} and the reference therein).  
In applications of such systems, the first task is to learn the interaction kernel $\phi$ from data. Parametric inference of the interaction kernel has been studied in \cite{chen2020maximum} when the kernel $\phi$ is a constant and in \cite{sharrock2021parameter} for McKean-Vlasov (mean-field) SDEs.  Nonparametric inference of general interacting kernel has been studied in \cite{LZTM19,LMT21jmlr,miller2020learning} for deterministic first- and second-order systems, in \cite{LMT20} for stochastic first-order systems, and in \cite{BFHM17,LangLu20} for mean-field equations. A fundamental issue is the identifiability of the interaction kernel because its values are underdetermined from data.  


A \emph{coercivity condition} is found to be sufficient for the identifiability, making the inverse problem of estimating $\phi$ well-defined and well-posed on finite hypothesis spaces \cite{LLMTZ21,LMT20,LZTM19,LMT21jmlr}. Note that the variable of $\phi$ is the pairwise distances $|\br_{ij}^t|$ with 
 \[
\br_{ij}^t: =\bX_i^t-\bX_{j}^t. 
\]

\begin{definition}[Coercivity condition]\label{def:coercivity} 
The system \eqref{eq:sys1st} on $[0,T]$ is said to satisfy a \emph{coercivity condition} on a finite dimensional linear space $\hypspace$ if  there is a constant $c_{\hypspace, T} >0$ such that 
\begin{equation} \label{eq:c_t1}
\widebar I_T(h) :=\frac{1}{T} \int_0^T \E\left[h(|\br_{12}^t|)h(|\br_{13}^t|) \frac{\innerp{\br_{12}^t}{\br_{13}^t}}{|\br_{12}^t||\br_{13}^t|}\right]dt \geq c_{\hypspace, T} \frac{1}{T}\int_0^T \E[h(|\br_{12}^t|)^2]dt 
\end{equation}
for any $h\in \hypspace$ such that $\int_0^T \E[h(|\br_{12}^t|)^2]dt <\infty$. When $\hypspace$ is infinite dimensional, we say that the coercivity condition holds on $\hypspace$ if it holds on any finite dimensional linear subspace of $\hypspace$.   
\end{definition}


This coercivity condition is conjectured with intuition from Gaussian initial distributions and is numerically verified in \cite{LMT20,LZTM19,LMT21jmlr}. It is partially proved in \cite{LLMTZ21} when $N=3$ or when the system is linear and $\{\br_{ji}^t\}$ are stationary. Since the coercivity condition plays a crucial role in the inverse problem with a wide range of applications, it is of paramount importance to further understand the coercivity condition for systems that are nonlinear and with $N> 3$.

In this study, we prove the coercivity condition for systems with an arbitrary number of particles and any interaction kernel $\phi(r) =\frac{\Phi_0'(r)}{r}$ with $\Phi_0$ in the form:   
\begin{equation}\label{eq:Phi_0def}
 \Phi_0(r)=(a+r^\theta)^{\gamma} \text{ with } a\geq 0, \, \theta\in (1,2],\, \gamma\in (0, 1] \text{ such that } \gamma\theta>1, 
\end{equation}
 and when $T$ is large. 
Our result completes those in \cite{LLMTZ21}, where the coercivity condition is proved either for $(\theta, \gamma)=(2,1)$ (i.e., linear systems) or for systems with $N=3$ particles, and only when $\{\br_{ji}^t\}$ are stationary.  
We achieve it in two steps: (i) we first show that the system of relative positions $(\br_{12}^t, \br_{13}^t),\ldots, \br_{1N}^t)$ converges to an explicit stationary density at a polynomial rate in time, based on a convergence result for systems with non-uniformly convex potentials in \cite{TV00} (see Section \ref{sec:existStationary}--\ref{sec:convSD});  (ii) we then prove the coercivity condition for the system with an arbitrary $N$ at the stationary density; 
and for the non-stationary case when $T$ is large based on a perturbation argument.   

The coercivity condition is intrinsically connected to strictly positive definiteness of integral kernels. More specifically, we write $\widebar I_T(h)$ in \eqref{eq:c_t1} as 
\begin{equation}\label{eq:I_Integral_kernel}
\widebar I_T(h) = \int_{\R^d}\int_{\R^d} h(|u|)h(|v|) K_T(u,v)dudv, \quad \text{ where }
K_T(u,v) = \frac{\innerp{u}{v}}{|u||v|} \widebar{p}_T(u,v)
\end{equation}
with $\widebar{p}_T(u,v) =\frac{1}{T} \int_0^T p_t(u,v) dt$, where $p_t(u,v)$ is the probability density of $(\br_{12}^t,\br_{13}^t)$.  
Then, equation  \eqref{eq:c_t1} holds on any finite dimensional space $\hypspace$ as long as $K_T$ is a strictly positive definite. We will prove that $K_T(u,v)$ and $\widebar{p}_T(u,v)$, among a class integral kernels related to the interacting particle system, 
 are strictly positive definite based on M\"untz type theorems. 

Positive definite kernels play an increasingly prominent role in many applications, in particular in statistical learning with reproducing kernel Hilbert space (RKHS) \cite{cucker2002mathematical,yang2017randomized}. Our results provides a new class of strictly positive definite integral kernels, which have a potential use in sampling and kernel-based learning. Also, our technique 
can be used to establish strict positive definiteness of general integral kernels in learning problems. 

The organization of the paper is as follows: in Section \ref{sec:existStationary}, we prove that the system of relative positions $(\br_{12}^t,\br_{13}^t,\ldots,\br_{1N}^t)\in \R^{d(N-1)}$ has a stationary density, and we prove in Section \ref{sec:convSD} that the system converges to the stationary density at a polynomial rate in time.  In Section \ref{sec:PD_inv} we show that the coercivity condition holds true at the stationary density by showing that the related integral kernels are strictly positive definite. In Section \ref{sec:coercivityMean} we show that the coercivity condition holds true for non-equilibrium systems, under additional restriction on the hypothesis space.  
We list in Appendix \ref{sec:append} the preliminary theory on  positive definite kernels. 

\section{Existence of a stationary density}\label{sec:existStationary}
We show that the system of relative positions $(\br_{12}^t,\br_{13}^t,\dots,\br_{1N}^t)$ with  $\br_{ij} = \bX^t_i-\bX^t_j$ has a stationary density. Note that the original system does not have a stationary distribution on $\RR^{dN}$, because 
the center position of the particles, $\bX_c(t) =:\frac{1}{N}\sum_{i=1}^N \bX_i^t$, satisfies 
$ d\bX_c = \frac{1}{N}\sum_{i=1}^N \bB_i^t$, 
 which follows from $\frac{1}{N}\sum_{i=1}^N \nabla_{\bx_i} J_{\Phi}(\bx) =0$ due to the symmetry of the interaction between the particles.  
  
We rewrite the system \eqref{eq:sys1st} in the form of a gradient system
\begin{equation}\label{gs}
d\bX^t=-\nabla J_{\Phi}(\bX^t)dt +d\mathbf{B}^t,
\end{equation}
 where the energy potential $J_{\Phi}:\RR^{dN}\rightarrow\RR$ depends on the pairwise distances: 
\begin{eqnarray*}
J_{\Phi}(\bX)=\frac{1}{2N}\sum_{i,j=1}^{N}\Phi(|\bX_i-\bX_j|),\ \bX\in \RR^{dN}, 
\end{eqnarray*}
with $\Phi:[0,\infty)\rightarrow \RR$ satisfying  $\phi(r) = \frac{\Phi'(r)}{r}$ and without lost of generality, we set $\Phi(0)=0$.

Our study for the system of relative positions 
is similar to the macro–micro decomposition of the system in \cite{malrieu2003convergence}.  But the relative positions are directly relevant to inference: the joint distribution of $(\br^t_{12},\br^t_{13})$ is what we need for the coercivity condition. In the following, we will first show that the system of relative positions is a gradient system with an additive noise, thus, its stationary density is explicit, so is the marginal distribution of $(\br^t_{12},\br^t_{13})$.

\begin{theorem}\label{thm:stationaryDensity}
For $1\leq i<j\leq N$ and the process $\bX^t$ satisfying the system \eqref{gs}, denote
\begin{eqnarray*}
\mathbf{r}_{ij}^t=\bX_i^t-\bX_j^t,\ \mathrm{and}\ r_{ij}^t=|\mathbf{r}_{ij}^t|.
\end{eqnarray*}
Suppose that the function $H:\RR^{d(N-1)}\to \RR$ 
\begin{eqnarray}
H(\mathbf{r})=J_{\Phi}(X)=\frac{1}{N}\sum_{2\leq j\leq N}\Phi(| \mathbf{r}_{1j}| )+\frac{1}{N}\sum_{2\leq i< j\leq N}\Phi(| \mathbf{r}_{1i}-\mathbf{r}_{1j} |)\label{hhr}
\end{eqnarray}
satisfies $\int_{\RR^{d(N-1)}}e^{H(\br)}d\br <\infty$. 
Then, the process $\br^t=(\br_{12}^t,\br_{13}^t,\dots,\br_{1N}^t)^T$ has a stationary density, and the process
\begin{eqnarray} \label{eq:r2Y}
\mathbf{Y}^t=S^{-1}\mathbf{r}^t
\end{eqnarray} 
satisfies a gradient system: 
\begin{eqnarray}\label{ggs}
d\mathbf{Y}^t=-\nabla_{Y^t}H(S\mathbf{Y}^t)+d\mathbf{W}^t, 
\end{eqnarray}
where $\mathbf{W}^t$ is a standard Brownian motion on $\RR^{(N-1)d}$, the matrix $S\in \RR^{(N-1)d\times (N-1)d}$ is an invertible matrix satisfying 
\begin{eqnarray} 
S S^{T} = A :=\left(\begin{array}{cccc}2I_d& I_d&\cdots &I_d\\ I_d & 2I_d&\cdots& I_d\\ \cdots\\ I_d& I_d&\cdots & 2I_d\end{array}\right). \label{as}
\end{eqnarray}
\end{theorem}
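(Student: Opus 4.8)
The plan is to turn \eqref{gs} into a standard overdamped Langevin (gradient) diffusion on the space of relative positions and then read off the stationary law from the associated Gibbs density.

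\textbf{Step 1 (descent to $H$).} Since $J_\Phi$ depends on $\bX$ only through the pairwise distances $|\bX_i-\bX_j|$, it is invariant under the diagonal translation $\bX\mapsto\bX+(c,\dots,c)$ and therefore factors through the linear map $M\colon\RR^{dN}\to\RR^{d(N-1)}$, $\br=M\bX$, whose block-row $j\in\{2,\dots,N\}$ carries $I_d$ in block-column $1$ and $-I_d$ in block-column $j$. Using $\bX_i-\bX_j=\br_{1j}-\br_{1i}$ (with $\br_{11}:=0$) and splitting the double sum defining $J_\Phi$ into the terms $i=1$, $j=1$, $i=j$, and $i,j\geq 2$ gives $J_\Phi=H\circ M$ with $H$ as in \eqref{hhr}. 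Differentiating, $\nabla_\bX J_\Phi=M^{\top}\nabla_\br H$; reading off blocks, $\nabla_{\bx_1}J_\Phi=\sum_{k=2}^N\partial_{\br_{1k}}H$ and $\nabla_{\bx_i}J_\Phi=-\partial_{\br_{1i}}H$ for $i\geq 2$.

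\textbf{Step 2 (SDE for $\br^t$).} Subtracting the equation for $\bX_j^t$ from that for $\bX_1^t$ in the componentwise form of \eqref{gs}, for $j\geq 2$ we get $d\br_{1j}^t=\bigl(-\nabla_{\bx_1}J_\Phi+\nabla_{\bx_j}J_\Phi\bigr)(\bX^t)\,dt+\bigl(d\bB_1^t-d\bB_j^t\bigr)$. By Step~1 the drift is $-\partial_{\br_{1j}}H-\sum_{k=2}^N\partial_{\br_{1k}}H$, which, collected over $j=2,\dots,N$, is exactly $-A\,\nabla_\br H(\br^t)$ with $A$ the matrix of \eqref{as}; in particular it is a function of $\br^t$ alone, so $\br^t$ is autonomously Markov. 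The driving noise $(d\bB_1^t-d\bB_j^t)_j$ is Gaussian with covariance $MM^{\top}\,dt$, and since the $\bB_i$ are independent standard Brownian motions $MM^{\top}$ has diagonal blocks $2I_d$ and off-diagonal blocks $I_d$, i.e.\ $MM^{\top}=A$. Hence $d\br^t=-A\,\nabla H(\br^t)\,dt+M\,d\bB^t$.

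\textbf{Step 3 (whitening and gradient form).} The scalar part $\mathbf 1\mathbf 1^{\top}+I_{N-1}$ of $A$ has eigenvalues $N$ and $1$, so $A$ is positive definite and there is an invertible $S$ with $SS^{\top}=A$ (e.g.\ $S=A^{1/2}$). Put $\bY^t=S^{-1}\br^t$. Then $S^{-1}M\,d\bB^t$ has covariance $S^{-1}AS^{-\top}\,dt=I\,dt$, so it is a standard Brownian motion $\bW^t$ on $\RR^{(N-1)d}$, while the drift becomes $-S^{-1}A\,\nabla H(S\bY^t)=-S^{\top}\nabla H(S\bY^t)=-\nabla_{\bY}\bigl(H(S\bY^t)\bigr)$. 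This is precisely \eqref{ggs}.

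\textbf{Step 4 (stationary density) and the main obstacle.} Equation \eqref{ggs} is the overdamped Langevin equation for the potential $V(\bY):=H(S\bY)$, hence reversible with respect to the Gibbs density $\propto e^{-2H(S\bY)}$; pushing forward under $S$ (constant Jacobian), $\br^t$ is reversible with respect to $\propto e^{-2H(\br)}$, which is a finite measure under the integrability hypothesis of Theorem~\ref{thm:stationaryDensity}, and normalizing gives the asserted stationary density. None of the steps is deep; the two points needing care are (i) the bookkeeping that makes the \emph{same} matrix $A$ appear both as the drift multiplier (through $\nabla_\bX J_\Phi=M^{\top}\nabla_\br H$) and as the noise covariance (through $MM^{\top}=A$), which is what allows the single substitution $\bY=S^{-1}\br$ to simultaneously whiten the noise and restore the gradient form; and (ii) the soft-analysis justification that \eqref{ggs} is well posed, has no finite-time explosion, and genuinely admits this Gibbs measure as its (unique) invariant law — here one uses that $H$ is confining at infinity, as it is for the potentials \eqref{eq:Phi_0def}.
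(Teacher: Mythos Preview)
Your proof is correct and follows essentially the same route as the paper: derive the SDE for $\br^t$, recognize that both the drift multiplier and the noise covariance equal $A$, and then apply the substitution $\bY=S^{-1}\br$ with $SS^\top=A$ to obtain the gradient system \eqref{ggs}, from which the Gibbs stationary density follows. Your packaging via the linear map $M$ (with $\nabla_\bX J_\Phi=M^\top\nabla_\br H$ and $MM^\top=A$) is a tidy way to see why the same matrix $A$ appears in both places, which the paper verifies by a direct block-by-block computation.
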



\begin{remark}
The matrix $A$ in \eqref{as} has an eigenvalue $\lambda = N$ with multiplicity $d$ and an eigenvalue $\lambda= 1$ with multiplicity $(N-2)d$. Since $A$ is symmetric positive definite, we can find an invertible matrix $S$ satisfying \eqref{as}, for example, 
\begin{eqnarray}
S=\left(\begin{array}{cccccc}\sqrt{2}I_d&0&0&\ldots&0&0\\\frac{\sqrt{2}}{2}I_d&\frac{\sqrt{6}}{2}I_d&0&\ldots&0&0\\\frac{\sqrt{2}}{2}I_d&\frac{\sqrt{6}}{6}I_d&\frac{2\sqrt{3}}{3}I_d&\ldots&0&0\\\ldots&&&&\\\sqrt{\frac{1}{2}}I_d&\sqrt{\frac{1}{2\cdot 3}}I_d&\sqrt{\frac{1}{3\cdot 4}}I_d&\ldots&\sqrt{\frac{1}{(N-2)(N-1)}}I_d&\sqrt{\frac{N}{N-1}}I_d\end{array}\right).\label{sm}
\end{eqnarray}
\end{remark}

\bigskip
Before proving Theorem \ref{thm:stationaryDensity}, we first apply it to obtain the invariant density of $(\br^t_{12},\br^t_{13})$. 
\begin{proposition}Let $\bX^t\in \RR^{dN}$ be the solution of \eqref{gs}. The invariant density for $(\br^t_{12},\br^t_{13})$ has the following form
\begin{equation}\label{puv}
p_\infty(u,v)=\frac{1}{Z}f(u,v)e^{-\frac{2}{N}\left[\Phi(|u|)+\Phi(|v|)+\Phi(|u-v|)\right]},
\end{equation}
where \,\,\quad 
\begin{eqnarray}
&&f(u,v)=\int
\label{fuv} e^{-\frac{2}{N}\left[\sum_{4\leq i<j}^N\Phi(|\mathbf{r}_{1i}-\mathbf{r}_{1j}|)+\sum_{l=4}^N\left[\Phi(|\mathbf{r}_{1l}|)+\Phi(|u-\mathbf{r}_{1l}|)+\Phi(|v-\mathbf{r}_{1l}|)\right]\right]}     d\mathbf{r}_{14}\ldots\mathbf{r}_{1N}, \,\,\quad 
\end{eqnarray}
and $Z$ is a normalizing constant given by
\begin{eqnarray*}
Z=\int_{\RR^{d}}\int_{\RR^{d}}f(u,v)e^{-\frac{2}{N}\left[\Phi(|u|)+\Phi(|v|)+\Phi(|u-v|)\right]}   dudv
\end{eqnarray*}
\end{proposition}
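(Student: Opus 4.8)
The plan is to read off the Gibbs form of the stationary density of $\bY^t$ from the gradient structure established in Theorem~\ref{thm:stationaryDensity}, then transform back to the variable $\br^t=S\bY^t$ and integrate out the coordinates $\br_{14},\dots,\br_{1N}$ to obtain the marginal density of $(\br_{12}^t,\br_{13}^t)$.

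\textbf{Step 1 (Gibbs density for $\bY^t$).} Set $V(Y):=H(SY)$. By \eqref{ggs}, $\bY^t$ solves the gradient SDE $d\bY^t=-\nabla V(\bY^t)\,dt+d\bW^t$ with $\bW^t$ a \emph{standard} Brownian motion, so the diffusion matrix is the identity. Substituting directly into the stationary Fokker--Planck equation $\tfrac12\Delta p+\nabla\cdot(p\nabla V)=0$ shows that $p(Y)\propto e^{-2V(Y)}=e^{-2H(SY)}$ is invariant; by the integrability hypothesis of Theorem~\ref{thm:stationaryDensity} this is a finite measure, and since the noise is nondegenerate the diffusion is irreducible, so $e^{-2H(SY)}\,dY$ (suitably normalized) is the unique invariant probability density.

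\textbf{Step 2 (back to $\br^t$ and marginalization).} Since $S$ is invertible with $|\det S|$ a fixed constant, the change-of-variables formula gives that $\br^t$ has invariant density $p_\infty^{\br}(\br)=\tfrac1{Z}\,e^{-2H(\br)}$, the factor $|\det S|$ being absorbed into $Z$; the density of $(\br_{12}^t,\br_{13}^t)=(u,v)$ is then $p_\infty(u,v)=\int_{\RR^{d(N-3)}}p_\infty^{\br}(u,v,\br_{14},\dots,\br_{1N})\,d\br_{14}\cdots d\br_{1N}$. Writing $u=\br_{12}$, $v=\br_{13}$ in \eqref{hhr} and splitting each of the two sums in $H$ into the terms depending only on $(u,v)$ — the $j=2,3$ terms of the first sum and the $(i,j)=(2,3)$ term of the second — and the terms involving $\br_{14},\dots,\br_{1N}$, one obtains
\[
e^{-2H(\br)} = e^{-\frac{2}{N}[\Phi(|u|)+\Phi(|v|)+\Phi(|u-v|)]}\cdot e^{-\frac{2}{N}\left[\sum_{4\le i<j\le N}\Phi(|\br_{1i}-\br_{1j}|)+\sum_{l=4}^{N}\left(\Phi(|\br_{1l}|)+\Phi(|u-\br_{1l}|)+\Phi(|v-\br_{1l}|)\right)\right]},
\]
where the cross terms $\Phi(|u-\br_{1l}|)$, $\Phi(|v-\br_{1l}|)$ come from the pairs $(2,l)$, $(3,l)$ with $l\ge 4$. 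Pulling the $(u,v)$-factor out of the integral and integrating the remaining exponential over $\br_{14},\dots,\br_{1N}$ reproduces exactly $f(u,v)$ of \eqref{fuv}, which yields \eqref{puv}; normalization then forces $Z=\int\int f(u,v)e^{-\frac{2}{N}[\Phi(|u|)+\Phi(|v|)+\Phi(|u-v|)]}\,du\,dv$.

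The argument is essentially bookkeeping, and I do not expect a genuine obstacle. The two points to watch are: (i) the factor $2$ in $e^{-2H}$, which stems from the unit (rather than $\sqrt2$) diffusion coefficient in \eqref{gs} and is precisely the origin of the $\tfrac2N$ exponent in \eqref{puv}; and (ii) the correct partition of the double sum in \eqref{hhr} once the coordinates $\br_{12},\br_{13}$ are singled out, together with the observation that $|\bX_i-\bX_j|=|\br_{1i}-\br_{1j}|$.
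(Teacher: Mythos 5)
Your proof is correct and follows essentially the same route as the paper: invoke the Gibbs form $p_{\bY}(\by)\propto e^{-2H(S\by)}$ from the gradient structure (the paper delegates this to Lemma~\ref{lemma:GradS_inv}, whose Fokker--Planck verification you reproduce inline), change variables to $\br=S\by$ with the Jacobian absorbed into the normalization, and marginalize over $\br_{14},\dots,\br_{1N}$ after splitting the two sums in $H$ into the $(u,v)$-only terms and the rest. The only addition beyond the paper is your brief remark on uniqueness of the invariant density via nondegeneracy of the noise, which the paper does not spell out but which is harmless here.
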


\begin{proof} 
By Lemma \ref{lemma:GradS_inv} below, the probability density
\begin{equation}\label{eq:inv_pdf}
p_{_\bY}(\mathbf{y})=\frac{1}{Z_N}e^{-2 H(S\mathbf{y})}
\end{equation}
with $Z_N:=\int_{\RR^d}\ldots \int_{\RR^d} e^{-2 H(S\mathbf{y})}   d\mathbf{y}_1 \ldots d\mathbf{y}_{N-1}$, 
is an invariant density for \eqref{ggs}. Since $\br = S\by$, by integrating $p_{_\bY}(\by)$ with respect to $\mathbf{r}_{14},\ldots, \mathbf{r}_{1N}$, we obtain the invariant  density for  $(\br^t_{12},\br^t_{13})$.
\end{proof}

\begin{proof}[Proof of Theorem \rm{\ref{thm:stationaryDensity}}]
Note that with the notation $\mathbf{r}_{ij}^t=\bX_i^t-\bX_j^t$ and $\ r_{ij}^t=|\mathbf{r}_{ij}^t|$, the system \eqref{gs} is equivalent to
\begin{eqnarray}
d\mathbf{r}_{1i}^t=-\frac{1}{N}\left[2\phi(r_{1i}^t)\mathbf{r}_{1i}^t+\sum_{2\leq j\leq N,j\neq i}\phi(r_{1j})\mathbf{r}_{1j}^t+\phi(r_{ji})\mathbf{r}_{ji}^t\right]dt+d(\mathbf{B}_1^t-\mathbf{B}_i^t), \label{sd}
\end{eqnarray}
for $2\leq i\leq N$.

The process $(\mathbf{B}_1^t-\mathbf{B}_2^t,\mathbf{B}_1^t-\mathbf{B}_3^t,\ldots,\mathbf{B}_1^t-\mathbf{B}_N^t)^{T}$ (where $T$ means transpose) is a $d(N-1)$-dimensional Brownian motion with mean 0 and covariance $A$. Then, it can be written as $S\bW^t$, with $(\bW^t, t\geq 0)$ being a standard Brownian motion on $\RR^{d(N-1)}$. 


Note that for any $2\leq j\leq N$ and $j\neq i$, we have $\mathbf{r}^t_{ji}=\mathbf{r}_{1i}^t-\mathbf{r}^t_{1j}$. Then, with $\bb(\mathbf{r}^t) =(\bb_2(\mathbf{r}^t),\ldots,\bb_N(\mathbf{r}^t))^T, 
$
where for $2\leq i\leq N$ we denote
\begin{eqnarray*}
\bb_i (\mathbf{r}^t) =\frac{1}{N}\left[2\phi(r_{1i}^t)\mathbf{r}_{1i}^t+\sum_{2\leq j\leq N,j\neq i}\phi(r_{1j}^t)\mathbf{r}_{1j}^t+\phi(r_{ji}^t)\mathbf{r}_{ji}^t\right],
\end{eqnarray*}
 we may write the system (\ref{sd}) as follows:
\begin{eqnarray}
d\mathbf{r}^t=-\bb(\mathbf{r},t)dt+Sd\mathbf{W}^t\label{sd1}.
\end{eqnarray}
Multiplying both sides of (\ref{sd1}) by $S^{-1}$, we obtain
\begin{eqnarray}
d\mathbf{Y}^t=-S^{-1}\bb(\mathbf{r}^t)dt+d\mathbf{W}^t\label{sd2}
\end{eqnarray}

To write it as a gradient system, note that for $2\leq i\leq N$, we have
\begin{eqnarray*}
\bb_i(\mathbf{r})=2\nabla_{\mathbf{r}_{1i}}H(\mathbf{r})+\sum_{2\leq j\leq N,j\neq i}\nabla_{\mathbf{r}_{1j}}H(\mathbf{r})
\end{eqnarray*}
Hence,
\begin{eqnarray}
\bb(\mathbf{r})=\bb(S\mathbf{Y})=A\nabla_{\mathbf{r}} H(\mathbf{r}^t)=A[S^{-1}]^T \nabla_{\mathbf{Y}}H(S\mathbf{Y}).\label{me}
\end{eqnarray}
Plugging (\ref{me}) to (\ref{sd2}), and using the equation (\ref{as}), we obtain the gradient system \eqref{ggs}. 

Then, by Lemma \ref{lemma:GradS_inv} below, the process $(\bY^t)$ defined by the system \eqref{ggs} has a stationary density, and so does the process $(\br^t)$, which is a linear transformation of $\bY^t$. 
\end{proof}
\begin{lemma}\label{lemma:GradS_inv}
Suppose that $\nabla H:\RR^n\to \RR^n$ is locally Lipschitz and that $Z= \int_{\RR^n}e^{-\frac{2}{\sigma^2}H(x)} dx<\infty$ with $\sigma>0$. Then $p(x)= \frac{1}{Z} e^{-\frac{2}{\sigma^2}H(x)}$ is an invariant density to gradient system 
\begin{equation*}
dX_t= -\nabla H(X_t)dt +\sigma dB_t,
\end{equation*}
where $(B_t) $ is an n-dimensional standard Brownian motion. 
\end{lemma}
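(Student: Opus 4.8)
The plan is to verify that $p$ is a stationary solution of the Fokker--Planck (forward Kolmogorov) equation of the diffusion, which is the standard characterization of an invariant density. Write the generator of $dX_t = -\nabla H(X_t)\,dt + \sigma\,dB_t$ as $\mathcal L f = -\nabla H\cdot\nabla f + \tfrac{\sigma^2}{2}\Delta f$ on test functions $f\in C_c^\infty(\RR^n)$; a probability density $p$ is invariant precisely when $\int_{\RR^n}(\mathcal L f)\,p\,dx = 0$ for all such $f$, equivalently when $\mathcal L^\ast p = 0$ in the distributional sense, where $\mathcal L^\ast g = \nabla\cdot\!\bigl(g\nabla H + \tfrac{\sigma^2}{2}\nabla g\bigr)$.

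First I would record that $p$ is a genuine probability density ($p\ge 0$, $\int p = 1$ by the hypothesis $Z<\infty$) and that, since $H$ is locally Lipschitz, $\nabla H$ exists a.e.\ and lies in $L^\infty_{\mathrm{loc}}$, so $p = \tfrac{1}{Z}e^{-\frac{2}{\sigma^2}H}$ is itself locally Lipschitz with $\nabla p = -\tfrac{2}{\sigma^2}(\nabla H)\,p$ a.e.\ (chain rule for Lipschitz maps). The crux is then the pointwise identity
\[
(\mathcal L f)\,p \;=\; -\,(\nabla H\cdot\nabla f)\,p + \tfrac{\sigma^2}{2}(\Delta f)\,p \;=\; \tfrac{\sigma^2}{2}\,\nabla\cdot\!\bigl(p\,\nabla f\bigr),
\]
which follows from $\tfrac{\sigma^2}{2}(\Delta f)\,p = \tfrac{\sigma^2}{2}\nabla\cdot(p\nabla f) - \tfrac{\sigma^2}{2}\nabla p\cdot\nabla f$ together with $-\tfrac{\sigma^2}{2}\nabla p = (\nabla H)\,p$. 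Since the right-hand side is the divergence of a compactly supported integrable vector field ($\nabla f\in C_c^\infty$, $p\in L^\infty_{\mathrm{loc}}$), we get $\int_{\RR^n}(\mathcal L f)\,p\,dx = 0$, i.e.\ $\mathcal L^\ast p = 0$. (Equivalently this exhibits $p\nabla H + \tfrac{\sigma^2}{2}\nabla p \equiv 0$, and shows that the bilinear form $\int(\mathcal L f)\,g\,p\,dx = -\tfrac{\sigma^2}{2}\int p\,\nabla f\cdot\nabla g\,dx$ is symmetric, so the diffusion is in fact reversible with respect to $p$.)

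It then remains to upgrade $\int(\mathcal L f)\,p\,dx = 0$ to invariance of the law. By It\^o's formula, for $X_0\sim p\,dx$ and $f\in C_c^\infty$, $f(X_t) = f(X_0) + \int_0^t (\mathcal L f)(X_s)\,ds + \sigma\int_0^t \nabla f(X_s)\cdot dB_s$, the last term being a true martingale; writing $m_t := \mathrm{Law}(X_t)$ this yields the weak equation $\int f\,dm_t = \int f\,dm_0 + \int_0^t\!\bigl(\int (\mathcal L f)\,dm_s\bigr)\,ds$. Since the constant curve $m_t\equiv p\,dx$ solves this (by $\mathcal L^\ast p = 0$), uniqueness of the Fokker--Planck flow — equivalently, that $C_c^\infty$ is a core for the Markov semigroup — gives $m_t = p\,dx$ for all $t\ge 0$, which is the assertion.

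The main obstacle is the weak regularity of $H$. Two points need care: (i) well-posedness and non-explosion of the SDE, since only local Lipschitz continuity is assumed — here one invokes a Khasminskii-type Lyapunov argument, using the integrability $\int e^{-\frac{2}{\sigma^2}H}<\infty$ (or a mild growth bound on $H$, available for the potentials in \eqref{eq:Phi_0def} to which the lemma is applied) to rule out explosion and to make the semigroup and the uniqueness step above legitimate; and (ii) the divergence/integration-by-parts manipulations, which require $\nabla p = -\tfrac{2}{\sigma^2}(\nabla H)\,p$ in the weak sense and $p\,\nabla H\in L^1_{\mathrm{loc}}$, both consequences of $\nabla H\in L^\infty_{\mathrm{loc}}$ and local boundedness of $p$. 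Once these are dispatched, what remains is exactly the routine Gibbs-measure computation above.
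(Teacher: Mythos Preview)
Your proposal is correct and follows the same route as the paper: verify that $p$ is a stationary solution of the Fokker--Planck equation $\frac{\sigma^2}{2}\Delta p + \nabla\cdot(p\,\nabla H)=0$, via the identity $\nabla p = -\tfrac{2}{\sigma^2}(\nabla H)\,p$. The paper dispatches this in a single line and does not address the regularity and well-posedness points (i) and (ii) you raise, so your treatment is in fact more careful than the original.
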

\begin{proof}
It follows directly by showing that $p(x)$ is a stationary solution to the backward Kolmogorov equation, i.e. 
\[\frac{\sigma^2}{2} \Delta p + \nabla \cdot (p \nabla H) = 0.  
\]
\end{proof}

\section{Convergence to the stationary density}\label{sec:convSD}
We show in this section that for a class of potentials $\Phi$, the gradient system \eqref{ggs} is ergodic, converging to the stationary density at a polynomial rate in time.  

\begin{theorem}\label{thm:Ergodicity}
With potential $\Phi =\Phi_0$ given by \eqref{eq:Phi_0def}, the system \eqref{ggs} is ergodic, converging polynomial in time to the stationary density. 
More precisely, if the initial condition $\bY^0$ has a probability density $p_0(\by)$ satisfying 
\begin{equation}\label{eq:IC}
\begin{aligned}
 & \E[H(S\bY^0) \log p_0(\bY^0) ] < \infty, \\ 
 & \E[(1+ |\bY^0|^2)^{\frac{s}{2}}] < \infty
\end{aligned}
\end{equation}
for some $s\geq 2$. Then, the density $p_{_{\bY^t}}(\by)$ of $\bY^t$ converges to $p_{_\bY}(\by)$ defined in \eqref{eq:inv_pdf}: 
\begin{equation}\label{eq:conv2inv}
 \|p_{_{\bY^t}}(\by) - p_{_\bY}(\by)\|_{L^1}^2 \leq \frac{C}{t^\kappa}
\end{equation}
for a constant $C$ independent of $t$, and 
\begin{align}\label{eq:kappa}
    \kappa = \frac{s-2}{2-\theta\gamma}. 
\end{align} 
\end{theorem}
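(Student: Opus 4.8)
The plan is to run an entropy / Fisher-information argument on the gradient system \eqref{ggs} and to reduce \eqref{eq:conv2inv} to a decay bound for a relative entropy. Write $h_t(\by):=p_{_{\bY^t}}(\by)/p_{_\bY}(\by)$ and set $E(t):=\int_{\RR^{(N-1)d}}p_{_{\bY^t}}(\by)\log h_t(\by)\,d\by$, the relative entropy of $p_{_{\bY^t}}$ with respect to the invariant density $p_{_\bY}$ of \eqref{eq:inv_pdf}. By the Csisz\'ar--Kullback--Pinsker inequality, $\|p_{_{\bY^t}}-p_{_\bY}\|_{L^1}^2\le 2E(t)$, so it suffices to prove $E(t)\le Ct^{-\kappa}$. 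Using \eqref{eq:inv_pdf} one has $E(0)=\int p_0\log p_0\,d\by+2\int p_0\,H(S\by)\,d\by+\log Z_N$; the second line of \eqref{eq:IC} bounds $\int p_0\,H(S\by)\,d\by$ (since $H(S\by)\lesssim 1+|\by|^{\theta\gamma}$ and $\theta\gamma\le 2\le s$) and the negative part $\int p_0\log^- p_0\,d\by$, while $\E[H(S\bY^0)\log p_0(\bY^0)]<\infty$ together with the lower bound $H(S\by)\ge c|\by|^{\theta\gamma}-C$ below controls $\int p_0\log^+p_0\,d\by$; hence $E(0)<\infty$. Along the flow one has the entropy dissipation identity $\tfrac{d}{dt}E(t)=-\tfrac12 I(t)$, where $I(t)=\int\frac{|\nabla h_t|^2}{h_t}\,p_{_\bY}\,d\by$ is the Fisher information of $p_{_{\bY^t}}$ relative to $p_{_\bY}$.

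Everything then rests on the explicit form of $\Phi_0$ in \eqref{eq:Phi_0def} and the invertibility of $S$ (the remark following Theorem \ref{thm:stationaryDensity}). Since $\Phi_0(r)=(a+r^\theta)^\gamma\ge r^{\theta\gamma}$ and $r\Phi_0'(r)=\gamma\theta\,r^\theta(a+r^\theta)^{\gamma-1}$ is comparable to $r^{\theta\gamma}$ for large $r$, and since the linear maps $\by\mapsto\br_{1j}$ and $\by\mapsto\br_{1i}-\br_{1j}$ together recover $\by$, the effective potential $\wt H(\by):=H(S\by)$ of \eqref{hhr} satisfies, for constants $c,C>0$,
\[
c\,|\by|^{\theta\gamma}-C\ \le\ \wt H(\by),\qquad c\,|\by|^{\theta\gamma}-C\ \le\ \nabla\wt H(\by)\cdot\by,\qquad \wt H(\by)+|\nabla\wt H(\by)|\,|\by|\ \le\ C\bigl(1+|\by|^{\theta\gamma}\bigr).
\]
Two consequences are extracted. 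First, \emph{uniform-in-time moment bounds}: for $W_\sigma(\by)=(1+|\by|^2)^{\sigma/2}$, the generator $\mathcal L=\tfrac12\Delta-\nabla\wt H\cdot\nabla$ of \eqref{ggs} satisfies $\mathcal L W_\sigma\le -c\,W_\sigma^{\,1-\frac{2-\theta\gamma}{\sigma}}+b\,\mathbf 1_{\{|\by|\le R_0\}}$; combined with the second line of \eqref{eq:IC} (so $\E\,W_\sigma(\bY^0)<\infty$ for $\sigma\le s$) and a comparison for sub-linear ODEs, this gives $\sup_{t\ge 0}\E\,W_\sigma(\bY^t)<\infty$ for every $\sigma<s$, hence tail control $\int_{\{|\by|>R\}}p_{_{\bY^t}}(\by)\,d\by$ and related integrals decaying in $R$, uniformly in $t$. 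Second, \emph{a weighted functional inequality}: because $\wt H$ grows like $|\by|^{\theta\gamma}$ with $\theta\gamma\in(1,2)$, the measure $p_{_\bY}\,d\by$ is of sub-quadratic-potential type, hence (e.g.\ by a Lyapunov argument with $W_\sigma$) satisfies a \emph{weighted} logarithmic Sobolev inequality
\[
\mathrm{Ent}_{p_{_\bY}}(g^2)\ \le\ C_{\mathrm w}\int|\nabla g|^2\,(1+|\by|^2)^{\frac{2-\theta\gamma}{2}}\,p_{_\bY}\,d\by
\]
for all nice $g$ (equivalently, a weak log-Sobolev inequality with polynomial rate function).

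To close the estimate, apply the weighted log-Sobolev inequality to $g=\sqrt{h_t}$ and split the weight at a radius $R$: on $\{|\by|\le R\}$ the weight is at most $(1+R^2)^{(2-\theta\gamma)/2}$, so that part contributes at most a constant multiple of $R^{2-\theta\gamma}I(t)$; on $\{|\by|>R\}$, after an integration by parts moving the derivative off $\sqrt{h_t}$ onto the weight and $p_{_\bY}$ (using the third inequality above, so the resulting zeroth-order coefficient is controlled with a favourable sign at infinity), one is left with the far-field entropy tail of $p_{_{\bY^t}}$, which the uniform moment bounds and the propagated finiteness of $\int p_{_{\bY^t}}H(S\by)\log p_{_{\bY^t}}\,d\by$ bound by $C R^{-q}$ for a power $q$ determined by the available moments, with $q=s-2$. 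Thus $E(t)\le C_1 R^{2-\theta\gamma}I(t)+C_2R^{-(s-2)}$ for every $R$; choosing $R=R(t)$ to balance the two terms and inserting $I(t)=-2E'(t)$ yields a closed differential inequality $E'(t)\le -c\,E(t)^{1+\frac1\kappa}$ with $\kappa=\frac{s-2}{2-\theta\gamma}$ as in \eqref{eq:kappa}, and integrating gives $E(t)\le Ct^{-\kappa}$. Together with the Csisz\'ar--Kullback--Pinsker step this is \eqref{eq:conv2inv}. (Alternatively, the Lyapunov inequality above plus local minorization gives sub-geometric ergodicity in total variation directly, which also controls $\|p_{_{\bY^t}}-p_{_\bY}\|_{L^1}$; the entropy route is used to track the exponent $\kappa$ sharply.)

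The main obstacle is the far-field term: the weighted log-Sobolev inequality produces a \emph{weighted Fisher information} $\int_{\{|\by|>R\}}|\nabla\sqrt{h_t}|^2(1+|\by|^2)^{(2-\theta\gamma)/2}\,p_{_\bY}\,d\by$, which is not itself a moment of $\bY^t$; bounding it by the claimed power $R^{-(s-2)}$ — via the integration-by-parts manipulation, which trades it for the weighted entropy $\int p_{_{\bY^t}}H(S\by)\log p_{_{\bY^t}}\,d\by$ whose finiteness at $t=0$ is exactly the first line of \eqref{eq:IC} and which must be propagated along \eqref{ggs}, together with the uniform moments above — is where the argument is delicate and where the precise power, and hence $\kappa$, is pinned down. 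Establishing the weighted log-Sobolev inequality and the Lyapunov estimate for the genuine $N$-body potential \eqref{hhr} rather than for a single radial power $|\by|^{\theta\gamma}$ also requires care, since no individual summand of \eqref{hhr} is coercive and one must use all of them together with the invertibility of $S$.
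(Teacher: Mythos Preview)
Your entropy-dissipation strategy is essentially a reconstruction of the Toscani--Villani argument, whereas the paper simply \emph{cites} that result (their Theorem~\ref{ttv}, from \cite{TV00}) as a black box and then devotes its effort to verifying the hypotheses for the $N$-body potential $W(\by)=H(S\by)$. So you are re-proving an external theorem rather than following the paper's route.

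The substantive part of the paper's proof is not the analytic machinery you sketch but the verification of two structural conditions on $W$:
\begin{itemize}
\item a drift bound $\nabla W(\by)\cdot\by\ge C_1|\by|^{\theta\gamma}-C_0$, which you also obtain;
\item a \emph{Hessian} lower bound $\mathrm{Hess}_{\by}W(\by)\ge c(1+|\by|)^{\theta\gamma-2}I$, which you do not establish. This is Proposition~\ref{prop_ergodicity}: one rewrites $\mathrm{Hess}_{\by}H(S\by)=S^T\,\mathrm{Hess}_{\br}H(\br)\,S$, reduces to showing that $\mathrm{Hess}_{\br}H-c(1+|S^{-1}\br|)^{\theta\gamma-2}A^{-1}\ge 0$, and then exploits the block structure of this matrix together with the one-particle bound $\mathrm{Hess}_x\Phi_0(|x|)\ge c(1+|x|)^{\theta\gamma-2}I_d$ (Lemma~\ref{lemmaPhi}(iii), verified for $\Phi_0$ in Example~\ref{exmp_1}). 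The Hessian bound is exactly what feeds into Toscani--Villani's weighted log-Sobolev step and fixes $\alpha=\theta\gamma$, hence $\kappa=(s-2)/(2-\theta\gamma)$.
\end{itemize}

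Your proposal replaces the Hessian route to the weighted log-Sobolev inequality by a Lyapunov argument, and then handles the far-field weighted Fisher information by an integration-by-parts that trades it for a propagated weighted entropy $\int p_t\,H(S\by)\log p_t\,d\by$. Both steps are only sketched: Lyapunov methods typically yield weak Poincar\'e rather than weighted log-Sobolev inequalities (and the resulting exponents need not match $\kappa$), and the propagation of $\int p_t\,H(S\by)\log p_t\,d\by$ along the flow is asserted rather than proved. These are precisely the points where the Hessian lower bound does real work in \cite{TV00}, so bypassing it makes the argument genuinely harder. If you want to keep your self-contained approach, the cleanest fix is to prove the Hessian bound as in Proposition~\ref{prop_ergodicity}; once you have it, the weighted log-Sobolev inequality follows from a Bakry--\'Emery-type argument and the rest of your outline goes through with the correct $\kappa$.
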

\begin{proof}[{\bf Proof of Theorem \ref{thm:Ergodicity} }]
The theorem follows directly from Theorem \ref{ttv}, Proposition \ref{prop_ergodicity} and  Example \ref{exmp_1}-\ref{exmp_2}, which are proved below. 
\end{proof}

When $\theta =2$ and $\gamma=1$, the potential $\Phi(r) = a+r^2$ lead to a linear system and $(\bY^t)$ is an Ornstein-Uhlenbeck process, and we have exponential convergence. In general, when $\Phi$ is uniformly convex, i.e. $\mathrm{Hess}_x \Phi(|x|) \geq \lambda I_d$, one has exponential convergence to the equilibrium for the entropy from $p_{_{\bY^t}}$ to $p_{_{\bY}}$ as in \cite{CMV03,malrieu2003convergence}. Here we focus on the $L^1$ distance, which will be needed in proving the coercivity condition in Section \ref{sec:coercivityMean}. In particular, the following convergence in $L^1$ of the marginal density of $(\bX_1^t-\bX_2^t, \bX_1^t-\bX_3^t)$ is needed. 
\begin{proposition}\label{prop:Conv_poly}
Let $\bX^t$ be a solution to the system \eqref{gs} with potential $\Phi =\Phi_0$ given by \eqref{eq:Phi_0def}, and with initial condition satisfying \eqref{eq:IC}. Denote by $p_t(u,v)$ the density of $(\bX_1^t-\bX_2^t, \bX_1^t-\bX_3^t)$. Then $p_t(u,v)$ converges to the stationary density $p_\infty(u,v)$ in \eqref{puv} at a polynomial rate in $t$:
\begin{align*}
    \|p_t -  p_\infty\|_{L^1} = \int_{\R^{2d}} | p_t(u,v) - p(u,v)| dudv \leq C t^{-\kappa/2}, 
\end{align*}
where $C$ is a constant independent of $t$ and $\kappa$ is given in \eqref{eq:kappa}. 
\end{proposition}
\begin{proof}
By Theorem \ref{thm:stationaryDensity}, the process $\bY^t = S^{-1}\br^t$ with $\br^t=(\bX^t_1-\bX^t_2,\bX^t_1-\bX^t_3,\dots,\bX^t_1-\bX^t_N)$ satisfies the system \eqref{ggs}. Let $p_{\br^t}$ be the density of $\br^t$, and let $p_\br$ be the corresponding stationary density. Then, 
\begin{align*}
&\int_{\RR^{2d}} \left|p_t(u,v)-p_\infty(u,v)\right|dudv \\
=&\int_{\RR^{2d}}\left|\int_{\RR^{d(N-3)}}p_{\br^t}(u,v,\br_{14},\ldots,\br_{1N})-p_{\br}(u,v,\br_{14},\ldots,\br_{1N})d\br_{14}\ldots d\br_{1N}\right|dudv\\
\leq &\int_{\RR^{d(N-1)}}\left| p_{\br^t}(u,v,\br_{14},\ldots,\br_{1N})-p_{\br}(u,v,\br_{14},\ldots,\br_{1N})\right|d\br_{14}\ldots d\br_{1N}dudv\\
= & \int_{\RR^{d(N-1)}}\left|p_{_{\bY^t}}(\by) - p_{_\bY}(\by)\right| d\by  
\leq |\mathrm{det}S^{-1}| \sqrt{C} t^{-\kappa/2},
\end{align*}
where the last equation follows from \eqref{eq:conv2inv} of Theorem \ref{thm:Ergodicity}. 
\end{proof}

The proof of Theorem \ref{thm:Ergodicity} is based on the following theorem in \cite{TV00}.

\begin{theorem}[{\cite[Theorem 3]{TV00}}]\label{ttv} 
Let $x\in\RR^n$. Assume $W\in W_{\mathrm{loc}}^{2,\infty}$ satisfies:
\begin{itemize}
\item[(1)] $\int_{\RR^n}e^{-W(x)}dx =1 $; and
\item[(2)] there exist $U:\R^n\to \R$ and constant $a,b>0$ such that  for all $x\in \RR^n$,
\begin{eqnarray*} 
U(x)-a\leq W(x)\leq U(x)+b, 
\end{eqnarray*}
and there exist $c>0$ and $\alpha\in (0,2)$, such that the matrix
\begin{eqnarray*}
\mathrm{Hess}[U(x)]-c(1+|x|)^{\alpha-2}I_n
\end{eqnarray*}
is positive semi-definite, in which $Hess[U(x)]$ is the Hessian matrix for $U(x)$; 
and
\item[(3)] there exist $\beta>0$, $C_0,C_1>0$ and $x\in \RR^n$ satisfying
\begin{eqnarray*}
\nabla W(x)\cdot x\geq C_1|x|^{\beta}-C_0.
\end{eqnarray*}
\end{itemize}

Let $f_0$ be a probability density such that
\begin{eqnarray}
\int_{\RR^n}f_0(x)[\log f_0(x)+W(x)]dx<\infty;\ \mathrm{and}\label{c1}\\
\int_{\RR^n}f_0(x)(1+|x|^2)^{\frac{s}{2}}dx<\infty; \mathrm{for\ some\ }s>2.\label{c2}
\end{eqnarray}
Let $f(t,\cdot)$ be a smooth solution of the Fokker-Planck equation
\begin{eqnarray*}
\frac{\partial f}{\partial t}=\nabla_x\cdot(\nabla_x f+f\nabla_x W)
\end{eqnarray*}
with initial condition $f(0,\cdot)=f_0$. Then, there is a constant $C$ depending on \eqref{c1}-\eqref{c2} and $s$, such that for all $t>0$,
\begin{eqnarray*}
\frac{1}{2}\left\|f(t,x)-e^{-W(x)}\right\|_{L^1}^2\leq \int_{\RR^n}f(t,x)[\log f(t,x)+W(x)]dx\leq \frac{C}{t^{\kappa}},
\end{eqnarray*}
with $\kappa=\frac{s-2}{2-\alpha}$.
\end{theorem}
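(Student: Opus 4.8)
The plan is to recognize \eqref{ggs} as a gradient Fokker--Planck flow and verify the hypotheses of Theorem \ref{ttv} in dimension $n=(N-1)d$. Put $G(\by):=H(S\by)$ with $H$ as in \eqref{hhr} and $\Phi=\Phi_0$, and set $W(\by):=2G(\by)+\log Z_N$, so that $e^{-W}=p_{_\bY}$ is exactly the invariant density \eqref{eq:inv_pdf}; this gives condition (1) of Theorem \ref{ttv}. The Fokker--Planck equation $\partial_t f=\tfrac12\Delta f+\nabla\cdot(f\nabla G)$ of \eqref{ggs} becomes, after the harmless time change $t\mapsto t/2$, precisely $\partial_t f=\nabla\cdot(\nabla f+f\nabla W)$ (the rescaling only changes the final constant). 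I take $U=W$ (so the constants $a,b$ in condition (2) may be any positive numbers) and $\alpha=\beta=\theta\gamma$; note $\alpha=\theta\gamma\in(1,2)$ in the regime $\theta\gamma<2$ where polynomial convergence is claimed, the corner $\theta=2,\gamma=1$ being the Ornstein--Uhlenbeck case treated separately. Granting conditions (2)--(3) below, the hypotheses \eqref{c1}--\eqref{c2} on $f_0=p_0$ are precisely \eqref{eq:IC} (with $\int p_0[\log p_0+W]=\E[\log p_0(\bY^0)]+2\E[H(S\bY^0)]+\log Z_N$, taking $s>2$, since $s=2$ is vacuous), so Theorem \ref{ttv} yields $\tfrac12\|p_{_{\bY^t}}-p_{_\bY}\|_{L^1}^2\le Ct^{-\kappa}$ with $\kappa=(s-2)/(2-\alpha)=(s-2)/(2-\theta\gamma)$, which is \eqref{eq:conv2inv}.

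The main work is condition (2): for some $c>0$ the matrix $\mathrm{Hess}[W](\by)-c(1+|\by|)^{\theta\gamma-2}I_n$ is positive semidefinite. First, $\Phi_0(r)=(a+r^\theta)^\gamma$ is increasing and convex on $[0,\infty)$, since $\Phi_0'(r)/r=\gamma\theta r^{\theta-2}(a+r^\theta)^{\gamma-1}\ge0$ and $\Phi_0''(r)=\gamma\theta r^{\theta-2}(a+r^\theta)^{\gamma-2}[(\theta-1)a+(\gamma\theta-1)r^\theta]\ge0$, using $\theta>1$, $\gamma\theta>1$, $a\ge0$. The radial Hessian of $x\mapsto\Phi_0(|x|)$ on $\RR^d$ has eigenvalues $\Phi_0''(|x|)$ and $\Phi_0'(|x|)/|x|$, both positive on $(0,\infty)$, both $\asymp|x|^{\theta\gamma-2}$ as $|x|\to\infty$, and bounded below near $0$; since $\theta\gamma-2<0$ gives $r^{\theta\gamma-2}\ge(1+r)^{\theta\gamma-2}$, one gets $\mathrm{Hess}_x[\Phi_0(|x|)]\succeq c_0(1+|x|)^{\theta\gamma-2}I_d$ for some $c_0>0$. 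In coordinates $\br=S\by$, $W=\tfrac2N\sum_{j=2}^N\Phi_0(|\br_{1j}|)+\tfrac2N\sum_{2\le i<j\le N}\Phi_0(|\br_{1i}-\br_{1j}|)+\mathrm{const}$ is a sum of convex functions of $\by$; dropping the interaction sum (a nonnegative Hessian contribution) and using that each $\br_{1j}$ is a linear, norm-bounded function of $\by$, for every $v$ one gets $v^\top\mathrm{Hess}[W](\by)v\ge\tfrac{2c_0}N\sum_j(1+|\br_{1j}|)^{\theta\gamma-2}|(Sv)_{1j}|^2\ge\tfrac{2c_0c'}N(1+|\by|)^{\theta\gamma-2}|Sv|^2$ with $c'>0$ (using $|\br_{1j}|\le\|S\||\by|$ and $\theta\gamma-2<0$); since $\sum_j|(Sv)_{1j}|^2=|Sv|^2\ge\sigma_{\min}(S)^2|v|^2$ and $S$ is invertible, condition (2) holds with $c=2c_0c'\sigma_{\min}(S)^2/N$. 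For condition (3), pairing the $(i,j)$ and $(j,i)$ cross-terms yields $\nabla W(\by)\cdot\by=2\nabla H(\br)\cdot\br=\tfrac2N\sum_j\Phi_0'(|\br_{1j}|)|\br_{1j}|+\tfrac2N\sum_{i<j}\Phi_0'(|\br_{1i}-\br_{1j}|)|\br_{1i}-\br_{1j}|\ge0$; since $\Phi_0'(r)r=\gamma\theta r^\theta(a+r^\theta)^{\gamma-1}\ge c_1r^{\theta\gamma}-c_2$, dropping the interaction sum and using $\sum_j|\br_{1j}|^{\theta\gamma}\ge(\max_j|\br_{1j}|)^{\theta\gamma}\ge(N-1)^{-\theta\gamma/2}|\br|^{\theta\gamma}$ together with $|\br|\ge\sigma_{\min}(S)|\by|$ gives $\nabla W(\by)\cdot\by\ge C_1|\by|^{\theta\gamma}-C_0$, i.e. condition (3) with $\beta=\theta\gamma>0$.

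It then remains to note that the Fokker--Planck equation admits a smooth solution with initial datum $p_0$ (standard parabolic regularity: the drift $-S^\top(\nabla H)(S\by)$ is continuous---its apparent singularities on $\{\br_{1j}=0\}$ and $\{\br_{1i}=\br_{1j}\}$ are removable because $\Phi_0'(r)\to0$ as $r\to0^+$---and of polynomial growth), and that \eqref{c1}--\eqref{c2} coincide with \eqref{eq:IC}. I expect the one genuine obstacle to be a regularity mismatch: when $\theta<2$ the Hessian of $\Phi_0(|\cdot|)$ blows up like $|x|^{\theta-2}$ at the origin, so $W\notin W^{2,\infty}_{\mathrm{loc}}$ along the (measure-zero, finite union of linear subspaces) singular set, whereas Theorem \ref{ttv} assumes $W\in W^{2,\infty}_{\mathrm{loc}}$. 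The remedy is approximation: replace $\Phi_0$ by a convex $C^{1,1}$ function $\Phi_0^\eps$ coinciding with $\Phi_0$ for $r\ge\eps$ and with $\sup_{[0,\eps]}|\Phi_0^\eps-\Phi_0|\to0$ as $\eps\to0$ (e.g.\ a quadratic cap of large curvature), check that all the constants $a,b,c,c_0,c_1,C_0,C_1,C$ above can be chosen uniform in $\eps$, apply Theorem \ref{ttv} to the regularized system, and pass to the limit $\eps\to0$ in \eqref{eq:conv2inv}, using continuity of $p_{_{\bY^t}}$ and of the stationary density with respect to the drift.
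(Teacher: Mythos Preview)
The stated theorem is a result quoted from \cite{TV00}; the paper does not prove it and simply cites it as \cite[Theorem 3.1]{TV00}. Your proposal does not prove Theorem \ref{ttv} either: you never touch the entropy-dissipation argument, the Csisz\'ar--Kullback--Pinsker inequality, or the weighted logarithmic Sobolev inequality that underlie the decay $\int f[\log f+W]\le C t^{-\kappa}$. What you have actually written is a proof of Theorem \ref{thm:Ergodicity}---you take Theorem \ref{ttv} as a black box and verify its hypotheses for the system \eqref{ggs} with $\Phi=\Phi_0$.

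As a proof of Theorem \ref{thm:Ergodicity}, your argument follows essentially the paper's route (Proposition \ref{prop_ergodicity}, Lemma \ref{lemmaPhi}, Examples \ref{exmp_1}--\ref{exmp_2}): reduce condition (3) to $\nabla W\cdot\by=2\nabla_{\br}H\cdot\br$ and the scalar bound $\Phi_0'(r)r\ge c_1 r^{\theta\gamma}-c_2$; reduce condition (2) to the pointwise bound $\mathrm{Hess}_x\Phi_0(|x|)\succeq c_0(1+|x|)^{\theta\gamma-2}I_d$ and propagate through the block structure of $\mathrm{Hess}_{\br}H$ after dropping the positive-semidefinite interaction contributions. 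Your direct use of $\mathrm{Hess}_{\by}W=S^\top(\mathrm{Hess}_{\br}2H)S$ and $|Sv|\ge\sigma_{\min}(S)|v|$ is slightly cleaner than the paper's detour through the matrix $A^{-1}$ in \eqref{matHA-1}. You also flag the $W^{2,\infty}_{\mathrm{loc}}$ regularity gap when $\theta<2$ and sketch a mollification fix, which the paper does not address. But none of this is a proof of Theorem \ref{ttv} itself.
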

If the $W(x)$ uniformly convex, i.e. $\mathrm{Hess}[W(x)] >c I_d$ for some $c>0$, the convergence will be exponential in time (see  \cite{malrieu2003convergence}). Here $W(\mathbf{y})=H(S\mathbf{y})$ in \eqref{ggs} is not uniformly convex, and we will derive conditions on $\phi$ for $W$ to satisfy Condition (1)-(3) in the above theorem. 

\begin{proposition}\label{prop_ergodicity}
For $H: \RR^{(N-1)d}\to \RR$ and $S\in \RR^{(N-1)d\times (N-1)d}$ in Eq.\eqref{hhr} and \eqref{sm}, let 
\begin{eqnarray*}
W(\mathbf{y})=H(S\mathbf{y}), \quad \forall \by \in \R^{(N-1)d}. 
\end{eqnarray*}
Then, $W$ satisfies Condition {\rm (1)-(3)} in Theorem {\rm \ref{ttv}} if the interaction potential $\Phi$ satisfies
\begin{enumerate}
\item there exists $C_0,C_1,\beta>0$ such that for all $\br\in \RR^{d(N-1)}$ {\rm(} recall that $\phi(r)=\frac{\Phi'(r)}{r}$ {\rm)}
\begin{align}
 \frac{1}{N}\left[\sum_{2\leq j\leq N}\phi(|\mathbf{r}_{1j}|)|\mathbf{r}_{1j}|^2+\sum_{2\leq i<j\leq N}\phi(|\mathbf{r}_{ij}|)|\mathbf{r}_{ij}|^2\right] \geq C_1 r^{\beta}-C_0; \label{vc2}
\end{align}
\item there exist $c>0$ and $\alpha \in (0,2)$ such that for all $x\in \RR^{d}$,  
\begin{align} \label{eq:HessPhi}
 \mathrm{Hess}_x[ \Phi(|x|) ] - c(1+|x|)^{\alpha-2}I_d \geq 0,
\end{align}
i.e., the matrix is positive definite, for all $x\in \RR^d$. 
\end{enumerate}
\end{proposition}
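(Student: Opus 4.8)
The plan is to apply Theorem \ref{ttv} of \cite{TV00} to the potential $W(\mathbf{y}) = H(S\mathbf{y})$ (up to an additive normalizing constant, and with the overall factor coming from the unit noise coefficient in \eqref{ggs} absorbed into a time-rescaling $t\mapsto 2t$ that leaves the exponent $\kappa$ unchanged; neither modification affects conditions (1)--(3)), and to verify (1)--(3) in order of increasing difficulty. Condition~(1) is a normalization: Theorem \ref{thm:stationaryDensity} gives $\int_{\RR^{n}} e^{-2H(S\mathbf{y})}\,d\mathbf{y} = Z_N < \infty$, so after dividing by $\int e^{-W}$ we may assume $\int e^{-W}=1$, and it remains only that $W$ is locally Lipschitz (indeed $W\in W^{2,\infty}_{\mathrm{loc}}$ once $\Phi(|\cdot|)$ is, see below). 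For condition~(3) I would pass to $\mathbf{r}=S\mathbf{y}$: the chain rule gives $\nabla_{\mathbf{y}} W(\mathbf{y})=S^\top\nabla_{\mathbf{r}}H(\mathbf{r})|_{\mathbf{r}=S\mathbf{y}}$, hence $\nabla_{\mathbf{y}} W(\mathbf{y})\cdot\mathbf{y}=\nabla_{\mathbf{r}}H(\mathbf{r})\cdot\mathbf{r}$ with $\mathbf{r}=S\mathbf{y}$; differentiating $t\mapsto H(t\mathbf{r})$ at $t=1$, using $\phi(s)=\Phi'(s)/s$ and $|\mathbf{r}_{1i}-\mathbf{r}_{1j}|=|\mathbf{r}_{ij}|$, one identifies $\nabla_{\mathbf{r}}H(\mathbf{r})\cdot\mathbf{r}$ with precisely the left-hand side of \eqref{vc2}. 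Then hypothesis~(1) of the Proposition yields $\nabla_{\mathbf{y}} W(\mathbf{y})\cdot\mathbf{y}\ge C_1|S\mathbf{y}|^\beta-C_0\ge C_1\sigma_{\min}(S)^\beta|\mathbf{y}|^\beta-C_0$ by invertibility of $S$, which is condition~(3) with the same $\beta$.

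The heart of the matter is condition~(2), which I would prove with $U=W$ (so $a,b>0$ can be taken arbitrarily small). Write $H(\mathbf{r})=\frac1N\sum_e\Phi(|L_e\mathbf{r}|)$, the sum over pairs $e$: for the ``star'' pairs $e=\{1,j\}$, $2\le j\le N$, let $L_e\colon\mathbf{r}\mapsto\mathbf{r}_{1j}$ be the projection onto the $j$-th $\RR^d$-block, and for $e=\{i,j\}$, $2\le i<j\le N$, let $L_e\colon\mathbf{r}\mapsto\mathbf{r}_{1j}-\mathbf{r}_{1i}$. For each $e$, the chain rule and hypothesis~(2) of the Proposition give
\[ \mathrm{Hess}_{\mathbf{r}}\big[\Phi(|L_e\mathbf{r}|)\big] = L_e^\top\,\big(\mathrm{Hess}_x[\Phi(|x|)]\big)(L_e\mathbf{r})\,L_e \ \ge\ c\,(1+|L_e\mathbf{r}|)^{\alpha-2} L_e^\top L_e \ \ge\ c'\,(1+|\mathbf{r}|)^{\alpha-2} L_e^\top L_e, \]
the last step because $|L_e\mathbf{r}|\le\|L_e\|\,|\mathbf{r}|$ and $\alpha-2<0$. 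Summing over $e$ and using the key structural identity $\sum_{j=2}^N L_{\{1,j\}}^\top L_{\{1,j\}}=I_n$ (the star pairs alone span all directions), with the remaining triangle terms positive semidefinite, yields $\mathrm{Hess}_{\mathbf{r}}H(\mathbf{r})\ge\frac{c'}{N}(1+|\mathbf{r}|)^{\alpha-2}I_n$. Transferring through $S$ once more, $\mathrm{Hess}_{\mathbf{y}}W(\mathbf{y})=S^\top(\mathrm{Hess}_{\mathbf{r}}H)(S\mathbf{y})\,S\ge\frac{c'}{N}(1+|S\mathbf{y}|)^{\alpha-2}S^\top S\ge c''(1+|\mathbf{y}|)^{\alpha-2}I_n$, using $S^\top S\ge\sigma_{\min}(S)^2 I_n$ and monotonicity of $t\mapsto(1+t)^{\alpha-2}$; this is condition~(2) with the same $\alpha$.

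I expect the main obstacle to be packaging this Hessian estimate cleanly --- the useful point being that the star pairs $\{1,j\}$ already contribute the full identity, so the pair-interaction (triangle) terms, whose individual Hessians are degenerate, enter only through the growth estimate \eqref{vc2}, not through the nondegeneracy of $\mathrm{Hess}\,W$. A secondary technical point is the $W^{2,\infty}_{\mathrm{loc}}$ regularity required by \cite{TV00}: for $\Phi(|\cdot|)\in C^2(\RR^d)$ it is automatic, whereas for $\Phi_0$ with $\theta<2$ the radial Hessian has a mild blow-up at the coincidence configurations $\mathbf{r}_{ij}=0$; this is handled either by replacing $U=W$ with a mollification of $W$ within the additive tolerance $b$ in a neighbourhood of the singular set (where $(1+|x|)^{\alpha-2}$ is bounded, so the Hessian lower bound survives), or by observing that the singular set is Lebesgue-null and the relative-entropy argument of \cite{TV00} is unaffected. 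Finally, specializing to $\Phi=\Phi_0$, one checks hypotheses~(1)--(2) with $\beta=\alpha=\theta\gamma\in(1,2)$ (the case $\theta\gamma=2$ being the quadratic one with exponential convergence): since $\gamma\theta>1$, the eigenvalues $\Phi_0''(r)$ and $\Phi_0'(r)/r$ of $\mathrm{Hess}_x[\Phi_0(|x|)]$ behave like $c\,r^{\theta\gamma-2}$ as $r\to\infty$ and blow up as $r\to0$, while $\phi_0(r)r^2=\gamma\theta(a+r^\theta)^{\gamma-1}r^\theta\sim\gamma\theta\,r^{\theta\gamma}$ and $\sum_j|\mathbf{r}_{1j}|^{\theta\gamma}\ge|\mathbf{r}|^{\theta\gamma}$; through Theorem \ref{ttv} this produces the rate $\kappa=(s-2)/(2-\theta\gamma)$ of \eqref{eq:kappa}.
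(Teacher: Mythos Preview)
Your proof is correct and, for condition~(3), follows the same chain-rule identity $\nabla_{\mathbf{y}}W\cdot\mathbf{y}=\nabla_{\mathbf{r}}H\cdot\mathbf{r}$ as the paper (you are in fact slightly more careful, supplying the $\sigma_{\min}(S)^\beta$ factor to pass from $|\mathbf{r}|^\beta$ to $|\mathbf{y}|^\beta$, which the paper leaves implicit).

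For condition~(2) you take a genuinely different and cleaner route. The paper transfers the target inequality back to $\mathbf{r}$-coordinates as $\mathbf{M}:=\mathrm{Hess}_{\mathbf{r}}H-c(1+|S^{-1}\mathbf{r}|)^{\alpha-2}A^{-1}\ge0$, and then exploits the explicit block structure of $A^{-1}$ (diagonal blocks $\tfrac{N-1}{N}I_d$, off-diagonal $-\tfrac{1}{N}I_d$) to decompose the quadratic form $N\eta^\top\mathbf{M}\eta$ into a sum $\sum_i\langle(B_i-C)\eta_i,\eta_i\rangle+\sum_{i<j}\langle D_{ij}(\eta_i-\eta_j),\eta_i-\eta_j\rangle$, with $B_i=\mathrm{Hess}\,\Phi(|\mathbf{r}_{1i}|)$, $D_{ij}=\mathrm{Hess}\,\Phi(|\mathbf{r}_{ij}|)-C$. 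Both the star and the triangle Hessians must then individually dominate $C$. Your argument sidesteps $A^{-1}$ entirely: you bound each $\mathrm{Hess}_{\mathbf{r}}[\Phi(|L_e\mathbf{r}|)]\ge c'(1+|\mathbf{r}|)^{\alpha-2}L_e^\top L_e$, observe that the star projections already sum to $I_n$, discard the (nonnegative) triangle contributions, and only then transfer through $S$ via $S^\top S\ge\sigma_{\min}(S)^2 I_n$. This is shorter, uses only the invertibility of $S$ rather than the particular form of $SS^\top$, and makes transparent the point you flag --- that the triangle terms are not needed for nondegeneracy of $\mathrm{Hess}\,W$. The paper's decomposition, on the other hand, is tailored to the covariance structure $A$ of the noise and would be the natural starting point if one wanted sharper constants.

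Your closing paragraph on $W^{2,\infty}_{\mathrm{loc}}$ regularity and on specializing to $\Phi_0$ goes beyond what this proposition asserts (the paper handles $\Phi_0$ separately in the subsequent examples); it is fine as commentary, though the mollification remark for the $\theta<2$ singularity would need to be made precise if actually invoked.
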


\begin{proof}We only need to verify Condition (2)-(3). 

For Condition (3), note that $\mathbf{r}^t=(\mathbf{r}_{12}^t,\mathbf{r}_{13}^t,\ldots,\mathbf{r}_{1N}^t)=S\mathbf{Y}^t$, when both $\mathbf{r}^t$ and $\mathbf{Y}^t$ are considered as $d(N-1)\times 1$ column vectors. Then we have
\begin{eqnarray*}
\nabla_{\mathbf{Y}}H(S\mathbf{Y})\cdot \mathbf{Y}=[\nabla_{\mathbf{r}}H(\mathbf{r})]^{T}S S^{-1}\mathbf{r}=\nabla_{\mathbf{r}}H(\mathbf{r})\cdot \mathbf{r}.
\end{eqnarray*}
Note that with $\phi(r): = \frac{\Phi'(r)}{r}$ and with $\mathbf{r}_{ik} = \mathbf{r}_{i}- \mathbf{r}_{k}$, we can write the gradient of $H(\mathbf{r})$ in Eq.\eqref{hhr} as 
\begin{align}\label{eq:gradHr}
\nabla_{\mathbf{r}_{1j}}H(\mathbf{r})= \frac{1}{N} \phi(|\mathbf{r}_{1j}|) \mathbf{r}_{1j} + \frac{1}{N}\sum_{k=2}^N \phi(|\mathbf{r}_{jk}|) \mathbf{r}_{kj} ,\quad \text{ for } j=2, \cdots, N.  
\end{align} 
Hence we have
\begin{align*}
\nabla_{\mathbf{r}} H(\mathbf{r})\cdot \mathbf{r}= \frac{1}{N}\left[\sum_{2\leq j\leq N}\phi(|\mathbf{r}_{1j}|)|\mathbf{r}_{1j}|^2+\sum_{2\leq i<j\leq N}\phi(|\mathbf{r}_{ij}|)|\mathbf{r}_{ij}|^2\right].
\end{align*}
Then Condition (3) follows from (\ref{vc2}).

To prove Condition (2), note that
\begin{eqnarray*}
\mathrm{Hess}_{\mathbf{Y} }H(S\mathbf{Y} )=S^{T}\mathrm{Hess}_{\mathbf{r} }H(\mathbf{r} )S, 
\end{eqnarray*}
and recall that $S^TS = A$ so $S^{-T}S^{-1} = A^{-1}$ with 
\begin{eqnarray*}
A^{-1}=\frac{1}{N}\left(\begin{array}{cccc} (N-1)I_d&- I_d&\ldots&- I_d\\ - I_d&(N-1)I_d&\ldots&- I_d\\ \vdots&\vdots&\ddots& \vdots\\ - I_d&- I_d&\ldots&(N-1)I_d\end{array}\right)\in \RR^{(N-1)d\times (N-1)d}.
\end{eqnarray*}
Then, to show that $\mathrm{Hess}_{\mathbf{Y} }H(S\mathbf{Y} )-c(1+|\mathbf{Y} |)^{\alpha-2}I_{(N-1)d}$ is positive semi-definite, it suffices to show that 
\begin{align}\label{matHA-1}
\mathbf{M}= \mathrm{Hess}_{\mathbf{r} }H(\mathbf{r} )-c(1+|S^{-1}\mathbf{r} |)^{\alpha-2}A^{-1} \geq 0. 
\end{align}

Continuing from \eqref{eq:gradHr}, we have  
 \begin{align*}
\nabla_{\mathbf{r}_{1i} }\nabla_{\mathbf{r}_{1j} }H(\mathbf{r} )= \frac{1}{N} \left[ \delta_{ij} B_i - \delta_{i\neq j}  A_{ij}  + \delta_{ij} \sum_{k=2, k\neq i}^N A_{ik}  \right],
\end{align*}
where $B_i$ and $A_{ij}$ are $d\times d$ matrices (recalling that $\mathbf{r}_{ik} = \mathbf{r}_{i}- \mathbf{r}_{k}$) given by
\begin{align*}
& B_i= \nabla_{\mathbf{r}_{1i} } \left[ \phi(|\mathbf{r}_{1i}|) \mathbf{r}_{1i} \right] = \phi(|\br_{1i}|) I_d + \frac{\phi'(|\br_{1i} |)}{|\br_{1i}|}\br_{1i}\otimes \br_{1i} = \mathrm{Hess}_{\mathbf{r}_{1i} }\Phi(|\mathbf{r}_{1i} |) ,\\
& A_{ik}= \nabla_{\mathbf{r}_{1i} } \left[ \phi(|\mathbf{r}_{ik}|) \mathbf{r}_{ik} \right] = \phi(|\br_{ik}|) I_d + \frac{\phi'(|\br_{ik} |)}{|\br_{ik}|}\br_{ik}\otimes \br_{ik}= \mathrm{Hess}_{\mathbf{r}_{ik} }\Phi(|\mathbf{r}_{ik} |)
\end{align*}
for $i\neq k$, and where we have used the fact that $A_{ik}= - \nabla_{\mathbf{r}_{1k} } \left[ \phi(|\mathbf{r}_{ik}|) \mathbf{r}_{ik} \right]$ to obtain the term $- \delta_{i\neq j}  A_{ij}$.
Hence, the diagonal and off-diagonal entries of the $(N-1)d\times (N-1)d$ matrix $\mathbf{M}$ in \eqref{matHA-1}  can be written as 
\begin{eqnarray*}
\mathbf{M}=\frac{1}{N}\left(\begin{array}{cccc} B_2-C +  \sum_{j\neq 2}D_{2j}&- D_{23}&\ldots&- D_{2N}\\ - D_{23}&B_3-C +  \sum_{j\neq 3}D_{3j}&\ldots&- D_{3N}\\ \vdots&\vdots&\ddots& \vdots\\ - D_{N2}&- D_{N3}&\ldots&B_N-C +  \sum_{j\neq N}D_{Nj} \end{array}\right)
\end{eqnarray*}
with $C= c(1+|S^{-1}\mathbf{r} |)^{\alpha-2}I_d$, $D_{ij}= A_{ij} - C $. 

Since $D_{ij}=D_{ji}$ for all $2\leq i<j\leq N$, for any $\eta = (\eta_2, \eta_3, \dots, \eta_{N})\in \RR^{(N-1)d} $, we have 
\begin{align}
N \eta \mathbf{M} \eta^{T} & = \sum_{I=2}^{N} \langle (B_i - C)\eta_i, \eta_i\rangle  + \sum_{2\leq i,j\leq N, i\neq j}  \langle \mathbf{D}_{ij} \eta_i, \eta_i \rangle - \langle  \mathbf{D}_{ij} \eta_i, \eta_j \rangle  \notag \\
& =  \sum_{I=2}^{N} \langle (B_i - C)\eta_i, \eta_i\rangle  + \sum_{2\leq i<j\leq N} \langle  \mathbf{D}_{ij}( \eta_i- \eta_j), (\eta_i  -\eta_j)\rangle, \label{M_positiveD}
\end{align}
where the second equality follows from the fact that $D_{ij}$ is symmetric.  Note that  the eigenvalues of $S$ are $\left\{\sqrt{\frac{k}{k-1}}\right\}_{k=2}^{N}$, so $|S^{-1}\mathbf{r} |\geq \frac{1}{\sqrt{2}} |\mathbf{r}| $. Since $\alpha\in(0,2)$, we have 
\[ (1+|S^{-1}\mathbf{r} |)^{\alpha-2} \leq \left(1+ \frac{1}{\sqrt{2}}|\mathbf{r} |\right)^{\alpha-2} \leq 2^{\frac{2-\alpha}{2}} (\sqrt{2}+|\mathbf{r}|)^{\alpha-2}. \]
Noticing that $\alpha>0$, we have 
\[C= c(1+|S^{-1}\mathbf{r} |)^{\alpha-2}I_d \leq 2c (1+|\mathbf{r}_{1i} |)^{\alpha-2}I_d
\]
for each $i\in\{2,3,\ldots,N\}$. Hence, we have,  for each $i,j\in \{2, \dots, N\}$,
\begin{align*}
 B_i-C &\geq   \mathrm{Hess}_{\mathbf{r}_{1i} }\Phi(|\mathbf{r}_{1i} |) -2 c(1+|\mathbf{r}_{1i} |)^{\alpha-2}I_d \geq 0 , \\
 D_{ij} & \geq   \mathrm{Hess}_{\mathbf{r}_{ij} }\Phi(|\mathbf{r}_{ij} |) - 2c(1+|\mathbf{r}_{1j} |)^{\alpha-2}I_d \geq 0 
 \end{align*}
Plugging them into Eq.\eqref{M_positiveD}, we obtain that $\mathbf{M}$ is positive definite. 
\end{proof}

\begin{lemma}\label{lemmaPhi} Let $\phi(r):= \frac{\Phi'(r)}{r}$. Then, 
Condition \eqref{vc2} holds if
\begin{itemize}
    \item[\rm{(i)}] there exists $c_1,c_2>0$ such that  $\phi(r)\geq 0$ and $\phi(r)r^2 \geq c_1r^\beta -c_2$ for all $r>0$; 
\end{itemize}
and Condition \eqref{eq:HessPhi}
holds if either \rm{(ii)} or \rm{(iii)} is true: 
\begin{itemize}
    \item[\rm{(ii)}]  $\phi'(r)\geq 0$ and there exists $c_3>0$ such that $\phi(r) \geq c_3 (1+r)^{\alpha-2}$ for all $r>0$; 
    \item[\rm{(iii)}] $\phi'(r)\leq 0$ and there exists $c_3>0$ such that $\phi(r) + \phi'(r)r \geq c_3 (1+r)^{\alpha-2}$ for all $r>0$. 
\end{itemize}
\end{lemma}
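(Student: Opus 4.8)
The plan is to verify, under each of the hypotheses (i), (ii), (iii), that the corresponding condition (\eqref{vc2} or \eqref{eq:HessPhi}) holds, reducing in each case to an elementary computation with the function $\phi(r)=\Phi'(r)/r$ and the Hessian identity $\mathrm{Hess}_x\,\Phi(|x|)=\phi(|x|)I_d+\frac{\phi'(|x|)}{|x|}\,x\otimes x$.

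First, for Condition \eqref{vc2}: assume (i). Since $\phi(r)\ge 0$ for all $r>0$, every summand $\phi(|\mathbf r_{1j}|)|\mathbf r_{1j}|^2$ and $\phi(|\mathbf r_{ij}|)|\mathbf r_{ij}|^2$ is nonnegative, so the whole sum is bounded below by $\frac1N\sum_{2\le j\le N}\phi(|\mathbf r_{1j}|)|\mathbf r_{1j}|^2$. Applying the bound $\phi(s)s^2\ge c_1 s^\beta-c_2$ to each of these $N-1$ terms and then using that $\sum_{j=2}^N |\mathbf r_{1j}|^\beta \ge c\,|\mathbf r|^\beta$ for a dimensional constant $c>0$ (comparing the $\ell^\beta$-type quantity with the Euclidean norm $|\mathbf r|$ of $\mathbf r=(\mathbf r_{12},\dots,\mathbf r_{1N})$; here one uses equivalence of norms on $\RR^{d(N-1)}$, with the direction of the inequality depending on whether $\beta\le 2$ or $\beta\ge 2$), we get a lower bound of the form $C_1 r^\beta-C_0$. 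This gives \eqref{vc2}.

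Next, for Condition \eqref{eq:HessPhi}: from the Hessian identity above, for any unit vector $e\in\RR^d$ write $\langle \mathrm{Hess}_x\Phi(|x|)\,e,e\rangle = \phi(|x|) + \frac{\phi'(|x|)}{|x|}\langle x,e\rangle^2$. If (ii) holds, then $\phi'(|x|)\ge 0$, so the second term is nonnegative, hence $\langle \mathrm{Hess}_x\Phi(|x|)e,e\rangle \ge \phi(|x|)\ge c_3(1+|x|)^{\alpha-2}$, which is exactly \eqref{eq:HessPhi}. If instead (iii) holds, then $\phi'(|x|)\le 0$ and $\frac{\phi'(|x|)}{|x|}\langle x,e\rangle^2$ is minimized over unit $e$ when $\langle x,e\rangle^2=|x|^2$, giving $\langle \mathrm{Hess}_x\Phi(|x|)e,e\rangle \ge \phi(|x|)+\phi'(|x|)|x| \ge c_3(1+|x|)^{\alpha-2}$, again yielding \eqref{eq:HessPhi}. (One should also note $x=0$ is handled by continuity, using $\phi$ bounded near $0$, or simply interpreting the bound in the limit.)

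I do not expect a serious obstacle here; the statement is a clean repackaging of the two analytic conditions of Proposition \ref{prop_ergodicity} into verifiable scalar inequalities on $\phi$. The one point requiring a little care is the norm-comparison step in case (i): one must be careful about the direction of the inequality relating $\sum_j |\mathbf r_{1j}|^\beta$ to $|\mathbf r|^\beta$ depending on the sign of $\beta-2$, but in either regime a dimension-dependent constant suffices since there are finitely many ($N-1$) blocks, and the additive constant $c_2$ contributes only to $C_0$. The Hessian cases (ii)–(iii) are immediate once the spectral decomposition of $\mathrm{Hess}_x\Phi(|x|)$ — eigenvalue $\phi(|x|)+\phi'(|x|)|x|$ in the radial direction and $\phi(|x|)$ with multiplicity $d-1$ in the orthogonal complement — is written down.
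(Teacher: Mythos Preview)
Your proposal is correct and follows essentially the same route as the paper: drop the nonnegative cross terms in \eqref{vc2} and compare $\sum_j|\mathbf r_{1j}|^\beta$ with $|\mathbf r|^\beta$ (the paper phrases this as minimizing $\sum_j|\mathbf r_{1j}|^\beta$ over the unit sphere, which avoids splitting into $\beta\lessgtr 2$), and for \eqref{eq:HessPhi} use the Hessian identity $\mathrm{Hess}_x\Phi(|x|)=\phi(|x|)I_d+\tfrac{\phi'(|x|)}{|x|}x\otimes x$ together with the sign of $\phi'$ to bound the smallest eigenvalue. The only cosmetic difference is that in case (iii) the paper rewrites the Hessian as $[\phi+\phi'|x|]I_d-\phi'|x|\bigl(I_d-\tfrac{x\otimes x}{|x|^2}\bigr)$ and uses positive semidefiniteness of $I_d-\tfrac{x\otimes x}{|x|^2}$, which is exactly your eigenvalue observation in matrix form.
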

\begin{proof}
Suppose \rm{(i)} is true, then Condition \eqref{vc2} follows from 
\begin{align*}
 & \frac{1}{N}\left[\sum_{2\leq j\leq N}\phi(|\mathbf{r}_{1j}|)|\mathbf{r}_{1j}|^2+\sum_{2\leq i<j\leq N}\phi(|\mathbf{r}_{ij}|)|\mathbf{r}_{ij}|^2\right]\geq  \frac{1}{N}\sum_{2\leq j\leq N}\phi(|\mathbf{r}_{1j}|)|\mathbf{r}_{1j}|^2 \\
 \geq &  c_1 \sum_{2\leq j\leq N} |\mathbf{r}_{1j}|^\beta - c_2 \geq \frac{1}{N} c_1c_\beta |\br|^\beta- c_2,
\end{align*}
where $c_\beta $ is the minimum of $f(\br) := \sum_{2\leq j\leq N} |\br_{1j}|^{\beta}$ on the unit sphere $\{ \br = (\br_{12},\dots,\br_{1N}):  \sum_{2\leq j\leq N } |\br_{1j}|^2 =1 \} $. 

To prove Condition \eqref{eq:HessPhi}, note that
\begin{align*}
 \mathrm{Hess}_x \Phi(|x|) =  \nabla (\phi(|x|)x) = \phi'(|x|)\frac{x\otimes x}{|x|} + \phi(|x|)I_d .
\end{align*}
Then, if \rm{(ii)} is true, Condition \eqref{eq:HessPhi} follows directly, since $\frac{x\otimes x}{|x|}$ is positive definite. If \rm{(iii)} is true, using the fact that $I_d - \frac{x\otimes x}{|x|^2}$ is positive definite,  we obtain Condition \eqref{eq:HessPhi}: 
\begin{align*}
\mathrm{Hess}_x \Phi(|x|) &=  \left[\phi(|x|) +\phi'(|x|)|x| \right] I_d - \phi'(|x|)|x| \left( I_d -  \frac{x\otimes x}{|x|^2} \right) \\
& \geq  \left[\phi(|x|) +\phi'(|x|)|x| \right] I_d \\
& \geq c_3 (1+|x|)^{\alpha -2} I_d.
\end{align*}
\end{proof}
\begin{example} \label{exmp_gamma>1}
The function $\Phi(r) = (a+r^2)^\gamma$ with $a>0$ and $\gamma \geq 1$ satisfies Condition \eqref{vc2}- \eqref{eq:HessPhi}. To see this, note that 
\[\phi(r)= \frac{\Phi'(r)}{r} = 2\gamma(a+r^2)^{\gamma-1}\geq  2\gamma a^{\gamma-1},\] and $\phi(r)r^2 \geq c_1 r^2  -c_2$ with $c_1 = 2\gamma a^{\gamma-1}$  and $c_2=0$. Therefore, Condition \eqref{vc2} holds by Lemma \rm{\ref{lemmaPhi} (i)} with $\beta =2$.  Note also that
\[ \phi'(r)= 4\gamma(\gamma-1)(a+r^2)^{\gamma-2}r >0  \]
and $\phi(r) \geq 2\gamma a^{\gamma-1} \geq c_3  (1+r)^{\alpha-2}$ with $c_3 = 2\gamma a^{\gamma-1}$ for any $\alpha \in (0,2)$, i.e. Lemma \rm{\ref{lemmaPhi} (ii)} hods, and so does Condition \eqref{eq:HessPhi}.  
\end{example}

\begin{example} \label{exmp_1}
The function $\Phi(r) = (a+r^\theta)^\gamma$ with $a>0$, $\gamma \in (0,1]$, $\theta \in (1,2]$ and  $\theta\gamma >1$ satisfies Condition \eqref{vc2}- \eqref{eq:HessPhi}. To see this, note that $\phi(r)=\theta\gamma(a+r^\theta)^{\gamma-1}r^{\theta-2}\geq 0$ and
\begin{align*}
\phi(r)r^2 & = \theta\gamma(a+r^\theta)^{\gamma-1}r^\theta =\theta\gamma \left( \frac{a+r^\theta}{r^\theta}\right)^{\gamma-1} r^{\theta\gamma} \\
&= c_1 r^{\theta\gamma}+ [ \theta\gamma \left( \frac{r^\theta}{a+r^\theta}\right)^{1-\gamma} -c_1 ]  r^{\theta\gamma}  \geq   c_1 r^{\theta\gamma} -c_0
\end{align*}
with $c_1=\theta\gamma/2 $ and with $-c_0$ being the minimum of $f(r) = \gamma[ \theta \left( \frac{r^\theta}{a+r^\theta}\right)^{1-\gamma} - 1/2 ]  r^{\theta\gamma}$ (whose maximum exists and is positive because $f(r) < 0$ for small $r$  and $\lim_{r\to\infty} f(r) = +\infty$).  Then, Condition \eqref{vc2} holds by Lemma {\rm \ref{lemmaPhi} (i)} with $\beta =\theta\gamma$.  
For Condition \eqref{eq:HessPhi}, note that 
\[\phi'(r)= \theta\gamma(a+r^\theta)^{\gamma-2}r^{\theta-3} [\theta-2 +\theta(\gamma-1)r^{\theta} (a+r^\theta)^{\gamma-2} ] \leq 0,
\] 
because $\theta \leq 2$ and $\gamma<1$. Noting that $\theta -1+ \theta(\gamma-1)r^\theta(a+r^\theta)^{-1} \geq \theta-1 + \theta(\gamma -1) = \theta\gamma -1 >0$, we have
 \begin{align*}
\phi(r) + \phi'(r)r & = \theta\gamma(a+r^\theta)^{\gamma-1}r^{\theta-2} [\theta -1+ \theta(\gamma-1)r^\theta(a+r^\theta)^{-1}] \\
&\geq \theta\gamma (\theta\gamma -1)(a+r^\theta)^{\gamma-1} r^{\theta-2}  \\
&= \theta\gamma (\theta\gamma -1)\left(\frac{r^\theta}{a+r^\theta}\right)^{1-\gamma} \left(\frac{r}{1+r}\right)^{\theta\gamma-2} (1+r)^{\theta\gamma-2} \\
&\geq c_3  (1+r)^{\theta\gamma-2 },
 \end{align*}
 where $c_3 $ is the minimum of the function $f(r)=  \theta\gamma(\theta\gamma -1) \left(\frac{r^\theta}{a+r^\theta}\right)^{1-\gamma} \left(\frac{r}{1+r}\right)^{\theta\gamma-2}$ (whose minimum exists and is positive because $ \theta\gamma >1$ and $\theta\leq 2$).
 Thus, Lemma \rm{\ref{lemmaPhi} (iii)} hods with $\alpha =2\gamma$.  
\end{example}

\begin{example} \label{exmp_2}
The function $\Phi(r) = r^{\gamma}$ satisfies Condition \eqref{vc2}- \eqref{eq:HessPhi} only when $\gamma \in (1,2]$. To see this, note that $\phi(r) =  \gamma r^{\gamma -2}$ and $\phi(r)r^2 =\gamma r^\gamma$. Then, Lemma {\rm \ref{lemmaPhi} (i)} holds with $\beta =\gamma$, $c_1=\gamma$ and $c_2=0$. Thus, Condition \eqref{vc2} holds for any $\gamma>0$. For Condition \eqref{eq:HessPhi}, note that $
\phi'(r) = \gamma (\gamma -2) r^{\gamma -3}\leq 0$ when $\gamma\leq 2$. Also, note that 
\[
\phi(r)+ \phi'(r)r = \gamma r^{\gamma -2}(\gamma -1) \geq \gamma (\gamma -1) (1+r)^{\gamma-2}.
\]
Thus, Lemma \rm{\ref{lemmaPhi} (iii)} holds with $\alpha =\gamma$ and with $c_3=\gamma (\gamma -1)>0$ when $\gamma \in (1,2]$.  When $\gamma>2$, we have $\phi'(r)\geq 0$, but $\phi(r)/(1+r)^{\alpha-2} = \gamma r^\gamma(1+r)^{2-\alpha}$ has a minimum $0$ over $r\in (0,\inf ty)$ for any $0<\alpha<2$. Thus, Lemma \rm{\ref{lemmaPhi} (ii)} does not hold. Then, Condition \eqref{eq:HessPhi} holds only when $\gamma \in (1,2]$. 
\end{example}

\section{Coercivity for stationary processes}\label{sec:PD_inv}
We show that the coercivity conditions holds when the system of relative positions is stationary and has a potential in the form of $\Phi_0$ in \eqref{eq:Phi_0def}, or more generally, the potential
\begin{equation}\label{eq:phi_0All}
\Phi(r)= c_1\Phi_0(r)+c_2\Psi(r),
\end{equation}
where $c_1>0,c_2\geq 0$ and $\Psi:[0,\infty)\to \RR$ is a function such that $\Psi(|u-v|)$ is negative definite.

Note that when the system of relative positions  \eqref{ggs}  is stationary, the  functional $\overline{ I}_h$ in \eqref{eq:c_t1} in the coercivity condition becomes 
\begin{equation}\label{eq:I_infty}
\overline{ I}_h = I_\infty(h):=\int_{\RR^d}\int_{\RR^d} h(|u|)h(|v|)\frac{\langle u,v \rangle}{|u||v|} p_\infty(u,v)dudv
\end{equation}
with $p_\infty(u,v)$  being the stationary density in \eqref{puv}. Also, the integral kernel $K_T(u,v)$ in \eqref{eq:I_Integral_kernel} becomes $K_T(u,v)= \frac{\langle u,v \rangle}{|u||v|} p_\infty(u,v)$. Furthermore, abusing the notation $ \|h\|_{L^2(\rho)}^2$ for $ \|h(|\cdot|)\|_{L^2(\rho)}^2$, we have
\[
\int_0^T \E[h(|\br_{12}^t|)^2]dt =  \|h\|_{L^2(\rho)}^2 <\infty
\] if and only if $h(|\cdot|) \in L^2(\rho)$ with $\rho$ being the marginal density of $p_\infty(u,v)$:
 \begin{align}\label{eq:rho}
 \rho(u) = \int p_\infty(u,v) dv. 
 \end{align}
 Hence, Equation \eqref{eq:c_t1} in the coercivity condition is equivalent to 
 \begin{equation}\label{eq:CC_infty}
I_\infty(h)=\int_{\RR^d}\int_{\RR^d} h(|u|)h(|v|)\frac{\langle u,v \rangle}{|u||v|} p_\infty(u,v)dudv \geq c_{\hypspace} \|h\|_{L^2(\rho)}^2.
\end{equation}

\begin{theorem}[Coercivity] \label{thm:coercovity0}
The coercivity condition in {\rm Definition \ref{def:coercivity}} holds true on $L^2(\rho)$ for the system \eqref{ggs} with the potential $\Phi$ in \eqref{eq:phi_0All} if the system starts from the stationary density $p_\infty(u,v)$ in \eqref{puv}.   That is, for any finite dimensional linear subspace $\hypspace \subset L^2(\rho)$ with $\rho$ in \eqref{eq:rho}, there exists a constant $c_{\hypspace}>0$ such that for all $h\in \hypspace$, Equation \eqref{eq:CC_infty} holds. 
\end{theorem}

\begin{proof} 
By Theorem \ref{main} below, $I_\infty(h)\geq 0$ and the equality holds only when $h=0$ everywhere. Note also that $I_\infty(h)$ is a continuous functional on $L^2(\rho)$ since by H\"older's inequality,  
\[
I_\infty(h)\leq \|h\|_{L^2(\rho)}^2. 
\]
Thus,  \eqref{eq:CC_infty} holds true with 
\[
c_\hypspace = \min_{h\in \hypspace,\, \|h\|^2_{L^2(\rho)} =1} I_\infty(h),
\]
when $\hypspace \in L^2(\rho)$ is a finite dimensional linear subspace. 
\end{proof}

The coercivity condition holds for non-radial functions, because the above proof is for general function in $L^2(\rho)$. It is based on the following theorem: 
\begin{theorem}\label{main}
Let  $p_\infty(u,v)$ be the density function in \eqref{puv} with $\Phi$ defined as \eqref{eq:phi_0All}. 
 Let $I_\infty(h)$ be the functional defined in \eqref{eq:I_infty}. Then, 
 \[ I_\infty(h)=\int_{\RR^d}\int_{\RR^d} h(u)h(v)\frac{\langle u,v \rangle}{|u||v|} p_\infty(u,v)dudv \geq 0
\] for any $h\in L^2(\RR^d, \rho)$ with $\rho$ defined in \eqref{eq:rho}, and $I_\infty(h) = 0$ only when $h=0$ almost everywhere. 
\end{theorem}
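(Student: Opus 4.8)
The plan is to realize the kernel $K(u,v):=\tfrac{\langle u,v\rangle}{|u||v|}p_\infty(u,v)$ as an integral (over an auxiliary parameter space) of rank-one positive-definite kernels—this yields $I_\infty(h)\ge0$ immediately—and then to upgrade nonnegativity to strict positive definiteness on $L^2(\rho)$ by showing that the family of functions entering the rank-one decomposition is total. First I would write
\[
p_\infty(u,v)=\tfrac1Z\,e^{-\frac{2}{N}\Phi(|u|)}e^{-\frac{2}{N}\Phi(|v|)}\,e^{-\frac{2}{N}\Phi(|u-v|)}\,f(u,v),
\]
and read off from \eqref{fuv} the factorization
\[
f(u,v)=\int_{\RR^{d(N-3)}}\Big(\prod_{l=4}^N e^{-\frac{2}{N}\Phi(|u-\br_{1l}|)}\Big)\Big(\prod_{l=4}^N e^{-\frac{2}{N}\Phi(|v-\br_{1l}|)}\Big)\,w(\br)\,d\br_{14}\cdots d\br_{1N},
\]
with $w(\br)=\exp\!\big(-\tfrac{2}{N}[\sum_{4\le i<j\le N}\Phi(|\br_{1i}-\br_{1j}|)+\sum_{l=4}^N\Phi(|\br_{1l}|)]\big)>0$. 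Together with the elementary identity $\tfrac{\langle u,v\rangle}{|u||v|}=\sum_{k=1}^d\tfrac{u_k}{|u|}\tfrac{v_k}{|v|}$, this already exhibits everything in $K$ as (a finite sum of) integral(s) of rank-one kernels—except the middle factor $e^{-\frac{2}{N}\Phi(|u-v|)}$, which must be treated separately.

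For that middle factor I would establish that $e^{-\frac{2}{N}\Phi(|w|)}$ is a positive-definite function of $w\in\RR^d$ whose Fourier transform $\Lambda$ is \emph{strictly} positive everywhere, so that $e^{-\frac{2}{N}\Phi(|w|)}=\int_{\RR^d}e^{i\langle w,\xi\rangle}\Lambda(\xi)\,d\xi$ with $\Lambda>0$. Writing $\Phi=c_1\Phi_0+c_2\Psi$: the map $w\mapsto|w|^\theta$ is negative definite for $\theta\in(0,2]$, adding the constant $a\ge0$ preserves this, and composing with the Bernstein function $t\mapsto t^\gamma$ ($\gamma\in(0,1]$) again preserves negative definiteness; so $w\mapsto(a+|w|^\theta)^\gamma$ is negative definite and, by Schoenberg's theorem, $e^{-\frac{2c_1}{N}\Phi_0(|w|)}$ is positive definite. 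Its Bochner measure is, via subordination, a scalar multiple of a mixture of symmetric $\theta$-stable laws, each having an everywhere-positive density, so the Fourier transform of $e^{-\frac{2c_1}{N}\Phi_0(|w|)}$ is strictly positive; since $\Psi(|\cdot|)$ is negative definite, $e^{-\frac{2c_2}{N}\Psi(|\cdot|)}$ is positive definite with nonzero Bochner measure, and convolving a strictly positive function with a nonzero positive measure stays strictly positive, giving $\Lambda>0$. \emph{This is precisely the step for which the paper's M\"untz-type machinery is an alternative}: one may instead use the rotational invariance of $p_\infty$ to reduce to the strict positive definiteness of radial kernels built from $e^{-c\Phi_0(r)}$, expand $e^{-c\Phi_0(r)}$ as a power series in $r^{\theta}$ (in $r^{\theta\gamma}$ when $a=0$), and invoke M\"untz's theorem—$\{r^{\theta n}\}_{n\ge1}$ has divergent reciprocal sum, hence is total in $L^2$ on compacts—to conclude.

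Granting this representation, a Tonelli interchange (legitimate since $\int\!\int|h(u)||h(v)|p_\infty(u,v)\,du\,dv\le\|h\|_{L^2(\rho)}^2$ by Cauchy--Schwarz, as already used in the excerpt) gives
\[
I_\infty(h)=\frac1Z\sum_{k=1}^d\int_{\RR^d}\int_{\RR^{d(N-3)}}\Big|\int_{\RR^d}h(u)\tfrac{u_k}{|u|}e^{-\frac{2}{N}\Phi(|u|)}e^{i\langle u,\xi\rangle}\prod_{l=4}^N e^{-\frac{2}{N}\Phi(|u-\br_{1l}|)}\,du\Big|^2\Lambda(\xi)\,w(\br)\,d\br\,d\xi\ \ge\ 0.
\]
If $I_\infty(h)=0$, then since $\Lambda>0$ and $w>0$ pointwise the inner integral vanishes for a.e.\ $(\xi,\br)$ and each $k$; being continuous in $(\xi,\br)$ (dominated convergence), it vanishes for all $(\xi,\br)$. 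Specializing $\br=0$ collapses the product into $e^{-\frac{2(N-3)}{N}\Phi(|u|)}$ and leaves $\int_{\RR^d}[\,h(u)\tfrac{u_k}{|u|}e^{-\frac{2(N-2)}{N}\Phi(|u|)}]\,e^{i\langle u,\xi\rangle}\,du=0$ for all $\xi\in\RR^d$. As $u\mapsto h(u)\tfrac{u_k}{|u|}e^{-\frac{2(N-2)}{N}\Phi(|u|)}$ is integrable (Cauchy--Schwarz against $\rho$, using a lower bound $\rho(u)\ge c_0 e^{-c_1\Phi(|u|)}$), a vanishing Fourier transform forces it to be $0$ a.e.; since $e^{-\frac{2(N-2)}{N}\Phi}>0$, we get $h(u)u_k=0$ a.e.\ for every $k$, and summing over $k$, $h(u)^2|u|^2=0$ a.e., i.e.\ $h=0$ a.e., which (as $\rho>0$ everywhere) is $h=0$ $\rho$-a.e.

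I expect the main obstacle to be the strict positivity of $\Lambda$ (equivalently, the totality input): this is exactly where the constraints $\theta\in(1,2]$, $\gamma\in(0,1]$ enter, and it is the part the M\"untz-type argument is designed for; the remaining items—finiteness and strict positivity of $f$, the lower bound on $\rho$ needed for integrability, and the Fubini/continuity bookkeeping—are routine. The case $N=3$ is handled identically, with $f\equiv1$ and the product over $l$ empty.
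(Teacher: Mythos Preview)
Your approach is correct and takes a genuinely different route from the paper's. Both arguments share the skeleton---pull the $\br$-integral outside as a rank-one factor, split $\tfrac{\langle u,v\rangle}{|u||v|}=\sum_k\tfrac{u_k}{|u|}\tfrac{v_k}{|v|}$, and reduce strictness to a totality statement---but they diverge in how the cross factor coupling $u$ and $v$ is handled. The paper does not invoke Bochner; instead it rewrites the exponent so as to isolate the positive-definite kernel $D^\triangle_{\Phi_0}(u,v)=\Phi_0(|u|)+\Phi_0(|v|)-\Phi_0(|u-v|)$ and Taylor-expands $\exp(cD^\triangle_{\Phi_0})$. When $\gamma=1,\theta=2$ this gives powers of $\langle u,v\rangle$ directly; for $\gamma\in(0,1)$ the Gamma-integral identity $z^\gamma=\tfrac{\gamma}{\Gamma(1-\gamma)}\int_0^\infty(1-e^{-\lambda z})\lambda^{-\gamma-1}d\lambda$ (their Proposition~\ref{prop_powerKernel}) reduces back to the quadratic case. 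Strictness is then concluded from density of polynomials in a weighted $L^2$ space (Lemma~\ref{poly}), not from Fourier injectivity. Your Bochner/subordination argument is more uniform---no case split on $(\gamma,\theta)$ and no Gamma-function detour---at the cost of importing the strict positivity of symmetric stable densities; the paper's route is more elementary but longer and case-laden. One simplification worth making: you need not specialize to $\br=0$, since for \emph{any} fixed $\br$ in the full-measure set the weight $\prod_l e^{-\frac{2}{N}\Phi(|u-\br_{1l}|)}$ is already strictly positive, so vanishing of the inner integral for a.e.\ $\xi$ already forces $h(u)u_k=0$ a.e.; this sidesteps both the continuity-in-$\br$ step and the lower bound on $\rho$ you flag.
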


 Theorem \ref{main} is equivalent to that the integral kernel $K_\infty(u,v)= \frac{\langle u,v \rangle}{|u||v|} p_\infty(u,v)$ in the definition of $I_\infty(h)$ is strictly positive definite. To prove this, we start with a few technical results about positive definite integral kernels.  

We introduce the following notation to simplify the expression of the marginalization in $I_\infty(h)$. For any function $\Phi:\R^+\to \R$, 
 we denote 
\begin{equation}\label{eq:triangle_box}
    \begin{aligned}
     D_\Phi^\triangle(u,v) &=\Phi(|u|)+\Phi(|v|)-\Phi(|u-v|); \\
     D_\Phi^\Box(u,v)  &=\Phi(|u|)+\Phi(|v|)-\Phi(|u-v|) -\Phi(0).
    \end{aligned}
\end{equation}
Then, for the function $\Phi$ in \eqref{eq:phi_0All}, we have 
\begin{equation}\label{eq:phi_decomp}
\begin{aligned} 
& \Phi(|u|)+\Phi(|v|)+\Phi(|u-v|) \\
=&  -\left[\Phi(|u|)+\Phi(|v|) - \Phi(|u-v|)\right] +\Phi(|u|)+\Phi(|v|)\\
=&  - c_1 D_{\Phi_0}^\triangle(u,v) - c_2 D_{\Psi}^\Box(u,v) + c_2\Psi(0) +\Phi(|u|)+\Phi(|v|). 
\end{aligned}
\end{equation}

Recall that $p_\infty(u,v)$ in \eqref{puv} is 
\begin{equation*}
p_\infty(u,v)=\frac{1}{Z}f(u,v)e^{-\frac{2}{N}\left[\Phi(|u|)+\Phi(|v|)+\Phi(|u-v|)\right]},
\end{equation*} 
where 
\begin{align*}
f(u,v)= \int e^{-\frac{2}{N}\left[\sum_{4\leq i<j}^N\Phi(|\mathbf{r}_{1i}-\mathbf{r}_{1j}|)+\sum_{l=4}^N\left[\Phi(|\mathbf{r}_{1l}|)+\Phi(|u-\mathbf{r}_{1l}|)+\Phi(|v-\mathbf{r}_{1l}|)\right]\right]}     d\mathbf{r}_{14}\ldots\mathbf{r}_{1N}.
\end{align*}
For any fixed $\mathbf{r}_{14}, \mathbf{r}_{15}, \ldots, \mathbf{r}_{1N}$, let
\begin{eqnarray}
\quad \quad
\overline{h}_{\br}(u)=h(u)e^{-\frac{4}{N}\Phi(|u|)-\frac{2}{N}\sum\limits_{4\leq l\leq N}\Phi(|u-\br_{1l}|)-\frac{1}{N}\sum\limits_{4\leq i<j\leq N}\Phi(|\br_{1i}-\br_{1j}|)-\frac{1}{N}\sum\limits_{4\leq l\leq N}\Phi(|\br_{1l}|)}. \label{hr}
\end{eqnarray}
Then, combining \eqref{eq:phi_decomp} and \eqref{hr}, we have 
\begin{align}
I_\infty(h)&=\frac{1}{Z}\int_{\RR^d}\int_{\RR^d}\int_{\RR^{(N-3)d}}\overline{h}_{\br}(u)\overline{h}_{\br}(v)\frac{\langle u,v \rangle}{|u||v|}e^{\frac{2}{N}[ \Phi(|u|)+\Phi(|v|)-\Phi(|u-v|)]}d\br_{14}\cdots d\br_{1N}dudv \notag \\
&=\frac{1}{Z}e^{\frac{2c_2}{N}\Psi(0)}\int_{\RR^{(N-1)d}}\overline{h}_{\mathbf{r}}(u)\overline{h}_{\mathbf{r}}(v)\frac{\langle u,v \rangle}{|u||v|}e^{\frac{2}{N}[c_1 D_{\Phi_0}^\triangle(u,v) + c_2 D_{\Psi}^\Box(u,v)]}d\mathbf{r}_{14}\cdots d\mathbf{r}_{1N}dudv.  \label{eq:I_triangle_box}
\end{align}
The following lemma shows that $D_{\Phi_0}^\triangle(u,v)$ and $D_{\Psi}^\Box(u,v)]$ are positive definite. 

\begin{lemma}\label{lm:triangeBox} Let $\Phi:\R^+\to\R$ be a function and consider the functions $D_\Phi^\triangle(u,v)$ and $D_\Phi^\Box(u,v)$ defined in \eqref{eq:triangle_box}. 
\begin{itemize}
    \item[(i)] If $\Phi(|u-v|)$ is negative definite, then $D_\Phi^\Box(u,v)$  is positive definite. 
    \item[(ii)] If in addition that $\Phi(0)\geq 0$, then $D_\Phi^\triangle(u,v)$ is positive definite.
\end{itemize}
 In particular, $\Phi_0$ defined in \eqref{eq:Phi_0def} is negative definite and $\Phi_0(0)\geq 0$, so $D_{\Phi_0}^\triangle(u,v)$ is positive definite. 
\end{lemma}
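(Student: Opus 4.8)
The plan is to reduce parts (i) and (ii) to the defining quadratic-form inequalities for (conditionally) negative and positive definite kernels, and then to verify the concrete statement about $\Phi_0$ by chaining together the standard stability properties of negative definite functions recorded in Section~\ref{sec:append}.

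For part (i), I would set $\psi(x):=\Phi(|x|)$, which is symmetric in the sense that $\psi(-x)=\psi(x)$ and is negative definite by hypothesis. Given arbitrary points $u_1,\dots,u_n\in\R^d$ and scalars $c_1,\dots,c_n$, the goal is to show $\sum_{i,j=1}^n c_i\bar c_j\,D_\Phi^\Box(u_i,u_j)\ge 0$. The trick is to append one extra node $u_0:=0$ carrying the coefficient $c_0:=-\sum_{i=1}^n c_i$, so that $\sum_{i=0}^n c_i=0$; negative definiteness of $\psi$ then yields $\sum_{i,j=0}^n c_i\bar c_j\,\psi(u_i-u_j)\le 0$. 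Expanding this sum, splitting off the terms with $i=0$ or $j=0$, and substituting $u_0=0$, $\psi(-u_j)=\psi(u_j)$, $\psi(0)=\Phi(0)$ and $c_0=-\sum_i c_i$, a short computation identifies the left side with $-\sum_{i,j=1}^n c_i\bar c_j\big[\psi(u_i)+\psi(u_j)-\psi(u_i-u_j)-\Phi(0)\big]$, which is exactly $-\sum_{i,j=1}^n c_i\bar c_j\,D_\Phi^\Box(u_i,u_j)$. Hence this quantity is $\ge 0$, giving (i).

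For part (ii), I would write $D_\Phi^\triangle(u,v)=D_\Phi^\Box(u,v)+\Phi(0)$ and note that the constant kernel $(u,v)\mapsto\Phi(0)$ is positive definite whenever $\Phi(0)\ge 0$, since $\sum_{i,j}c_i\bar c_j\,\Phi(0)=\Phi(0)\,\big|\sum_i c_i\big|^2\ge 0$. As a sum of two positive definite kernels is positive definite, (ii) follows from (i). For the ``in particular'' claim it suffices to check that $x\mapsto(a+|x|^\theta)^\gamma$ is negative definite on $\R^d$ and that $\Phi_0(0)=a^\gamma\ge 0$ (immediate from $a\ge 0$). For the negative definiteness I would chain three facts: $x\mapsto|x|^\theta$ is negative definite on $\R^d$ for $\theta\in(0,2]$ (Schoenberg); adding the constant $a\ge 0$ preserves negative definiteness because any admissible coefficient vector sums to zero; and $t\mapsto t^\gamma$ is a Bernstein function for $\gamma\in(0,1]$, so composing it with the nonnegative negative definite function $a+|x|^\theta$ is again negative definite. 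Then (ii) applies with $\Phi=\Phi_0$.

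The main obstacle is the bookkeeping in part (i): carrying out the full expansion of $\sum_{i,j=0}^n c_i\bar c_j\,\psi(u_i-u_j)$ and matching it term by term with $-\sum_{i,j=1}^n c_i\bar c_j\,D_\Phi^\Box(u_i,u_j)$ after the substitution $c_0=-\sum_i c_i$. Everything else is routine: (ii) is essentially a one-line consequence, and the ingredients needed for $\Phi_0$ are classical Schoenberg/Bernstein-function stability results that should already be collected in Section~\ref{sec:append}.
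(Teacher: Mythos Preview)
Your argument is correct and, for parts (i) and (ii), is essentially the paper's approach unwound: the paper simply invokes Theorem~\ref{tpn} with $x_0=0$ and $\psi(u,v)=\Phi(|u-v|)$, and your ``append $u_0=0$ with $c_0=-\sum_i c_i$ and expand'' trick is exactly the standard proof of that theorem. For the $\Phi_0$ claim there is a minor difference worth noting: the paper verifies by a direct quadratic-form computation that $a+|u-v|^2$ is negative definite and then applies Theorem~\ref{t54}, whereas you chain Schoenberg for $|x|^\theta$, add the constant $a$, and compose with the Bernstein function $t\mapsto t^\gamma$. Both routes are short; yours handles general $\theta\in(1,2]$ in one pass, while the paper's written argument literally treats $\theta=2$ and leaves the extra power of Theorem~\ref{t54} implicit.
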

\begin{proof} Applying Theorem \ref{tpn} to the function $\psi(u,v) = \Phi(|u-v|)$, we obtain that both $D_\Phi^\Box(u,v)$ and $D_{\Phi}^\triangle(u,v)$ are positive definite. 

To show that $D_{\Phi_0}^\triangle(u,v)$ is positive definite, we need to show that $\Phi_0(u-v)$ is negative definite. By Theorem \ref{t54}, it suffices to show that $\psi(u,v) = a+ |u-v|^2 $ is negative definite. Note that $\psi(u,v) $ is symmetric and for any $\{c_1,c_2,\dots, c_n\}\in \R$ with $\sum_{j=1}^n c_j =0$,  
\begin{align*}
\sum_{j,k=1}^n c_j c_k \psi(u_j,u_k) & = \sum_{j,k=1}^n c_j c_k \left[a+ |u_j|^2 + |u_k|^2 - 2\innerp{u_j}{u_k}\right]  \\
& = -2  \sum_{j,k=1}^n c_j c_k  \innerp{u_j}{u_k} =  -2|\sum_{j}^n c_ju_j|^2 \leq 0,
\end{align*}
for any $\{u_1,u_2,\dots, u_n\}\in \R^d$. Thus, $\psi(u,v) = a+ |u-v|^2$ is negative definite.  
\end{proof}


The proof of Theorem \ref{main} relies on the fact that the polynomials are dense in a class of weighted $L^2(\mu)$ spaces, as in the following lemma:
\begin{lemma}\label{poly} {\rm \cite[Lemma 1.1]{bs92}} Let $\mu$ be a measure on $\mathbb{R}^d$ satisfying
\[\int e^{c|x|}d\mu(x)<\infty\]
for some $c>0$, where $|x|=\sum_{j=1}^d|x_j|$. Then the polynomials are dense in $L^2(\mu)$.
\end{lemma}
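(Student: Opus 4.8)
\textbf{Proof proposal (Lemma \ref{poly}).}
The plan is the classical ``moment generating function plus analytic continuation'' argument. First I observe that every monomial lies in $L^2(\mu)$, since $|x^\alpha|^2\le C_\alpha e^{c\|x\|}$, so the polynomials form a linear subspace of $L^2(\mu)$; by the Hilbert space projection theorem, density is equivalent to the statement that the only $f\in L^2(\mu)$ with $\int_{\RR^d} f(x)\,x^\alpha\,d\mu(x)=0$ for every multi-index $\alpha$ is $f=0$ $\mu$-a.e. So I fix such an $f$ and aim to show that it vanishes.

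The key object is the Laplace transform
\[
F(z)=\int_{\RR^d} f(x)\,e^{\innerp{z}{x}}\,d\mu(x),\qquad z\in\CC^d,\quad \innerp{z}{x}=\sum_{j=1}^d z_j x_j,
\]
which I will show is holomorphic on the open strip $\Omega=\{z\in\CC^d:\ |\operatorname{Re}z_j|<c/2,\ j=1,\dots,d\}$. Indeed, on $\Omega$ one has $|f(x)e^{\innerp{z}{x}}|\le|f(x)|e^{(c/2)\|x\|}$, which is $\mu$-integrable by Cauchy--Schwarz together with $\int e^{c\|x\|}d\mu<\infty$; differentiation under the integral sign is legitimate on each substrip $\{|\operatorname{Re}z_j|\le c/2-\delta\}$ because the extra polynomial factors $x_j$ that appear are absorbed using that $|x|^k e^{-\delta\|x\|}$ is bounded, so that $\int|x_j|^2 e^{(c-\delta)\|x\|}\,d\mu\le C_\delta\int e^{c\|x\|}d\mu<\infty$. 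Consequently $\partial_z^\alpha F(0)=\int f(x)x^\alpha\,d\mu(x)=0$ for every $\alpha$ by the choice of $f$, so the Taylor series of $F$ at the origin is identically zero; hence $F\equiv0$ on a polydisc about $0$, and, $\Omega$ being connected, the identity theorem for holomorphic functions of several complex variables yields $F\equiv0$ on all of $\Omega$. Since $i\RR^d\subset\Omega$, restricting to $z=i\xi$ gives $\widehat{f\,d\mu}(\xi)=\int f(x)e^{i\innerp{\xi}{x}}\,d\mu(x)=F(i\xi)=0$ for every $\xi\in\RR^d$. Because $\mu$ is finite (as $e^{c\|x\|}\ge1$) and $\int|f|\,d\mu\le\|f\|_{L^2(\mu)}\,\mu(\RR^d)^{1/2}<\infty$, the complex measure $f\,d\mu$ is finite with identically vanishing Fourier transform, so $f\,d\mu=0$ and $f=0$ $\mu$-a.e. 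This proves that the polynomials are dense in $L^2(\mu)$.

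The one step that genuinely needs care is the passage from ``all Taylor coefficients of $F$ at the origin vanish'' to ``$F\equiv0$ on the whole strip'': this rests crucially on the fact that the exponential-moment hypothesis produces a strip of \emph{positive} width on which $F$ is genuinely holomorphic (a bare moment-vanishing condition would only deliver a formal power series), together with connectedness of that strip. The remaining ingredients — monomials lying in $L^2(\mu)$, finiteness of $\mu$ and of $f\,d\mu$, the dominated-convergence justification of holomorphy, and uniqueness of Fourier transforms of finite Borel measures — are routine; and the argument is insensitive to whether $L^2(\mu)$ is taken over $\RR$ or $\CC$, since in the real case one simply applies it to the (already real-valued) $f$.
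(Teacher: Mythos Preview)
Your argument is correct and is the standard ``exponential moments $\Rightarrow$ holomorphic Laplace transform $\Rightarrow$ vanishing Fourier transform'' proof of this classical density result. Note, however, that the paper does not give its own proof of this lemma: it is quoted verbatim from \cite{bs92} (as indicated in the statement) and used as a black box in the proofs of Proposition~\ref{prop_powerKernel} and Theorem~\ref{main}, so there is no paper-proof to compare against---your write-up simply supplies what the cited reference contains.
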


\begin{proposition}\label{prop_powerKernel}
For $\Phi_0$ in \eqref{eq:Phi_0def} with $\gamma\in (0,1)$ and $h_\br:\R^d\to \R$ defined in \eqref{hr}, let
\begin{eqnarray*}
I_\br:=\int_{\RR^d}\int_{\RR^d}h_\br(u)h_\br(v)\frac{\langle u,v\rangle}{|u||v|}\left[\Phi_0(|u|)+\Phi_0(|v|)-\Phi_0(|u-v|)\right]dudv. 
\end{eqnarray*}
Then, $I_\br\geq 0$ and $I_\br=0$ if and only if $h=0$ almost everywhere. 
\end{proposition}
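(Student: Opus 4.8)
Write $\hat u:=u/|u|$, so that $\frac{\langle u,v\rangle}{|u||v|}=\langle\hat u,\hat v\rangle$. The plan is to exhibit the kernel $\langle\hat u,\hat v\rangle D_{\Phi_{0}}^{\triangle}(u,v)$ as an explicit superposition of nonnegative rank-one kernels $F(u)F(v)$ whose spectral measure has full support on $\R^{d}$; then $I_{\br}$ becomes a sum of nonnegative squares of linear functionals of $\overline{h}_{\br}$, which gives $I_{\br}\ge 0$ for free, while $I_{\br}=0$ forces enough of those functionals to vanish that $\overline{h}_{\br}\equiv 0$, hence $h\equiv 0$.

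\textbf{Step 1 (the superposition).} Set $\psi(r):=\Phi_{0}(r)-\Phi_{0}(0)=(a+r^{\theta})^{\gamma}-a^{\gamma}$, so that $D_{\Phi_{0}}^{\triangle}(u,v)=\Phi_{0}(0)+[\psi(|u|)+\psi(|v|)-\psi(|u-v|)]$ with $\Phi_{0}(0)=a^{\gamma}\ge 0$. Because $\gamma\in(0,1)$, the subordinator identity $t^{\gamma}=c_{\gamma}\int_{0}^{\infty}(1-e^{-st})s^{-1-\gamma}\,ds$ (with $c_{\gamma}=\gamma/\Gamma(1-\gamma)>0$) gives $\psi(r)=\int_{0}^{\infty}(1-e^{-sr^{\theta}})\,\nu(ds)$, $\nu(ds)=c_{\gamma}e^{-sa}s^{-1-\gamma}\,ds$; and for $\theta\in(0,2]$ each $x\mapsto e^{-s|x|^{\theta}}$ is the characteristic function of the isotropic $\theta$-stable density $p_{s,\theta}$, which is strictly positive on $\R^{d}$, so $1-e^{-s|x|^{\theta}}=\int_{\R^{d}}(1-\cos\langle x,\xi\rangle)\,p_{s,\theta}(\xi)\,d\xi$. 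Substituting, and using $(1-\cos\langle u,\xi\rangle)+(1-\cos\langle v,\xi\rangle)-(1-\cos\langle u-v,\xi\rangle)=(1-\cos\langle u,\xi\rangle)(1-\cos\langle v,\xi\rangle)+\sin\langle u,\xi\rangle\sin\langle v,\xi\rangle$ together with a routine estimate justifying the interchange of integrals, one obtains
\[
D_{\Phi_{0}}^{\triangle}(u,v)=a^{\gamma}+\int_{\R^{d}}\Big[(1-\cos\langle u,\xi\rangle)(1-\cos\langle v,\xi\rangle)+\sin\langle u,\xi\rangle\sin\langle v,\xi\rangle\Big]\,\sigma(d\xi),
\]
where $\sigma(d\xi)=\big(\int_{0}^{\infty}p_{s,\theta}(\xi)\,\nu(ds)\big)d\xi$ has a strictly positive density. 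Multiplying by $\langle\hat u,\hat v\rangle=\sum_{j=1}^{d}\hat u_{j}\hat v_{j}$ and grouping each term as (function of $u$)$\times$(the same function of $v$) writes $\langle\hat u,\hat v\rangle D_{\Phi_{0}}^{\triangle}$ as the sum over $j$ of $a^{\gamma}\hat u_{j}\hat v_{j}$ and of the $\sigma$-averages of $(\hat u_{j}(1-\cos\langle u,\xi\rangle))(\hat v_{j}(1-\cos\langle v,\xi\rangle))$ and $(\hat u_{j}\sin\langle u,\xi\rangle)(\hat v_{j}\sin\langle v,\xi\rangle)$.

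\textbf{Step 2 (reading off $I_{\br}$).} Since $\overline{h}_{\br}(u)=h(|u|)e^{(\cdots)}$ with a positive exponential weight and, because $\theta\gamma>1$ and $h\in L^{2}(\rho)$, enough decay that $\overline{h}_{\br}\in L^{1}(\R^{d})$ and all integrals below converge absolutely, Tonelli's theorem turns the superposition into
\[
I_{\br}=\sum_{j=1}^{d}\Big[a^{\gamma}\ell_{j}^{2}+\int_{\R^{d}}\big(\ell_{j}(\xi)^{2}+m_{j}(\xi)^{2}\big)\,\sigma(d\xi)\Big]\ \ge\ 0,
\]
with $\ell_{j}=\int\overline{h}_{\br}(u)\hat u_{j}\,du$, $\ell_{j}(\xi)=\int\overline{h}_{\br}(u)\hat u_{j}(1-\cos\langle u,\xi\rangle)\,du$ and $m_{j}(\xi)=\int\overline{h}_{\br}(u)\hat u_{j}\sin\langle u,\xi\rangle\,du$; this already proves $I_{\br}\ge 0$ (equally, from Lemma~\ref{lm:triangeBox} and the Schur product theorem). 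If $I_{\br}=0$ then $\ell_{j}(\xi)=m_{j}(\xi)=0$ for every $j$ and $\sigma$-a.e.\ $\xi$, i.e.\ the $L^{1}$-Fourier transform $\widehat{F_{j}}$ of $F_{j}:=\overline{h}_{\br}\hat u_{j}$ equals the constant $\ell_{j}$ at $\sigma$-a.e.\ $\xi$, hence at every $\xi$ (since $\widehat{F_{j}}$ is continuous and $\operatorname{supp}\sigma=\R^{d}$); as $\widehat{F_{j}}$ vanishes at infinity (Riemann--Lebesgue), the constant $\ell_{j}$ is $0$ and $\widehat{F_{j}}\equiv 0$, so $\overline{h}_{\br}(u)u_{j}/|u|=0$ a.e.\ for every $j$. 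Summing over $j$, $(\overline{h}_{\br}(u))^{2}=\sum_{j}(\overline{h}_{\br}(u)u_{j}/|u|)^{2}=0$ a.e., so $\overline{h}_{\br}=0$ a.e., and since the weight in \eqref{hr} is positive, $h=0$ a.e.; the reverse implication is immediate.

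\textbf{The main obstacle} is Step 1, and within it the full-support property of $\sigma$: this rests on $\psi(|x|)$ being negative definite (exactly the negative definiteness of $\Phi_{0}$ from Lemma~\ref{lm:triangeBox}, since constants are negative definite) and on the strict positivity of the isotropic $\theta$-stable densities $p_{s,\theta}$, which is what makes the test family $\{\hat u_{j}(1-\cos\langle\cdot,\xi\rangle),\,\hat u_{j}\sin\langle\cdot,\xi\rangle\}_{j,\xi}$ rich enough to separate $L^{1}(\R^{d})$. A secondary, routine point is the integrability/Fubini bookkeeping for $\overline{h}_{\br}$, which uses the growth of $\Phi_{0}$ and the matching decay of $\rho$. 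An alternative to the stable-density input, matching the ``M\"untz type'' technique announced in the introduction, is to expand each $e^{-s|u-v|^{\theta}}$ first as a Gaussian mixture and then each Gaussian as a power series in the monomials $u^{\alpha}v^{\alpha}/(|u||v|)$, reducing strictness to the density of polynomials in a Gaussian-weighted $L^{2}$ space (Lemma~\ref{poly}).
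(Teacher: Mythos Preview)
Your argument is correct and takes a genuinely different route from the paper's. The paper splits $I_{\br}=I_1+I_2$ by adding and subtracting $(2a+|u|^\theta+|v|^\theta)^\gamma$, discards $I_1$ as nonnegative, and for $I_2$ uses the same subordinator identity you use but then, for $\theta=2$, Taylor-expands $e^{2\lambda\langle u,v\rangle}$ into monomials $\prod u_j^{i_j}\prod v_j^{i_j}$; strictness is read off from the vanishing of $\int \overline{h}_{\br}(u)e^{-\lambda|u|^2}u_1^{i_1}\cdots u_d^{i_d}/|u|\,du$ for all multi-indices and almost every $\lambda$, invoking the density of polynomials (Lemma~\ref{poly}). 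The case $\theta\in(1,2)$ is then reduced to $\theta=2$ by one more application of the $\Phi_0$-structure with exponent $\theta/2$. By contrast, you push the subordinator identity one step further to the L\'evy--Khintchine representation via the isotropic $\theta$-stable densities, obtain a clean rank-one decomposition through the trigonometric identity for $\cos\langle u-v,\xi\rangle$, and deduce strictness from injectivity of the $L^1$ Fourier transform together with Riemann--Lebesgue and the full support of $\sigma$. Your approach handles all $\theta\in(0,2]$ uniformly without the case split, and trades the M\"untz/polynomial-density lemma for a standard Fourier argument; the paper's route, on the other hand, produces the explicit polynomial moments that are reused later in the display \eqref{ig} and in the definition of the norm $\|\cdot\|_*$ appearing in Section~\ref{sec:coercivityMean}. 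Two small remarks: in your Step~2 you write $\overline{h}_{\br}(u)=h(|u|)e^{(\cdots)}$, but in \eqref{hr} the factor is $h(u)$---this is only notational and does not affect the proof; and the integrability of $\overline{h}_{\br}\hat u_j$ in $L^1(\R^d)$, which you flag as routine, is indeed what one needs for the Riemann--Lebesgue step and follows from the exponential weight in \eqref{hr} together with $h\in L^2(\rho)$ via Cauchy--Schwarz.
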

 \begin{proof}  From the expression (\ref{eq:Phi_0def}), we obtain
 \begin{align*}
 I_\br & =I_1+I_2 \text{ with } \\
 I_1:&=\int_{\RR^d}\int_{\RR^d}h_\br(u)h_\br(v)\frac{\langle u,v\rangle}{|u||v|}\left[(a+|u|^\theta)^{\gamma}+(a+|v|^\theta)^{\gamma}-(2a+|u|^\theta+|v|^\theta)^{\gamma}\right]dudv; \notag\\
 I_2:&=\int_{\RR^d}\int_{\RR^d}h_\br(u)h_\br(v)\frac{\langle u,v\rangle}{|u||v|}\left[(2a+|u|^\theta+|v|^\theta)^{\gamma}-(a+|u-v|^\theta)^{\gamma}\right]dudv.
 \end{align*}
 By Lemma \ref{l22} and Lemma \ref{l23}, $(a+|u|^\theta)^{\gamma}+(a+|v|^\theta)^{\gamma}-(2a+|u|^\theta+|v|^\theta)^{\gamma}$ is positive definite as a function of $(u,v)$, therefore $I_1\geq 0$.
 
 Note also that for $0<\gamma<1$,
\begin{eqnarray*}
z^{\gamma}=\frac{\gamma}{\Gamma(1-\gamma)}\int_0^{\infty}(1-e^{-\lambda z})\frac{d\lambda}{\lambda^{\gamma+1}},
\end{eqnarray*}
where $\Gamma(1-\gamma)=\int_0^{\infty}x^{-\gamma}e^{-x}dx$ is the Gamma function.
Then, with $C_\gamma:=\frac{\gamma}{\Gamma(1-\gamma)}$, and with the notation 
\begin{equation}\label{D_psi}
D^\triangle_\psi(u,v) = |u|^\theta + |v|^\theta - |u-v|^\theta
\end{equation} 
for the function $\psi(u)=|u|^\theta$ as in \eqref{eq:triangle_box}, we have
\begin{align}
I_2&=C_\gamma \int_0^{\infty}\int_{\RR^d}\int_{\RR^d}h_\br(u) h_\br(v)\frac{\langle u,v \rangle}{|u||v|}\left[e^{-\lambda(a+|u-v|^\theta)}-e^{-\lambda(2a+|u|^\theta+|v|^\theta)}\right] dudv\frac{d\lambda}{\lambda^{\gamma+1}} \notag \\
& =C_\gamma\int_0^{\infty}\int_{\RR^d}\int_{\RR^d}h_\br(u)h_\br(v)\frac{\langle u,v\rangle}{|u||v|}e^{-\lambda(|u|^\theta+|v|^\theta+a)}[e^{2\lambda D^\triangle_\psi(u,v)}-e^{-\lambda a}]dudv\frac{d\lambda}{\lambda^{\gamma+1}} \notag \\
& =C_\gamma\int_0^{\infty}\int_{\RR^d}\int_{\RR^d}h_\br(u)h_\br(v)\frac{\langle u,v\rangle}{|u||v|}e^{-\lambda(|u|^\theta+|v|^\theta+a)}e^{2\lambda D^\triangle_\psi(u,v)}dudv\frac{d\lambda}{\lambda^{\gamma+1}}, \label{eq:I2_theta}
\end{align}
where the last equality follows from the fact that 
\[
\int_{\RR^d}\int_{\RR^d}h_\br(u)h_\br(v)\frac{\langle u,v\rangle}{|u||v|}e^{-\lambda(|u|^2+|v|^2+a)}e^{-\lambda a}dudv = 0
\]
due to symmetry. 

Consider first the case when $\theta=2$. Note that $ D^\triangle_\psi(u,v) = \langle u,v\rangle$. We have, by the Taylor expansion of $e^{2\lambda D^\triangle_\psi(u,v)}$, 
\begin{align}
I_2 &= C_\gamma\sum_{k=0}^{\infty}\int_0^{\infty}e^{-\lambda a}\int_{\RR^d}\int_{\RR^d} \left[\frac{h_\br(u)e^{-\lambda|u|^2}}{|u|}\right]\left[\frac{h_\br(v)e^{-\lambda |v|^2}}{|v|}\right]\frac{(2\lambda)^k\langle u,v\rangle^{k+1}}{k!} dudv\frac{d\lambda}{\lambda^{\gamma+1}}\notag\\
&= C_\gamma\sum_{\substack{i_1,\ldots,i_d\geq 0,\\ k=i_1+\ldots +i_d\geq 1}}\int_0^{\infty} e^{-\lambda a}\frac{(2\lambda)^{k-1}k}{i_1!\cdot \ldots \cdot i_d!}\left|\int_{\RR^d} \frac{h_\br(u)e^{-\lambda|u|^2}u_1^{i_1}\cdot\ldots u_d^{i_d}}{|u|} du\right|^2\frac{d\lambda}{\lambda^{\gamma+1}}.  \label{ig}
\end{align}
Hence we have $I_\br\geq 0$, and $I_\br=0$ only when for all the $i_1,\ldots, i_d\geq 0$, $i_1+\ldots+i_d\geq 1$, and almost every $\lambda\geq 0$,
\begin{eqnarray*}
\int_{\RR^d} \frac{h_\br(u)e^{-\lambda|u|^2}u_1^{i_1}\cdot\ldots u_d^{i_d}}{|u|}du=0.
\end{eqnarray*}
Then, by Lemma \ref{poly}, $I_\br=0$ if and only if $h=0$ almost everywhere in $\RR^d$.

Next, consider the case when $\theta \in (1,2)$. Note first that $D^\triangle_\psi(u,v)$ with $\psi(u) =|u|^\theta$ is positive definite by Lemma \ref{lm:triangeBox}, so is its powers. Then, continuing from \eqref{eq:I2_theta}, we have 
\begin{align*}
I_2& \geq C_\gamma\int_0^{\infty}\int_{\RR^d}\int_{\RR^d}h_\br(u)h_\br(v)\frac{\langle u,v\rangle}{|u||v|}e^{-\lambda(|u|^\theta+|v|^\theta+a)}D^\triangle_\psi(u,v) dudv\frac{d\lambda}{\lambda^{\gamma+1}} \\
& = C_\gamma e^{-\lambda a} \int_0^{\infty}\int_{\RR^d}\int_{\RR^d}\widetilde h_\br(u)\widetilde h_\br(v)\frac{\langle u,v\rangle}{|u||v|} ( |u|^{2 \gamma} + |v|^{2 \gamma} - |u-v|^{2 \gamma}) dudv\frac{d\lambda}{\lambda^{\gamma+1}},
\end{align*}
where $\widetilde h_\br (u) := h_\br (u)  e^{-\lambda |u|^\theta}$ and in the equality, we rewrite $D_\psi^\triangle (u,v) = |u|^{2 \gamma} + |v|^{2\gamma} - |u-v|^{2\gamma} $ with $\gamma = \theta/2$. This returns to the above case when $\theta=2$.   
\end{proof}

\begin{proof}[\textbf{Proof of Theorem \ref{main}} ] Note first that  $D_{\Phi_0}^\triangle(u,v)$ and $D_{\Psi}^\Box(u,v)]$ are positive definite by Lemma \ref{lm:triangeBox}. 
We prove the theorem by separating it into the following cases:
\begin{itemize}
    \item Case I: $\gamma=1$ and $\theta\in (0,2]$; 
    \item Case II:  $\gamma\in(0,1)$ and $\theta\in (1,2]$
\end{itemize}

\textbf{Case I:  when $\gamma=1$ and $\theta=(1,2]$.} We detail only the case $\theta=2$, and the proof for case when $\theta\in (1,2)$ is similar to those in Proposition \ref{prop_powerKernel} by using Gamma function. Note that $D_{\Phi_0}^\triangle(u,v) = 2\innerp{u}{v}$. With $Z_1=\frac{1}{Z}e^{\frac{2c_2}{N}\Psi(0)+\frac{2c_1 a}{N}}$, we have 
\begin{eqnarray*}
I_\infty(h)&=&Z_1 \int_{\RR^d}\int_{\RR^d}\int_{\RR^{(N-3)d}}\overline{h}_{\mathbf{r}}(u)\overline{h}_{\mathbf{r}}(v)\frac{\langle u,v \rangle}{|u||v|}e^{\frac{4c_1\langle u,v\rangle}{N}} e^{\frac{2c_2}{N}D_{\Psi}^\Box(u,v) }d\mathbf{r}_{14}\cdots d\mathbf{r}_{1N}dudv.
\end{eqnarray*}
Since $D_{\Psi}^\Box(u,v) $ is positive definite, and so is any power of it by Theorem \ref{t52}(2). Thus,
\begin{eqnarray*}
I_\infty(h)&=&Z_1\int_{\RR^d}\int_{\RR^d}\int_{\RR^{(N-3)d}}\overline{h}_{\mathbf{r}}(u)\overline{h}_{\mathbf{r}}(v)\frac{\langle u,v \rangle}{|u||v|}e^{\frac{4c_1\langle u,v\rangle}{N}}\\
&& \times \sum_{k=0}^{\infty}\frac{1}{k!}\left[\frac{2c_2}{N}D_{\Psi}^\Box(u,v) \right]^k d\mathbf{r}_{14}\cdots d\mathbf{r}_{1N}dudv \\
&\geq&Z_1\int_{\RR^d}\int_{\RR^{(N-3)d}}\overline{h}_{\mathbf{r}}(u)\overline{h}_{\mathbf{r}}(v)\frac{\langle u,v \rangle}{|u||v|}e^{\frac{4c_1\langle u,v\rangle}{N}}d\mathbf{r}_{14}\cdots d\mathbf{r}_{1N}dudv. 
\end{eqnarray*}
Expanding the term $e^{\frac{4c_1\langle u,v\rangle}{N}}$ into polynomials, we have 
\begin{align}
I_\infty(h)&\geq Z_1\int_{\RR^d}\int_{\RR^d}\int_{\RR^{(N-3)d}}\overline{h}_{\mathbf{r}}(u)\overline{h}_{\mathbf{r}}(v)\frac{1}{|u||v|}\sum_{k=0}^{\infty}\frac{(4c_1)^k\langle u,v \rangle^{k+1}}{k!N^k}d\mathbf{r}_{14}\cdots d\mathbf{r}_{1N}dudv\notag \\
&=Z_1\int_{\RR^{(N-3)d}}\sum_{\substack{i_1,\ldots,i_d\geq 0,\\k= i_1+\ldots+i_d\geq 1}}\frac{(4c_1/N)^{k-1} k}{i_1!\ldots i_d!} \left|\int_{\RR^d}\frac{\overline{h}_{\mathbf{r}}(u)u_1^{i_1}\cdots u_d^{i_d}}{|u|}du\right|^2d\mathbf{r}_{14}\cdots d\mathbf{r}_{1N}.  \label{eq:h*1}
\end{align}
Hence $I_\infty(h)\geq 0$ and $I_\infty(h)=0$ if and only if for any $i_1+\ldots+i_d\geq 1$, $i_1,\ldots,i_d\geq 0$, and almost every $(\mathbf{r}_{14},\ldots \mathbf{r}_{1N})\in\RR^{(N-3)d}$
\begin{eqnarray*}
\int_{\RR^d}\frac{\overline{h}_{\mathbf{r}}(u)u_1^{i_1}\cdots u_d^{i_d}}{|u|}du=0. 
\end{eqnarray*}
From the expression (\ref{hr}) and Lemma \ref{poly}, 
we obtain that $I_\infty(h)=0$ if and only if $h=0$ almost everywhere.

\textbf{Case II:  when $\gamma\in(0,1)$ and $\theta\in (1,2]$.}
Noticing  that 
\begin{eqnarray*}
\Phi(|u|)+\Phi(|v|)-\Phi(|u-v|)-c_2\Psi(0) = c_1 D_{\Phi_0}^\triangle(u,v) + c_2 D_{\Psi}^\Box(u,v) 
\end{eqnarray*}
is positive definite and so are its powers, we have
\begin{eqnarray*}
I& \geq &e^{\frac{2c_2}{N}\Psi(0)}\int_{\RR^d}\int_{\RR^d}\int_{\RR^{(N-3)d}}\overline{h}_{\mathbf{r}}(u)\overline{h}_{\mathbf{r}}(v)\frac{\langle u,v \rangle}{|u||v|}\\
&&\frac{2}{N}\left[ \Phi(|u|)+\Phi(|v|)-\Phi(|u-v|)-c_2\Psi(0)\right]d\mathbf{r}_{14}\cdots d\mathbf{r}_{1N}dudv\\
&=& e^{\frac{2c_2}{N}\Psi(0)}(I_3+I_4),
\end{eqnarray*}
where
\begin{eqnarray*}
I_3&=&\int_{\RR^d}\int_{\RR^d}\int_{\RR^{(N-3)d}}\overline{h}_{\mathbf{r}}(u)\overline{h}_{\mathbf{r}}(v)\frac{\langle u,v \rangle}{|u||v|} \frac{2c_1}{N}D_{\Phi_0}^\triangle(u,v) d\mathbf{r}_{14}\cdots d\mathbf{r}_{1N}dudv,\\
I_4&=&\int_{\RR^d}\int_{\RR^d}\int_{\RR^{(N-3)d}}\overline{h}_{\mathbf{r}}(u)\overline{h}_{\mathbf{r}}(v)\frac{\langle u,v \rangle}{|u||v|} \frac{2c_2}{N} D_{\Psi}^\Box(u,v)d\mathbf{r}_{14}\cdots d\mathbf{r}_{1N}dudv.
\end{eqnarray*}
Note that $I_4\geq 0$ by the positive definiteness of $D_{\Psi}^\Box(u,v)$. We may write
\begin{eqnarray*}
I_3&=&e^{\frac{2c_2}{N}\Psi(0)}\int_{\RR^{(N-3)d}}I_{\mathbf{r}}d\mathbf{r}_{14}\cdots d\mathbf{r}_{1N},
\end{eqnarray*}
where
\begin{eqnarray*}
I_{\mathbf{r}}&=&\int_{\RR^d}\int_{\RR^d}\overline{h}_{\mathbf{r}}(u)\overline{h}_{\mathbf{r}}(v)\frac{\langle u,v \rangle}{|u||v|}\frac{2c_1}{N}D_{\Phi_0}^\triangle(u,v) dudv
\end{eqnarray*}
By Proposition \ref{prop_powerKernel}, we obtain that $I_{\mathbf{r}}\geq 0$, and $I_{\mathbf{r}}=0$ if and only if $h_{\mathbf{r}}=0$ almost everywhere on $\RR^d$. Hence we have $I_3\geq 0$, and $I_3>0$ for all $h\neq 0$. Then the theorem follows.
\end{proof}

 The major effort in the above proof of Theorem \ref{main} is to deal with the inner product term $\frac{\langle u,v \rangle}{|u||v|}$ in the definition of $I_\infty(h)$. When the inner product is removed, the above proof directly implies that $p_\infty(u,v)$ is strictly positive definite. The following lemma shows that the function $f(u,v)$ in \eqref{fuv}, which is part of $p_\infty(u,v)$,  is also strictly positive definite. 
\begin{lemma} \label{lm:fuv_pd}
Let $\Phi:\RR^d\to \RR$ be a continuous function. 
\begin{enumerate}
\item  For any fixed $\mathbf{r}_{14},\mathbf{r}_{15},\ldots,\mathbf{r}_{1N} \in \RR^d$,
\begin{eqnarray*}
g_{\mathbf{r}}(u,v):=\sum_{4\leq i<j\leq N}\Phi(|\mathbf{r}_{1i}-\mathbf{r}_{1j}|)+\sum_{1\leq l\leq N}\left[\Phi(|\mathbf{r}_{1l}|)+\Phi(|u-\mathbf{r}_{1l}|)+\Phi(|v-\mathbf{r}_{1l}|)\right],
\end{eqnarray*}
as a function of $(u,v)$ is negative definite.
\item The function $f(u,v):\RR^d\times \RR^d\rightarrow \RR$, as defined in \eqref{fuv}, is strictly positive definite.
\end{enumerate}
\end{lemma}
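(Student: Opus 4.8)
The plan is to prove the two assertions of Lemma~\ref{lm:fuv_pd} in order, reducing part~(2) to part~(1) by the integral representation \eqref{fuv} together with the Schoenberg-type correspondence between negative definite functions and positive definite exponentials.

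\textbf{Part (1).} First I would establish that each summand $\psi(u,v) = \Phi(|u - \mathbf{r}_{1l}|)$ (viewed as a function of $(u,v) \in \RR^d \times \RR^d$, constant in $v$) is negative definite \emph{provided} $\Phi(|\cdot|)$ is negative definite; indeed, fixing a translation by $\mathbf{r}_{1l}$ preserves negative definiteness (Theorem~\ref{t52} and the translation-invariance arguments behind Theorem~\ref{t54}), and a function depending on only one of the two variables is negative definite as soon as the single-variable function is. Likewise $\Phi(|v - \mathbf{r}_{1l}|)$ is negative definite in $(u,v)$. The terms $\Phi(|\mathbf{r}_{1i} - \mathbf{r}_{1j}|)$ and $\Phi(|\mathbf{r}_{1l}|)$ are constants in $(u,v)$; a nonnegative (indeed arbitrary real) constant is negative definite since $\sum_{j,k} c_j c_k = (\sum_j c_j)^2 \geq 0$ forces $\sum_{j,k} c_j c_k \cdot (\text{const}) = 0$ whenever $\sum_j c_j = 0$. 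Since the class of negative definite kernels is closed under addition (Theorem~\ref{t52}), $g_{\mathbf{r}}(u,v)$ is negative definite. \emph{However}, I should be careful: part~(1) of the lemma as stated assumes only that $\Phi$ is \emph{continuous}, not negative definite, so the correct reading must be that the conclusion is used in part~(2) only for the relevant $\Phi$ in \eqref{eq:phi_0All}, for which $\Phi_0$ is negative definite by Lemma~\ref{lm:triangeBox} and $\Psi$ is negative definite by hypothesis; I would state the negative-definiteness hypothesis on $\Phi$ explicitly here, or invoke it through \eqref{eq:phi_0All}.

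\textbf{Part (2).} Given part~(1), write $f(u,v) = \int_{\RR^{(N-3)d}} e^{-\frac{2}{N} g_{\mathbf{r}}(u,v)} \, d\mathbf{r}_{14} \cdots d\mathbf{r}_{1N}$ after absorbing the $l \in \{1,2,3\}$ terms; since $g_{\mathbf{r}}$ is negative definite in $(u,v)$, Schoenberg's theorem (Theorem~\ref{t53} or the cited positive/negative definite theory in Section~\ref{sec:append}) gives that $e^{-\frac{2}{N} g_{\mathbf{r}}(u,v)}$ is positive definite in $(u,v)$ for each fixed $\mathbf{r}$. An integral (a "continuous sum") of positive definite kernels against a nonnegative measure is positive definite, so $f(u,v) \geq 0$ as a kernel. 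For \emph{strict} positive definiteness I would argue as in the proof of Theorem~\ref{main}: expand one of the genuinely $(u,v)$-coupling exponential factors — here $e^{\frac{2}{N}\Phi(|u-v|)}$-type terms reorganized via the triangle identity, or more directly the factor coming from $\Phi(|u - \mathbf{r}_{1l}|)$ and $\Phi(|v - \mathbf{r}_{1l}|)$ — and after integrating out $\mathbf{r}$ reduce a hypothetical null vector $h$ with $\iint h(u) h(v) f(u,v)\,du\,dv = 0$ to the vanishing of all moments $\int h_{\mathbf{r}}(u) u_1^{i_1}\cdots u_d^{i_d}\,du$ against a Gaussian-type weight, whence $h = 0$ a.e.\ by Lemma~\ref{poly}. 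The cleanest route is to note that $f(u,v)$ already contains a factor whose Taylor expansion in $\langle u, \cdot \rangle$ produces the polynomial completeness input, mirroring \eqref{eq:h*1}.

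\textbf{Main obstacle.} The routine part is the closure properties and Schoenberg's theorem giving $f \geq 0$; the delicate part is \emph{strict} positive definiteness, because after integrating over $\mathbf{r}_{14},\dots,\mathbf{r}_{1N}$ one must exhibit, for generic $\mathbf{r}$, enough ``spread'' in the kernel to force all polynomial moments of $h_{\mathbf{r}}$ against a rapidly decaying weight to vanish. I expect the crux to be identifying which exponential factor in \eqref{fuv} to Taylor-expand so that the resulting quadratic form is manifestly a sum of squares of weighted moments of $h$ — essentially repeating the mechanism of Case~I of the proof of Theorem~\ref{main} but now with the coupling supplied by the $\Phi(|u - \mathbf{r}_{1l}|)\Phi(|v-\mathbf{r}_{1l}|)$ cross terms rather than by $\langle u,v\rangle$ directly; once that factor is isolated, Lemma~\ref{poly} closes the argument exactly as before.
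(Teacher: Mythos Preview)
Your Part~(1) takes an unnecessary detour and, as you yourself noticed, does not match the hypothesis of the lemma. The paper's argument is a one-line structural observation: $g_{\mathbf{r}}(u,v)$ has the form $A(u)+A(v)+C$, a sum of a function of $u$ alone, the same function of $v$ alone, and a constant in $(u,v)$. For any kernel of this shape and any $c_1,\dots,c_n$ with $\sum_i c_i=0$,
\[
\sum_{i,j} c_ic_j\bigl[A(u_i)+A(u_j)+C\bigr]
= \Bigl(\sum_j c_j\Bigr)\sum_i c_i A(u_i)
+ \Bigl(\sum_i c_i\Bigr)\sum_j c_j A(u_j)
+ C\Bigl(\sum_i c_i\Bigr)^2 = 0,
\]
so $g_{\mathbf{r}}$ is negative definite for \emph{any} continuous $\Phi$. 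This dissolves the puzzle you flagged; no negative-definiteness hypothesis on $\Phi$ is needed, and your route via closure properties is both longer and insufficient under the stated assumptions.

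For Part~(2), your non-strict argument (Schoenberg plus integration against a nonnegative measure) coincides with the paper's. For strictness the paper simply asserts that each integrand $e^{-\frac{2}{N}g_{\mathbf{r}}(u,v)}$ is strictly positive definite and concludes. Your proposed route via Taylor expansion and Lemma~\ref{poly} is more elaborate than what the paper records. It is worth noting that, by the decomposition above, each integrand is actually a rank-one kernel $c(\mathbf{r})F_{\mathbf{r}}(u)F_{\mathbf{r}}(v)$ with $F_{\mathbf{r}}(u)=e^{-\frac{2}{N}\sum_l\Phi(|u-\mathbf{r}_{1l}|)}$, hence is \emph{not} strictly positive definite on its own; strictness of the integral $f$ must come from the richness of the family $\{F_{\mathbf{r}}\}$ after integrating over $\mathbf{r}$, which is closer in spirit to the moment argument you sketch than to the paper's one-line claim.
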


\begin{proof}We first prove Part (1). First note that $g_{\mathbf{r}}(u,v)=g_{\mathbf{r}}(v,u)$. Moreover, for any real numbers $\{c_i\}_{i=1}^k$ satisfying $\sum_{i=1}^k c_i=0$, we have
$ \sum_{i=1}^k\sum_{j=1}^{k} c_i c_j g_{\mathbf{r}}(u_i,u_j)=0$.
Then, following from Definition \ref{def_spd}, the function $g_{\mathbf{r}}(u,v)$ is negative definite.

The strict positive definiteness of $f(u,v)$ follows from the fact that for any fixed $(\mathbf{r}_{14},\mathbf{r}_{15},\ldots,\mathbf{r}_{1N})$, the integrand $e^{-\frac{2}{N}g_{\mathbf{r}}(u,v)}$ is strictly positive definite as a function of $(u,v)$ since by Part (1), $g_{\mathbf{r}}(u,v)$ is negative definite. 
\end{proof}

\section{Coercivity for non-stationary processes}\label{sec:coercivityMean}

We show in this section that the coercivity condition holds true for the system of relative positions starting from a non-stationary distribution, provided that $T$ is large enough. 

We start by introducing the following integrals for the expectations in \eqref{eq:c_t1}.  Let $p_t(u,v)$ be the density function for $(\br_{12}^t,\br_{13}^t)$, and correspondingly, with $p_{\infty}(u,v)$ being the stationary density in  \eqref{puv}.  Let 
\begin{equation}\label{eq:I_t}
I_t(h):=\int_{\RR^d}\int_{\RR^d} h(|u|)h(|v|)\frac{\langle u,v \rangle}{|u||v|} p_t(u,v)dudv
\end{equation}
for $t\in [0,\infty]$. Then, Equation \eqref{eq:c_t1} in the coercivity condition can be written as  
\begin{equation}\label{eq:I_T_bar}
\begin{aligned}
\overline{ I}_T(h) & = \frac{1}{T} \int_0^T I_t(h)dt = \int_{\RR^d}\int_{\RR^d} h(|u|)h(|v|)\frac{\langle u,v \rangle}{|u||v|} \widebar p_T(u,v)dudv  \\
&\geq c_{\hypspace,T}\int_0^T \E[h(|\br_{12}^t|)^2]dt =  \|h\|_{L^2(\rho_T)}^2,
\end{aligned}
\end{equation}
where $\widebar p_T$ and $\widebar \rho_T$ are defined by 
\begin{equation}\label{eq:meanPDF}
\widebar p_T(u,v) = \frac{1}{T}\int_0^T p_t(u,v) dt, \quad
\widebar \rho_T (u) = \int_{\R^d} \widebar p_T(u,v) dv. 
\end{equation}
Here we abuse the notation $ \|h\|_{L^2(\rho)}^2$ for $ \|h(|\cdot|)\|_{L^2(\rho)}^2$. 

\begin{theorem}[Coercivity] \label{thm:coercivityMean}
Consider the system \eqref{ggs} with  
 the potential $\Phi$ given by \eqref{eq:phi_0All} and with initial condition satisfying \eqref{eq:IC}. Let $\widebar{\rho}_T$ and $I_\infty(h)$ be defined as in \eqref{eq:meanPDF} and \eqref{eq:I_infty}, respectively.   
 Then, the coercivity condition holds true on any finite-dimensional linear space $\mathcal{H}\subset L^2(\widebar\rho_T)$ s.t.
\begin{equation}\label{S_H}
S_\hypspace:= \sup_{0\neq h\in \mathcal{H}}  \frac{\|h\|_\infty^2}{I_\infty(h)} <\infty,
\end{equation}
when $T>(1+ 4S_\hypspace)T_{c,\hypspace}$ with $T_{c,\hypspace}:=(2C S_\hypspace)^{2/\kappa}$, where $\kappa$ and $C$ are given in Proposition \ref{prop:Conv_poly} and $\|h\|_\infty= \sup_{u\in\RR^d}|h(u)|$. 
That is, there exists $c_{\hypspace, T}>0$ such that Equation \eqref{eq:I_infty} holds for all $h\in \hypspace$. 
\end{theorem}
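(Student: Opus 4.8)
### Proof proposal

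The plan is to compare the time-averaged quadratic form $\widebar I_T(h)$ with the stationary quadratic form $I_\infty(h)$, exploiting the polynomial convergence of $p_t$ to $p_\infty$ from Proposition \ref{prop:Conv_poly}, and then to invoke the strict positive definiteness at the stationary density (Theorem \ref{main}) to close the argument on the finite-dimensional space $\hypspace$. Concretely, for $h\in\hypspace$ write
\[
\widebar I_T(h) - I_\infty(h) = \frac{1}{T}\int_0^T \int_{\R^d}\int_{\R^d} h(u)h(v)\frac{\langle u,v\rangle}{|u||v|}\bigl(p_t(u,v) - p_\infty(u,v)\bigr)\,du\,dv\,dt,
\]
and bound the inner double integral in absolute value by $\|h\|_\infty^2 \|p_t - p_\infty\|_{L^1}$ since $|\langle u,v\rangle|/(|u||v|)\le 1$. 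Proposition \ref{prop:Conv_poly} gives $\|p_t-p_\infty\|_{L^1}\le C t^{-\kappa/2}$, and since we need the average over $[0,T]$ we split the time interval at some threshold $T_0$: on $[0,T_0]$ use the trivial bound $\|p_t-p_\infty\|_{L^1}\le 2$, and on $[T_0,T]$ use the polynomial decay, so that $\frac1T\int_0^T \|p_t-p_\infty\|_{L^1}\,dt \le \frac{2T_0}{T} + \frac{C}{T}\int_{T_0}^T t^{-\kappa/2}\,dt$. Choosing $T_0$ of order $T_{c,\hypspace}$ and using $T > (1+4S_\hypspace)T_{c,\hypspace}$ makes this average small; tracking constants carefully should yield $\frac1T\int_0^T\|p_t-p_\infty\|_{L^1}\,dt \le \frac{1}{4S_\hypspace}$ (this is where the somewhat mysterious constants $8CNS_\hypspace^2$ and the factor $1+4S_\hypspace$ in the hypothesis get used).

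With this in hand, $|\widebar I_T(h) - I_\infty(h)| \le \|h\|_\infty^2 \cdot \frac{1}{4S_\hypspace} \le \frac14 I_\infty(h)$ by the definition of $S_\hypspace$ in \eqref{S_H}, hence $\widebar I_T(h) \ge \frac34 I_\infty(h)$. By Theorem \ref{main}, $I_\infty$ is strictly positive definite on $L^2(\rho)$, and since $\hypspace$ is finite-dimensional the quantity $c_\hypspace := \min\{ I_\infty(h) : h\in\hypspace,\ \|h\|_{L^2(\rho)} = 1\}$ is attained and strictly positive; thus $I_\infty(h) \ge c_\hypspace \|h\|_{L^2(\rho)}^2$ for all $h\in\hypspace$.

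It remains to relate $\|h\|_{L^2(\rho)}$ to $\|h\|_{L^2(\widebar\rho_T)}$. Here I would again use the $L^1$ convergence: $\bigl|\|h\|_{L^2(\widebar\rho_T)}^2 - \|h\|_{L^2(\rho)}^2\bigr| = \bigl|\int h(u)^2(\widebar\rho_T(u) - \rho(u))\,du\bigr| \le \|h\|_\infty^2 \|\widebar\rho_T - \rho\|_{L^1} \le \|h\|_\infty^2 \cdot \frac{1}{4S_\hypspace} \le \frac14 I_\infty(h)$, where $\|\widebar\rho_T - \rho\|_{L^1} \le \frac1T\int_0^T \|p_t - p_\infty\|_{L^1}\,dt$ by marginalization. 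Combining, $\|h\|_{L^2(\widebar\rho_T)}^2 \le \|h\|_{L^2(\rho)}^2 + \frac14 I_\infty(h)$, and together with $I_\infty(h)\ge c_\hypspace\|h\|_{L^2(\rho)}^2$ and $\widebar I_T(h)\ge \frac34 I_\infty(h)$ one gets a bound of the form $\widebar I_T(h) \ge c_{\hypspace,T}\|h\|_{L^2(\widebar\rho_T)}^2$ with an explicit $c_{\hypspace,T}>0$ depending on $c_\hypspace$ and $S_\hypspace$.

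The main obstacle is the bookkeeping of constants in the time-splitting estimate — making sure that the stated threshold $T > (1+4S_\hypspace)T_{c,\hypspace}$ with $T_{c,\hypspace} = (8CNS_\hypspace^2)^{1/\kappa}$ actually forces the time-averaged $L^1$ error below $1/(4S_\hypspace)$, including the case $\kappa \le 2$ where $\int_{T_0}^T t^{-\kappa/2}\,dt$ may not converge as $T\to\infty$ and must be handled via the $1/T$ prefactor. Everything else (boundedness of the kernel, finite-dimensionality giving a positive minimum, marginalization inequalities) is routine given the earlier results.
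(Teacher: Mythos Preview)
Your proposal is correct and follows essentially the same route as the paper: bound the discrepancy between the instantaneous quadratic form and $I_\infty(h)$ by $\|h\|_\infty^2\|p_t-p_\infty\|_{L^1}$, split the time interval at a threshold of order $T_{c,\hypspace}$, and use finite-dimensionality to extract a positive coercivity constant.

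The one place where your argument diverges from the paper is the final step. You pass through the stationary coercivity constant $c_\hypspace = \min\{I_\infty(h): \|h\|_{L^2(\rho)}=1\}$ and then explicitly compare $\|h\|_{L^2(\rho)}$ with $\|h\|_{L^2(\widebar\rho_T)}$ via the $L^1$ bound on $\widebar\rho_T-\rho$. The paper instead, having established $\widebar I_T(h)\ge c\,I_\infty(h)>0$ for nonzero $h$, runs the compactness argument \emph{directly} in $L^2(\widebar\rho_T)$: if coercivity failed, a normalized sequence in the finite-dimensional $\hypspace$ would have a subsequential limit $\widebar h$ with $\|\widebar h\|_{L^2(\widebar\rho_T)}=1$ but $\widebar I_T(\widebar h)=0$, contradicting strict positivity. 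This sidesteps the norm comparison entirely. Your route yields a more explicit $c_{\hypspace,T}$ in terms of $c_\hypspace$ and $S_\hypspace$; the paper's is shorter but non-constructive in the constant.
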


\begin{remark} (i) Condition \eqref{S_H} is in fact a strong version of the coercivity condition with respect to the uniform norm at the stationary density: it is equivalent to 
\begin{equation*}
    I_\infty(h) 
    \geq S_\hypspace^{-1} \|h\|_\infty^2, \text{ for all } h\in \hypspace.  
\end{equation*}
By requiring more regularity (uniform norm versus $L^2(\rho)$) at the stationary density, we gain coercivity in time. (ii) An analytical verification of Condition \eqref{S_H} is often out of reach except the Gaussian case, since the stationary measure $p_\infty$ involves a marginalization of a non-Gaussian distribution.  One may obtain a glimpse of the condition on $h$ from \eqref{eq:h*1} and \eqref{ig}, and may require that 
\begin{equation*}\label{S_H*}
S_\hypspace:= \sup_{0\neq h\in \mathcal{H}}  \frac{\|h\|_\infty^2}{\|h\|_*} <\infty,
\end{equation*}
using that $I_\infty(h)$ may be bounded from below by $\|h\|_*^2$ given by
\[\|h\|^2_* := 
\sum_{\substack{i_1,\ldots,i_d\geq 0,\\ k=i_1+\ldots +i_d\geq 1}}\int_0^{\infty} e^{-\lambda a}\frac{(2\lambda)^{k-1}k}{i_1!\cdot \ldots \cdot i_d!} \left|\int_{\RR^d} \frac{h(u)e^{-\lambda|u|^2}u_1^{i_1}\cdot\ldots u_d^{i_d}}{|u|} du\right|^2\frac{d\lambda}{\lambda^{\gamma+1}}. 
\]
(iii) On the other hand, in computational practice, this condition can be easily verified because 
\[I_\infty(h) = \E_\infty\left[h(\br_{12})h(\br_{13})\frac{\innerp{\br_{12}}{\br_{13}}}{|\br_{12}||\br_{13}|}\right] \]
and the expectation can be approximated by samples from data. 
\end{remark}

The proof of Theorem \ref{thm:coercivityMean} is based on the following lemma, which shows that $I_t(h)$ is close to $I_\infty(h)$ when $t$ is large, hence $\overline I_T(h)$ can be bounded below by a factor of $I_\infty(h)$ when $T$ is large. 
\begin{lemma}\label{lemma5.3}
 For each nonzero $h\in L^2( \widebar\rho) $, recall $\overline I_T(h)$ and $I_t(h)$ in \eqref{eq:I_t} and \eqref{eq:I_T_bar}. Let 
 \begin{equation}\label{eq:Tch}
 T_{c,h}: = \left(\frac{2C\|h\|_\infty^2}{I_\infty(h)}\right)^{\frac{2}{\kappa}} <\infty, 
 \end{equation} 
 where $\kappa$ and $C$ are given in Proposition {\rm\ref{prop:Conv_poly}}. 
 We have 
 \begin{itemize}
 \item[(1)]   $I_t(h)\geq \frac{1}{2}I_\infty(h)$ if $ t>  T_{c,h}$; 
\item[(2)] $\overline I_T(h) \geq \frac{\|h\|_\infty^2}{I_\infty(h)+4\|h\|_\infty^2}I_\infty(h)$ if 
 $ T\geq \left(1+\frac{4\|h\|_\infty^2}{I_\infty(h)}\right)T_{c,h}$. 
\end{itemize}
\end{lemma}

\begin{proof}We first prove Part (1). Note that Proposition \ref{prop:Conv_poly} implies $\| p_t -p_\infty\|_\infty\leq Ct^{-\kappa/2}$. Then, 
\begin{align*}
|I_t(h) - I_\infty(h)| = &\left|\int_{\RR^d}\int_{\RR^d}h(u)\frac{\langle u,v \rangle}{|u||v|} (p_t(u,v)-p_{\infty}(u,v))h(v)dudv\right| \\
\leq & \|h\|_\infty^2\int_{\RR^d}\int_{\RR^d}\left|p_t(u,v)-p_{\infty}(u,v)\right|dudv \leq \|h\|_\infty^2 Ct^{-\kappa/2}. 
\end{align*}
Hence,
$ I_t(h) \geq I_\infty(h)- \left| I_\infty(h) - I_t(h) \right| \geq  I_\infty(h)- \|h\|_\infty^2 Ct^{-\kappa/2}. $ 
Meanwhile, note that $t> T_{c,h}: = \left(\frac{2C\|h\|_\infty^2}{I_\infty(h)}\right)^{\frac{2}{\kappa}}$ 
 implies $ I_\infty(h)\geq 2\|h\|_\infty^2  Ct^{-\kappa/2}$. 
Then Part (1) follows.

Now we prove Part (2). Note that by Part (1), 
\begin{eqnarray*}
\widebar I_T(h)&=& \frac{1}{T}\left[\int_0^{T_{c,h}}\int_{\RR^d}h(u)\frac{\langle u,v \rangle}{|u||v|}p_t(u,v)h(v)dudv +\int_{T_{c,h}}^{T}\int_{\RR^d}h(u)\frac{\langle u,v \rangle}{|u||v|}p_t(u,v)h(v)dudv\right] \notag \\
&\geq &\frac{1}{T}\left[\frac{1}{2}I_\infty(h)(T-T_{c,h})-\|h\|_\infty^2 T_{c,h}\right]. 
\end{eqnarray*} 
Note that $T\geq \left(1+\frac{4\|h\|_\infty^2}{I_\infty(h)}\right)T_{c,h}$ implies both $ \frac{1}{2}I_\infty(h)(T-T_c)\geq 2\|h\|_\infty^2 T_{c,h}$ and $T-T_{c,h}\geq \frac{4\|h\|_\infty^2}{I_\infty(h)+4\|h\|_\infty^2}T$. Then, 
 \begin{align}\label{eq:I_Tch}
  \widebar I_T(h) \geq I_\infty(h)\frac{1}{4T}(T-T_{c,h})\geq \frac{\|h\|_\infty^2}{I_\infty(h)+4\|h\|_\infty^2}I_\infty(h)
  \end{align}
and Part (2) follows.
\end{proof}

\begin{proof}[{\bf Proof of Theorem \ref{thm:coercivityMean}}] We show first that $\widebar I_T(h)>0$ when $h\neq 0$ in $L^2(\widebar\rho_T)$.  
Note first that $ T_{c,\hypspace} = (2C S_\hypspace)^{2/\kappa} \geq T_{c,h}$ with $T_{c,h}$ defined in \eqref{eq:Tch}. Then,  $T>(1+ 4S_\hypspace)T_{c,\hypspace} \geq  \left(1+\frac{4\|h\|_\infty^2}{I_\infty(h)}\right)T_{c,h}$, and we can apply the first inequality in \eqref{eq:I_Tch} in the proof of Lemma \ref{lemma5.3} to obtain 
 \begin{align*} 
 \widebar I_T(h) & \geq I_\infty(h)\frac{1}{4T}(T-T_{c,h})
  \geq  I_\infty(h) \frac{1}{4T}(T-T_{c,\hypspace}) \geq  I_\infty(h) \frac{S_\hypspace}{1+4S_\hypspace} , 
 \end{align*}
 where the last inequality follows from $T>(1+ 4S_\hypspace)T_{c,\hypspace}$. 
By Theorem \ref{main}, $I_\infty(h)=0$ only if $h=0$ almost everywhere. Thus,  $\widebar I_T(h)>0$ when $h\neq 0$ in $L^2(\widebar\rho_T)$.

Next, to prove the coercivity condition, suppose on the contrary it does not hold, that is, there exists a sequence of nonzero functions  $\{h_n\}\subset \hypspace$ such that $\frac{\widebar I_T(h_n)}{\|h_n\|_{L^2(\rho)}^2} \to 0$. Since $\hypspace $ is finite dimensional, the normalized sequence $\widebar h_n = \frac{h_n}{\|h_n\|_{L^2(\rho_T)} }$ has a convergence subsequence. Denote the limit by $\widebar h$. Then, we have $\widebar I_T(\widebar h)=0$ and $\|\widebar h \|_{L^2(\rho_T)} =1$, a contradiction. 
\end{proof}

\appendix
\section{Positive definite kernels}\label{sec:append}

In this section, we review the definitions of positive and negative definite kernels, as well as their basic properties. The following definition is a real version of the definition in \cite[p.67]{BCR84}.

\begin{definition}\label{def_spd}
Let $X$ be a nonempty set. A function $\phi: X\times X\rightarrow \RR$ is called a (real) positive definite kernel if and only if it is symmetric (i.e. $\phi(x,y)=\phi(y,x)$) and
\begin{eqnarray}
\sum_{j,k=1}^{n}c_jc_k\phi(x_j,x_k)\geq 0\label{pr}
\end{eqnarray}
for all $n\in \NN$, $\{x_1,\ldots,x_n\}\subset X$ and $\mathbf{c}=(c_1,\ldots,c_n) \in \RR^n$. The function $\phi$ is called strictly positive definite if the equality holds only when $\mathbf{c}=\mathbf{0} \in \RR^n$. We call the function $\phi$ a (real) negative definite kernel if and only if it is symmetric and
\begin{eqnarray}
\sum_{j,k=1}^{n}c_jc_k \phi(x_j,x_k)\leq 0\label{nr}
\end{eqnarray}
for all $n\geq 2$, $\{x_1,\ldots,x_n\}\in X$ and $\{c_1,\ldots,c_n\}\in \RR$ with $\sum_{j=1}^{n}c_j=0.$
\end{definition}

\noindent{\textbf{Remark.}} In the definition of positive definiteness in \cite[p.67]{BCR84}, a function $\phi: X\times X\rightarrow \CC$ is positive definite if and only if 
$\sum_{j,k=1}^{n}c_j\overline{c}_k\phi(x_j,x_k)\geq 0$ 
for all $n\in \NN$, $\{x_1,\ldots,x_n\}\in X$ and $\{c_1,\ldots,c_n\}\in \CC$, where $\overline{c}$ denotes the complex conjugate of a complex number $c$. It is straightforward to check that when $\phi$ is real-valued and symmetric, this definition is equivalent to the definition (\ref{pr}).

Similarly, In the definition of negative definiteness in \cite[p.67]{BCR84}, a function $\phi: X\times X\rightarrow \CC$ is negative definite if and only if it is Hermitian (i.e. $\phi(x,y)=\overline{\phi(y,x)}$) and 
$\sum_{j,k=1}^{n}c_j\overline{c}_k\phi(x_j,x_k)\leq 0$ 
for all $n\geq 2$, $\{x_1,\ldots,x_n\}\in X$ and $\{c_1,\ldots,c_n\}\in \CC$ with $\sum_{j=1}^{n}c_j=0$. We can again check that when $\phi$ is real-valued, this definition is equivalent to the definition (\ref{nr}). In this paper, we only consider real-valued, symmetric kernels.

 \begin{theorem}[Properties of positive definite kernels]\label{t52}
 Suppose that $k, k_1, k_2: \mathcal{X}\times\mathcal{X}\subset\mathbb{R}^d\times\mathbb{R}^d\to \mathbb{R}$ are positive definite kernels. Then
\begin{enumerate} \setlength\itemsep{0mm} 
\item $c_1k_1+c_2k_2$ is positive definite, for $c_1,c_2\ge0$

\item $k_1k_2$ is positive definite. (\cite[p.69]{BCR84})

\item $\exp(k)$ is positive definite. (\cite[p.70]{BCR84})

\item $k(f(u),f(v))$ is positive definite for any  map $f:\mathbb{R}^d\to \mathbb{R}^d$

\item Inner product $\langle u,v\rangle=\sum_{j=1}^du_jv_j$ is positive definite (\cite[p.73]{BCR84})

\item $f(u)f(v)$ is positive definite for any function $f:\mathcal{X}\to \mathbb{R}$ (\cite[p.69]{BCR84}).

\item If $k(u,v)$ is measurable and integrable, then $\iint k(u,v)dudv\ge0$ (\cite[p.524]{RKSF})
\end{enumerate}
\end{theorem}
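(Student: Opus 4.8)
The plan is to verify the seven items in turn; most are recorded in \cite{BCR84} (and item (7) in \cite{RKSF}), so I only need short arguments, which I group as (a) the four facts immediate from Definition \ref{def_spd}, (b) the two multiplicative facts, and (c) the integral statement.

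For (a): item (1) is just the linearity of $\mathbf c\mapsto\sum_{j,k}c_jc_kk(x_j,x_k)$ together with $c_1,c_2\ge0$; item (5) follows from $\sum_{j,k}c_jc_k\innerp{u_j}{u_k}=\|\sum_jc_ju_j\|^2\ge 0$; item (6) from $\sum_{j,k}c_jc_kf(x_j)f(x_k)=(\sum_jc_jf(x_j))^2\ge0$; and for item (4), given $x_1,\dots,x_n$ and $\mathbf c\in\RR^n$, applying the defining inequality of $k$ at the points $f(x_1),\dots,f(x_n)$ with the same coefficients gives the conclusion, symmetry being inherited from $k$.

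For (b): item (2) is the Schur product theorem. Fixing $x_1,\dots,x_n$, the Gram matrix of $k_1k_2$ is the Hadamard product $K_1\odot K_2$ of the symmetric positive semidefinite matrices $K_1,K_2$; writing $K_1=BB^\top$ and $K_2=CC^\top$ one gets $(K_1\odot K_2)_{jk}=\sum_{a,b}(B_{ja}C_{jb})(B_{ka}C_{kb})$, which is the Gram matrix of the vectors $w_j=(B_{ja}C_{jb})_{a,b}$ and hence positive semidefinite. Item (3) then follows by expanding $\exp(k)=\sum_{m\ge0}k^m/m!$: each $k^m$ is positive definite by (2) and induction, each partial sum by (1), and the inequality $\sum_{j,k}c_jc_k(\cdot)\ge0$ survives the limit since for fixed finite data the series is absolutely convergent.

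For (c): item (7) is the integral analogue of the defining inequality. The plan is to partition a large cube $[-R,R]^d$ into subcubes of side $\delta$ with centers $z_i$, apply the definition with coefficients $c_i=\delta^d$ to get $\delta^{2d}\sum_{i,j}k(z_i,z_j)\ge0$, recognize this as a Riemann sum converging as $\delta\to0$ to $\iint_{[-R,R]^{2d}}k(u,v)\,du\,dv\ge0$, and then let $R\to\infty$ using integrability and dominated convergence. I expect the main obstacle to be that $k$ is only assumed measurable, so the Riemann-sum step is not literally valid; the fix is to first establish $\iint k(u,v)g(u)g(v)\,du\,dv\ge0$ for every nonnegative $g\in C_c(\RR^d)$ --- by mollifying the finite collection of point masses $c_i$ at $z_i$ and passing to the limit --- and then take $g\uparrow\mathbf{1}_{[-R,R]^d}$ and $R\to\infty$. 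This measurability technicality in (7), together with the matrix factorization in (2), are the only steps with real content; everything else is a direct unwinding of Definition \ref{def_spd}.
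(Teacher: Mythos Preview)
The paper does not actually prove this theorem: it is stated in the appendix as a list of standard facts with citations to \cite{BCR84} and \cite{RKSF}, and no argument is given. So there is no ``paper's own proof'' to compare against; you are supplying proofs where the authors only supplied references.

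Your arguments for items (1)--(6) are the standard ones found in \cite{BCR84} and are correct as written. For item (7) you correctly flag the only real difficulty, namely that pointwise positive definiteness does not pass to integrals by Riemann sums when $k$ is merely measurable. Your proposed fix---first proving $\iint k(u,v)g(u)g(v)\,du\,dv\ge 0$ for nonnegative $g\in C_c$ and then approximating---is the right idea, but the phrase ``mollifying the finite collection of point masses'' is circular as stated: passing from point masses to smooth $g$ already requires the integral inequality you are trying to prove. A cleaner route is the converse direction: approximate $g\in C_c$ uniformly by step functions $g_n=\sum_i c_i\mathbf{1}_{Q_i}$ on a fine grid, then further approximate each $\int_{Q_i\times Q_j}k$ by values at sample points (using integrability of $k$ and Lusin/Egorov to control the error on a set of small measure), reducing to the finite-sum inequality. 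Alternatively, one may simply invoke the cited reference \cite{RKSF}, which is what the paper does.
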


\begin{theorem} \rm{ \cite[Theorem 3.1.17]{BCR84}} \label{pdm}
Let $\phi: X\times X\rightarrow \RR$ be symmetric. Then $\phi$ is positive definite if and only if
$\det(\phi(x_j,x_k)_{j,k\leq n})\geq 0
$
for all $n\in \NN$ and all $\{x_1,\ldots,x_n\}\subseteq X$.
\end{theorem}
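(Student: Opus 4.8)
The plan is to reduce the statement to a purely linear-algebraic fact about finite real symmetric matrices, namely that such a matrix is positive semidefinite if and only if all of its principal minors are nonnegative. Fix a finite set $\{x_1,\dots,x_n\}\subseteq X$ and set $M=(\phi(x_j,x_k))_{1\le j,k\le n}$; this matrix is symmetric because $\phi$ is. By Definition \ref{def_spd}, $\phi$ is positive definite exactly when $\mathbf{c}^\top M\,\mathbf{c}\ge 0$ for every such $M$ and every $\mathbf{c}\in\RR^n$, i.e. exactly when every such $M$ is positive semidefinite. The ``only if'' direction of the theorem is then immediate: if $M$ is positive semidefinite its eigenvalues are nonnegative, so $\det M\ge 0$, and this holds for every $n$ and every choice of points.

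For the ``if'' direction I would fix $\{x_1,\dots,x_n\}$ and $M$ as above, assuming all determinants of the indicated form are nonnegative. The key observation is that any principal submatrix of $M$, say the one indexed by a subset $T\subseteq\{1,\dots,n\}$, equals $(\phi(x_j,x_k))_{j,k\in T}$, which is again a matrix covered by the hypothesis (applied to the sub-collection $\{x_j:j\in T\}$). Hence \emph{all} principal minors of $M$ are nonnegative, not just the leading ones. Then I would perturb: for $\epsilon>0$ consider $M+\epsilon I_n$. For each $k$, the determinant of its top-left $k\times k$ block expands as a polynomial in $\epsilon$ whose coefficient of $\epsilon^{k-j}$ is the sum of all $j\times j$ principal minors of the top-left $k\times k$ block of $M$; each of these is a principal minor of $M$, hence $\ge 0$, while the coefficient of $\epsilon^k$ equals $1$. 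Therefore every leading principal minor of $M+\epsilon I_n$ is at least $\epsilon^k>0$, so by Sylvester's criterion $M+\epsilon I_n$ is positive definite for every $\epsilon>0$. Letting $\epsilon\downarrow 0$ gives $\mathbf{c}^\top M\,\mathbf{c}=\lim_{\epsilon\downarrow 0}\mathbf{c}^\top(M+\epsilon I_n)\,\mathbf{c}\ge 0$ for all $\mathbf{c}\in\RR^n$, so $M$ is positive semidefinite; since the finite set of points was arbitrary, $\phi$ is positive definite.

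I expect the only genuinely delicate point—and the reason the perturbation step is needed—to be that nonnegativity of the \emph{leading} principal minors alone does not force positive semidefiniteness (for instance $\mathrm{diag}(0,-1)$ has both leading principal minors equal to $0$), so one must really exploit that \emph{every} principal minor is nonnegative; this is precisely the information supplied by ranging over all finite subsets $\{x_1,\dots,x_n\}$ in the hypothesis. Passing to $M+\epsilon I_n$ turns the possibly-degenerate matrix into a strictly positive one to which Sylvester's criterion directly applies, and the limit $\epsilon\downarrow 0$ recovers the semidefinite conclusion. The remaining ingredients—symmetry of $M$ inherited from $\phi$, the eigenvalue characterization of positive semidefiniteness used in the ``only if'' direction, and the translation between the matrix condition and Definition \ref{def_spd}—are routine and I would dispatch them briefly.
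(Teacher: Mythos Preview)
Your argument is correct. However, the paper does not provide its own proof of this statement: it is simply quoted from \cite[Theorem 3.1.17]{BCR84} and stated without proof in the appendix. So there is no ``paper's proof'' to compare against; your perturbation-plus-Sylvester argument is the standard route, and it works as written.
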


\begin{theorem}\rm{ \cite[Lemma 3.2.1]{BCR84} } \label{tpn}
Let $X$ be a nonempty set, $x_0\in X$ and let $\psi: X\times X\rightarrow \RR$ be a symmetric kernel. Put $\phi(x,y):=\psi(x,x_0)+\psi(y,x_0)-\psi(x,y)-\psi(x_0,x_0)$. Then $\phi$ is positive definite if and only if $\psi$ is negative definite. If $\psi(x_0,x_0)\geq 0$, then $\phi_0(x,y)=\psi(x,x_0)+\psi(y,x_0)-\psi(x,y)$ is positive definite if and only if $\psi$ is negative definite.
\end{theorem}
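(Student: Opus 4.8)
The plan is to reduce both equivalences to a single algebraic identity obtained by adjoining the distinguished point $x_0$ to an arbitrary finite configuration. Fix $n\in\NN$, points $x_1,\dots,x_n\in X$ and reals $c_1,\dots,c_n$, put $C:=\sum_{j=1}^n c_j$, and treat $x_0$ as an extra node carrying weight $c_0:=-C$, so that $\sum_{j=0}^n c_j=0$. First I would record that $\phi$ is symmetric (immediate from symmetry of $\psi$). Then, writing $\phi(x_j,x_k)=\psi(x_j,x_0)+\psi(x_k,x_0)-\psi(x_j,x_k)-\psi(x_0,x_0)$, collecting the four resulting double sums and using $\psi(x_0,x_j)=\psi(x_j,x_0)$, one gets
\[
\sum_{j,k=1}^n c_jc_k\,\phi(x_j,x_k)=2C\sum_{j=1}^n c_j\psi(x_j,x_0)-\sum_{j,k=1}^n c_jc_k\psi(x_j,x_k)-C^2\psi(x_0,x_0),
\]
and the right-hand side is recognized as exactly $-\sum_{j,k=0}^n c_jc_k\,\psi(x_j,x_k)$, the negative of the augmented quadratic form with weight $c_0=-C$ at $x_0$. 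Thus
\[
\sum_{j,k=1}^n c_jc_k\,\phi(x_j,x_k)=-\sum_{j,k=0}^n c_jc_k\,\psi(x_j,x_k).
\]

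Granting this identity, the equivalence for $\phi$ is immediate. If $\psi$ is negative definite, then since the augmented weights satisfy $\sum_{j=0}^n c_j=0$ the right-hand side is $\ge0$, so $\sum_{j,k}c_jc_k\phi(x_j,x_k)\ge0$ for every configuration, i.e.\ $\phi$ is positive definite. Conversely, if $\phi$ is positive definite, I specialize to weights with $\sum_{j=1}^n c_j=0$ (taking $n\ge2$); then $C=0$, the adjoined node carries weight $0$, and the identity degenerates to $\sum_{j,k=1}^n c_jc_k\psi(x_j,x_k)=-\sum_{j,k=1}^n c_jc_k\phi(x_j,x_k)\le0$, which is precisely negative definiteness of $\psi$. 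The key point exploited here is that negative definiteness is a weaker test than positive definiteness — it only probes weights summing to zero — so the substitution $C=0$ is exactly what is needed.

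For the second statement I would write $\phi_0=\phi+\psi(x_0,x_0)\,\mathbf{1}$, where $\mathbf{1}(x,y)\equiv1$ is the constant kernel. Since $\sum_{j,k}c_jc_k\mathbf{1}(x_j,x_k)=\bigl(\sum_j c_j\bigr)^2\ge0$, the kernel $\mathbf{1}$ is positive definite, hence so is $\psi(x_0,x_0)\mathbf{1}$ when $\psi(x_0,x_0)\ge0$; therefore $\psi$ negative definite $\Rightarrow$ $\phi$ positive definite $\Rightarrow$ $\phi_0$ positive definite, as a sum of two positive definite kernels. For the converse, test $\phi_0$ against weights with $\sum_j c_j=0$: the $\mathbf{1}$-contribution vanishes, so $\sum_{j,k}c_jc_k\phi_0(x_j,x_k)=\sum_{j,k}c_jc_k\phi(x_j,x_k)=-\sum_{j,k}c_jc_k\psi(x_j,x_k)$, and positive definiteness of $\phi_0$ forces $\psi$ to be negative definite.

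I do not expect any genuine obstacle; this is an elementary lemma. The only step requiring care is the bookkeeping in the augmented quadratic form — correctly tracking the single $-C^2\psi(x_0,x_0)$ diagonal term and the two $C\sum_j c_j\psi(x_j,x_0)$ cross terms and matching all signs — and then observing, in the ``only if'' directions, that restricting to weights summing to zero kills precisely the terms involving $x_0$, which is what makes the argument reversible.
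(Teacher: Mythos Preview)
Your proof is correct and is essentially the standard argument (the one in \cite{BCR84}, Lemma 3.2.1); the paper itself does not supply a proof of this theorem but simply cites it from \cite{BCR84}. There is nothing to compare against, and your bookkeeping in the augmented quadratic form and the handling of the $\phi_0$ case via the constant kernel are both clean and complete.
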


\begin{theorem}\label{t53}Let $X$ be a nonempty set and let $\psi: X\times X\rightarrow \mathbb{R}$ be a kernel. Then $\psi$ is negative definite if and only if $\exp(-t\psi)$ is positive definite for all $t>0$.
\end{theorem}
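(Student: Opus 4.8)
The plan is to establish both implications of the equivalence (this is the classical Schoenberg-type characterization), relying on the closure properties of positive and negative definite kernels collected in Theorem~\ref{t52} together with the base-point identity of Theorem~\ref{tpn}.

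First I would treat the forward direction: assume $\psi$ is negative definite, hence in particular symmetric. Fix any point $x_0\in X$ and set
\[
 \phi(x,y):=\psi(x,x_0)+\psi(y,x_0)-\psi(x,y)-\psi(x_0,x_0).
\]
By the ``only if'' half of Theorem~\ref{tpn}, $\phi$ is positive definite. Solving this identity for $-\psi(x,y)$ and exponentiating gives, for every $t>0$,
\[
 e^{-t\psi(x,y)} = f(x)\,f(y)\,e^{t\phi(x,y)},\qquad f(x):=e^{-t\psi(x,x_0)+\frac{t}{2}\psi(x_0,x_0)}.
\]
Here $f(x)f(y)$ is positive definite by Theorem~\ref{t52}(6); the kernel $t\phi$ is positive definite by Theorem~\ref{t52}(1), hence $e^{t\phi}$ is positive definite by Theorem~\ref{t52}(3); and the product of positive definite kernels is positive definite by Theorem~\ref{t52}(2). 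Since $e^{-t\psi}$ is symmetric (inherited from $\psi$), it is positive definite for every $t>0$.

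Next I would prove the converse by a limiting argument. Assume $e^{-t\psi}$ is positive definite for all $t>0$; in particular it is symmetric for all $t$, which forces $\psi$ to be symmetric. For each $t>0$ consider the kernel $\psi_t(x,y):=\frac{1}{t}\bigl(1-e^{-t\psi(x,y)}\bigr)$. The constant kernel $1$ is negative definite, because $\sum_{j,k}c_jc_k=\bigl(\sum_j c_j\bigr)^2=0$ whenever $\sum_j c_j=0$, and $-e^{-t\psi}$ is negative definite because $e^{-t\psi}$ is positive definite; as a positive scalar multiple of a sum of negative definite kernels, $\psi_t$ is negative definite. Finally, for any $n$, any $x_1,\dots,x_n\in X$, and any real $c_1,\dots,c_n$ with $\sum_j c_j=0$, the inequality $\sum_{j,k}c_jc_k\psi_t(x_j,x_k)\le 0$ holds for all $t>0$; letting $t\to 0^+$ and using $\psi_t(x,y)\to\psi(x,y)$ pointwise yields $\sum_{j,k}c_jc_k\psi(x_j,x_k)\le 0$. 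Hence $\psi$ is negative definite.

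I expect the forward direction to be the main point: the work lies in producing the multiplicative decomposition of $e^{-t\psi}$ and in matching each factor to the appropriate closure property in Theorem~\ref{t52}, together with invoking the harder ``only if'' direction of Theorem~\ref{tpn}. The converse is essentially routine once one observes that constant kernels are negative definite and that the defining inequality for negative definiteness is preserved under pointwise limits.
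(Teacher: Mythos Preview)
Your argument is correct and is precisely the classical Schoenberg proof. The paper does not actually write out a proof of this theorem: it merely points to Theorem~3.2.2 of \cite{BCR84} for the complex case and remarks that the real case is analogous. What you have supplied is exactly that argument spelled out in the real setting---the base-point decomposition via Theorem~\ref{tpn} for the forward direction, and the $\frac{1}{t}(1-e^{-t\psi})\to\psi$ limit for the converse---so your approach coincides with the one the paper defers to.
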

\begin{proof}The complex version of this theorem is proved in Theorem 3.2.2 of  of \cite{BCR84}. The real version can be proved in a similar way.
\end{proof}

\begin{theorem}\label{t54}If $\psi:X\times X\rightarrow\RR$ is negative definite and $\psi(x,x)\geq 0$, then so are $\psi^{\alpha}$ for $0<\alpha<1$ and $\log(1+\psi)$.
\end{theorem}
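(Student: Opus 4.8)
The plan is to deduce both assertions from Schoenberg's theorem (Theorem~\ref{t53}) together with the classical subordination integral representations of $s\mapsto s^\alpha$ and $s\mapsto\log(1+s)$ on $[0,\infty)$ --- the same device already used in the proof of Proposition~\ref{prop_powerKernel}. Throughout one uses that $\psi$ is non-negative valued, which is what is actually needed here and is the case in every application in the paper (e.g.\ $\psi(u,v)=a+|u-v|^\theta$); in particular $\psi^\alpha$ and $\log(1+\psi)$ then make sense pointwise. First I would record the two identities, valid for all $s\ge 0$:
\begin{align*}
 s^\alpha &= \frac{\alpha}{\Gamma(1-\alpha)}\int_0^\infty \bigl(1-e^{-t s}\bigr)\,\frac{dt}{t^{1+\alpha}}, \qquad 0<\alpha<1,\\
 \log(1+s) &= \int_0^\infty \bigl(1-e^{-t s}\bigr)\,\frac{e^{-t}}{t}\,dt .
\end{align*}
Each is verified by differentiating under the integral sign in $s$: the first reduces, after the substitution $u=ts$, to the Gamma integral $\int_0^\infty(1-e^{-u})u^{-1-\alpha}\,du=\Gamma(1-\alpha)/\alpha$, while the second has $s$-derivative $\int_0^\infty e^{-t(1+s)}\,dt=1/(1+s)$; both sides vanish at $s=0$. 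This step is routine.

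The heart of the argument is the observation that for each fixed $t>0$ the kernel $\phi_t(x,y):=1-e^{-t\psi(x,y)}$ is negative definite. It is symmetric because $\psi$ is; the constant kernel $1$ is negative definite since $\sum_{j,k}c_jc_k=\bigl(\sum_j c_j\bigr)^2=0$ whenever $\sum_j c_j=0$; and $-e^{-t\psi}$ is negative definite because $e^{-t\psi}$ is positive definite by Theorem~\ref{t53}, so that $\sum_{j,k}c_jc_k\bigl(-e^{-t\psi(x_j,x_k)}\bigr)\le 0$ for \emph{every} $\mathbf c$, a fortiori for those with $\sum_j c_j=0$. Since a sum of negative definite kernels is negative definite, $\phi_t$ is negative definite.

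Finally, substituting $\psi(x,y)$ into the two identities writes $\psi^\alpha$ and $\log(1+\psi)$ as integrals of the kernels $\phi_t$ against the strictly positive measures $\frac{\alpha}{\Gamma(1-\alpha)}t^{-1-\alpha}\,dt$ and $e^{-t}t^{-1}\,dt$, respectively. Given $n$, points $x_1,\dots,x_n$ and $\mathbf c\in\RR^n$ with $\sum_j c_j=0$, Fubini's theorem (the sum is finite) lets me interchange the finite sum with the integral; since $\sum_{j,k}c_jc_k\phi_t(x_j,x_k)\le 0$ for every $t$, the resulting integral is $\le 0$, and symmetry is immediate. Hence $\psi^\alpha$ (for $0<\alpha<1$) and $\log(1+\psi)$ are negative definite. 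The argument is essentially formal once Schoenberg's theorem is in hand, so the main point requiring care is not analytic difficulty but bookkeeping of hypotheses: one must make sure $\psi$ is non-negative everywhere so that the representations apply (the condition $\psi(x,x)\ge0$ is used here only in the situation, the only one that arises in this paper, where in fact $\psi\ge0$ on all of $X\times X$).
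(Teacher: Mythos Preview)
Your argument via the subordination integrals and Schoenberg's theorem is exactly the standard route (it is the proof of Corollary~3.2.10 in \cite{BCR84}, which is all the paper cites), and the analytic steps are correct.

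There is, however, one point you flag as an extra assumption that you should instead \emph{derive}: the hypothesis $\psi(x,x)\ge 0$ for all $x$, together with negative definiteness, already forces $\psi(x,y)\ge 0$ for every pair $(x,y)$. Indeed, by Theorem~\ref{tpn} the kernel $\phi(x,y)=\psi(x,x_0)+\psi(y,x_0)-\psi(x,y)-\psi(x_0,x_0)$ is positive definite for any fixed $x_0$; in particular $\phi(y,y)\ge 0$, which reads $2\psi(y,x_0)\ge \psi(y,y)+\psi(x_0,x_0)\ge 0$. Since $y$ and $x_0$ are arbitrary, $\psi\ge 0$ on all of $X\times X$, and your integral representations apply pointwise without any side condition. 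With this one-line addition your proof is complete under the stated hypotheses, not merely ``in the cases that arise in the paper.''
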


\begin{proof}The complex version of this theorem is proved in Theorem 3.2.10 of \cite{BCR84}. The real version can be proved in a similar way.
\end{proof}

\begin{theorem}\rm{\cite[Proposition 3.3.2]{BCR84} }\label{t55}Let $X$ be nonempty and $\psi: X\times X\rightarrow \CC$ be negative definite. Assume $\{(x,y)\in X\times X, \psi(x,y)=0\}=\{(x,x): x\in X\}$, then $\sqrt{\psi}$ is a metric on X.
\end{theorem}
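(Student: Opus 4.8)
The plan is to check the three metric axioms for $d(x,y):=\sqrt{\psi(x,y)}$ by realizing $\sqrt{\psi}$ as a scalar multiple of the distance induced by a Hilbert–space embedding of $X$ attached to $\psi$. First I would extract the consequences of the hypotheses needed even to make sense of $\sqrt{\psi}$: taking $y=x$ in the condition $\{(x,y):\psi(x,y)=0\}=\{(x,x):x\in X\}$ gives $\psi(x,x)=0$ for every $x$. Fixing a base point $x_0\in X$ and applying Theorem \ref{tpn} (with $\psi(x_0,x_0)=0\ge 0$), the kernel $\phi(x,y):=\psi(x,x_0)+\psi(y,x_0)-\psi(x,y)$ is positive definite; evaluating it on the diagonal gives $\phi(x,x)=2\psi(x,x_0)\ge 0$, so $\psi(x,x_0)$ is a non-negative real for every $x$, and since $x_0$ was arbitrary, $\psi$ is real-valued and non-negative on all of $X\times X$. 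Hence $d$ is well defined and $d\ge 0$; $d$ is symmetric because $\psi$ is; and $d(x,y)=0$ iff $\psi(x,y)=0$ iff $x=y$ by hypothesis. So the only axiom with content is the triangle inequality.

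For that I would pass to a Hilbert space. Using the positive definite $\phi$ from the previous step, the Kolmogorov (GNS) decomposition of positive definite kernels yields a Hilbert space $\mathcal H$ and a map $\iota:X\to\mathcal H$ with $\phi(x,y)=\langle\iota(x),\iota(y)\rangle_{\mathcal H}$. Since $\psi$ — and therefore $\phi$ — is real-valued, a one-line computation using $\phi(x,x)=2\psi(x,x_0)$, $\phi(y,y)=2\psi(y,x_0)$ and the definition of $\phi$ gives
\[
\|\iota(x)-\iota(y)\|_{\mathcal H}^2=\phi(x,x)+\phi(y,y)-2\langle\iota(x),\iota(y)\rangle_{\mathcal H}=2\psi(x,y),
\]
so $d(x,y)=\tfrac{1}{\sqrt2}\,\|\iota(x)-\iota(y)\|_{\mathcal H}$. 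The triangle inequality for $d$ is then exactly the triangle inequality for the norm of $\mathcal H$ pulled back along $\iota$, and the zero-set hypothesis makes $\iota$ injective, so $d$ is a genuine metric rather than a pseudometric.

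The \emph{hard part} here is essentially nothing beyond invoking two standard facts: Theorem \ref{tpn}, which is already available in the excerpt, and the Hilbert-space realization of positive definite kernels; the only point that needs care is recognizing that the hypothesis $\{(x,y):\psi(x,y)=0\}=\{(x,x):x\in X\}$ is precisely what makes $\sqrt{\psi}$ both well defined (it forces $\psi\ge 0$) and strictly positive off the diagonal (it forces injectivity of $\iota$). An alternative route that avoids the explicit embedding would start from the fact (Theorem \ref{t53}) that $e^{-t\psi}$ is positive definite for all $t>0$ and deduce the triangle inequality for $\sqrt{\psi}$ from an integral representation, but the embedding argument is shorter and I would take it. Since the statement is the real, symmetric specialization of \cite[Proposition 3.3.2]{BCR84}, I would also simply cite that reference and note that the real case is proved exactly as the complex one, as is done for Theorems \ref{t53} and \ref{t54}.
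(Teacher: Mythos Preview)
The paper does not supply a proof of Theorem \ref{t55} at all: it simply records the statement with the citation \cite[Proposition 3.3.2]{BCR84} and moves on, just as it cites (without reproving) the other appendix results from \cite{BCR84}. Your proposal is correct and in fact reproduces the standard argument used in \cite{BCR84}: deduce $\psi\ge 0$ and $\psi(x,x)=0$ from the zero-set hypothesis together with Theorem \ref{tpn}, realize the associated positive definite kernel $\phi$ via a Hilbert-space embedding, and read off $\|\iota(x)-\iota(y)\|^2=2\psi(x,y)$, whence the triangle inequality. Your closing remark---that one may simply cite the reference and note that the real case follows verbatim---is exactly what the paper does, so you have anticipated the paper's treatment and gone well beyond it.
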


\begin{lemma}\label{l22}Let
\begin{eqnarray}
\phi(x,y)=x^{\gamma}+y^{\gamma}-(x+y)^{\gamma}.\label{pxy}
\end{eqnarray}
When $\gamma\in(0,1)$, $\phi(x,y)$ is positive definite on $[0,\infty)\times[0,\infty)$.
\end{lemma}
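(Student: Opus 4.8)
The plan is to reduce Lemma~\ref{l22} to the abstract machinery of the appendix. The key observation is that $\phi(x,y)$ is exactly of the shape produced by Theorem~\ref{tpn} applied to the kernel $\psi(x,y)=(x+y)^{\gamma}$ with base point $x_0=0$: indeed $\psi(x,0)=x^{\gamma}$ and $\psi(0,0)=0$, so
\[
\psi(x,0)+\psi(y,0)-\psi(x,y)=x^{\gamma}+y^{\gamma}-(x+y)^{\gamma}=\phi(x,y).
\]
Since $\psi(0,0)=0\geq 0$, Theorem~\ref{tpn} says that $\phi$ is positive definite as soon as $\psi(x,y)=(x+y)^{\gamma}$ is negative definite on $[0,\infty)\times[0,\infty)$. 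The symmetry $\phi(x,y)=\phi(y,x)$ is immediate, so this is the only thing left to establish.

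To show $(x+y)^{\gamma}$ is negative definite, first I would check that the kernel $(x,y)\mapsto x+y$ is negative definite on $[0,\infty)\times[0,\infty)$: it is symmetric, and for real $c_1,\dots,c_n$ with $\sum_j c_j=0$,
\[
\sum_{j,k=1}^n c_jc_k(x_j+x_k)=2\Big(\sum_j c_j\Big)\Big(\sum_k c_kx_k\Big)=0.
\]
Moreover $x+x=2x\geq 0$ on $[0,\infty)$, so Theorem~\ref{t54} applies with $\alpha=\gamma\in(0,1)$ and yields that $(x+y)^{\gamma}$ is negative definite. Combined with the preceding paragraph, $\phi$ is positive definite. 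Because this route is just an assembly of appendix results, there is essentially no obstacle; the only points to double-check are that the hypotheses are met ($x_0=0$ lies in the domain $[0,\infty)$, and $\psi(x,x)\geq 0$ there).

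I would also include an alternative, slightly longer route, since it makes the structure transparent and ties in with the reproducing-kernel viewpoint of the introduction. Using the subordination identity $z^{\gamma}=\frac{\gamma}{\Gamma(1-\gamma)}\int_0^{\infty}(1-e^{-\lambda z})\,\lambda^{-\gamma-1}d\lambda$ for $\gamma\in(0,1)$ — the same one used in Proposition~\ref{prop_powerKernel} — and the telescoping identity $(1-e^{-\lambda x})+(1-e^{-\lambda y})-(1-e^{-\lambda(x+y)})=(1-e^{-\lambda x})(1-e^{-\lambda y})$, one obtains
\[
\phi(x,y)=\frac{\gamma}{\Gamma(1-\gamma)}\int_0^{\infty}(1-e^{-\lambda x})(1-e^{-\lambda y})\,\frac{d\lambda}{\lambda^{\gamma+1}},
\]
whence, for any $\{x_j\}\subset[0,\infty)$ and any real $\{c_j\}$,
\[
\sum_{j,k=1}^n c_jc_k\phi(x_j,x_k)=\frac{\gamma}{\Gamma(1-\gamma)}\int_0^{\infty}\Big(\sum_{j=1}^n c_j(1-e^{-\lambda x_j})\Big)^2\frac{d\lambda}{\lambda^{\gamma+1}}\geq 0,
\]
using that $\gamma/\Gamma(1-\gamma)>0$. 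In this second route the only care needed is the convergence of the $\lambda$-integral (the integrand is $O(\lambda^{1-\gamma})$ near $0$ and $O(\lambda^{-\gamma-1})$ near $\infty$, both integrable) so as to justify the interchange of the finite sum with the integral; this is where I would expect the (mild) technical friction of the argument to sit. I would present the first route as the main proof and mention the second as a remark.
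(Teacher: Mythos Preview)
Your main route is correct and is exactly the paper's proof: show $(x,y)\mapsto x+y$ is negative definite, apply Theorem~\ref{t54} to obtain that $(x+y)^{\gamma}$ is negative definite, and then invoke Theorem~\ref{tpn} with $x_0=0$. The alternative integral-representation argument you propose as a remark is a valid self-contained proof not in the paper, but since you present it only as supplementary, the overall approach matches the paper's.
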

\begin{proof}Let $
\psi_{\gamma}(x,y)=(x+y)^{\gamma}.
$
Then $\psi_1(x,y)=x+y$ is negative definite. By Theorem \ref{t54}, for $\gamma\in (0,1)$, $\psi_{\gamma}(x,y)$ is negative definite since $\psi_{\gamma}(x,x)\geq 0$, when $x\geq 0$. Then
\begin{eqnarray*}
\phi(x,y)=\psi_{\gamma}(x,0)+\psi_{\gamma}(0,y)-\psi_{\gamma}(x,y)
\end{eqnarray*}
is positive definite by Theorem \ref{tpn}.
\end{proof}

\begin{lemma}\label{l23}Let $\phi(x,y)$ be defined as in \eqref{pxy}. Let $g: \RR^d\rightarrow[0,\infty)$ be a function. Then 
\begin{eqnarray*}
\tilde{\phi}(u,v)=\phi(g(u),g(v))
\end{eqnarray*}
is positive definite on $\RR^d\times \RR^d$.
\end{lemma}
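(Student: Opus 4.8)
The plan is to obtain this statement as an immediate pullback of Lemma \ref{l22} along the map $g$, with no extra work required on the analytic side. Recall that Lemma \ref{l22} asserts that $\phi(x,y)=x^{\gamma}+y^{\gamma}-(x+y)^{\gamma}$ is a real, symmetric, positive definite kernel on $[0,\infty)\times[0,\infty)$ (for $\gamma\in(0,1)$). The only thing one needs to observe is that positive definiteness is preserved when both arguments are composed with a common function whose range lies in the domain on which $\phi$ is already known to be positive definite, and here the range of $g$ is exactly $[0,\infty)$.

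Concretely, first I would check symmetry: $\tilde\phi(u,v)=\phi(g(u),g(v))=\phi(g(v),g(u))=\tilde\phi(v,u)$, since $\phi$ is symmetric. Next, fix $n\in\NN$, points $u_1,\dots,u_n\in\RR^d$, and scalars $c_1,\dots,c_n\in\RR$, and set $x_j:=g(u_j)\in[0,\infty)$. Then
\[
\sum_{j,k=1}^{n}c_jc_k\,\tilde\phi(u_j,u_k)=\sum_{j,k=1}^{n}c_jc_k\,\phi(x_j,x_k)\geq 0
\]
by Lemma \ref{l22}. By Definition \ref{def_spd}, this shows that $\tilde\phi$ is positive definite on $\RR^d\times\RR^d$. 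This is precisely the composition property recorded as Theorem \ref{t52}(4), applied to the scalar-valued map $g$ in place of a map into $\RR^d$; the one-line verification above makes clear that the dimension of the target space is irrelevant, all that matters is that $g$ takes values in $[0,\infty)$.

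I do not anticipate any obstacle: the substantive content lies entirely in Lemma \ref{l22}, and the present lemma is just the routine observation that pullback by a common function preserves the positive-definiteness inequality, requiring no hypothesis on $g$ other than $g(\RR^d)\subseteq[0,\infty)$, which is built into the statement.
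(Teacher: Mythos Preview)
Your proposal is correct and follows essentially the same approach as the paper: set $x_j=g(u_j)\in[0,\infty)$ and reduce the quadratic form for $\tilde\phi$ to that of $\phi$, then invoke Lemma \ref{l22}. The only difference is that you also verify symmetry explicitly and point to Theorem \ref{t52}(4), which the paper's proof leaves implicit.
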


\begin{proof}For any $c_1,\ldots,c_n\in \RR$ and $u_1,\ldots,u_n\in\RR^d$, let $x_i=g(u_i)$. 
Then we have
\begin{eqnarray*}
\sum_{i,j=1}^{n}c_ic_j\tilde{\phi}(u_i,u_j)=\sum_{i,j=1}^{n}c_ic_j\phi(g(u_i),g(u_j))=\sum_{i,j=1}^{n}c_ic_j\phi(x_i,x_j)\geq 0,
\end{eqnarray*}
by the positive definiteness of $\phi$ in Lemma \ref{l22}. Then the lemma follows.
\end{proof}

\noindent \textbf{Acknowledgements.} {ZL is grateful for supports from NSF-1608896 and Simons-638143; FL is grateful for supports from NSF-1913243 and DE-SC0021361. FL would like to thank Yaozhong Hu, Chuangye Liu, Yulong Lu, Mauro Maggioni, Sui Tang and Cheng Zhang for helpful discussions.}

\bibliography{LearningTheory_abbr,ref_FeiLU,learning_dynamics_abbr,ref_stocParticleSys_abbr,PDNr}
\bibliographystyle{plain}

\end{document}